%% file: 0_main_arxiv.tex
\documentclass[11pt]{article}
\usepackage[utf8]{inputenc} 
\usepackage[T1]{fontenc}    

\input{tex/command.tex}

\usepackage[numbers]{natbib}
\usepackage{geometry}
\usepackage{xr}

\title{A Finite-Sample Analysis of Quantile Temporal-Difference Learning}
\author{
Zijie Cheng\thanks{School of Mathematical Sciences, Peking University; email:
\texttt{xmwsbsj@stu.pku.edu.cn}.}
\and Xiang Li\thanks{Perelman School of Medicine, University of Pennsylvania;
email: \texttt{lx10077@upenn.edu}.}
\and Yang Peng\thanks{Yau Mathematical Sciences Center, Tsinghua University;
email: \texttt{yang-peng@mail.tsinghua.edu.cn}.}
\and Zhihua Zhang\thanks{School of Mathematical Sciences, Peking University;
email: \texttt{zhzhang@math.pku.edu.cn}.}}
\date{}

\begin{document}
\maketitle

\begin{abstract}
\input{0_abstract}
\end{abstract}

\noindent\textbf{Keywords:} distributional reinforcement learning; quantile
temporal-difference learning; finite-sample convergence; stochastic
approximation.

\input{1_introduction}
\input{2_problem_setup}
\input{3_main_results}
\input{4_proof_sketch}
\input{5_simulations}
\input{6_conclusion}

\bibliography{ref}
\bibliographystyle{abbrvnat}

\clearpage
\appendix
\input{tex/S_supplementary_related_work}

\section{Additional Preliminaries}
\label{Appendix_additional_preliminaries}
\input{tex/C_additional_preliminaries}
\section{Proofs for Sections~\ref{Section:analysis} and~\ref{Section:proof_sketch}}
\label{Appendix_omitted_proofs}
\input{tex/A_omitted_proofs}

\section{Technical Lemmas}
\label{Appendix_technical_lemmas}
\input{tex/B_technical_lemmas}

\input{tex/D_additional_simulations}

\end{document}

%% file: tex/command.tex
\usepackage[T1]{fontenc}
\usepackage[utf8]{inputenc}
\usepackage{amsmath,amssymb,amsfonts,amsthm,bm,mathtools,mathrsfs}
\usepackage{graphicx}
\usepackage{float}
\usepackage{booktabs}
\usepackage{enumitem}
\usepackage{microtype}
\usepackage{xcolor}
\definecolor{darkblue}{rgb}{0,0,.5}
\usepackage{url}
\usepackage[colorlinks=true,allcolors=darkblue]{hyperref}
\usepackage[capitalize,noabbrev]{cleveref}

\allowdisplaybreaks

\newtheorem{theorem}{Theorem}[section]
\newtheorem{lemma}[theorem]{Lemma}
\newtheorem{proposition}[theorem]{Proposition}
\newtheorem{corollary}[theorem]{Corollary}
\newtheorem{assumption}{Assumption}

\newcommand{\ind}{\mathbf{1}}
\newcommand{\var}{\mathsf{Var}}

\newcommand{\brc}[1]{\left\{{#1}\right\}}
\newcommand{\prn}[1]{\left({#1}\right)}

\newcommand{\norm}[1]{\left\|{#1}\right\|}
\newcommand{\abs}[1]{\left|{#1}\right|}
\newcommand{\rd}{\mathrm{d}}

\def\gA{{\mathcal{A}}}
\def\gE{{\mathcal{E}}}
\def\gF{{\mathcal{F}}}
\def\gM{{\mathcal{M}}}
\def\gP{{\mathcal{P}}}
\def\gS{{\mathcal{S}}}
\def\gT{{\mathcal{T}}}
\def\gZ{{\mathcal{Z}}}
\def\sP{{\mathscr{P}}}

\def\be{{\bm{e}}}
\def\bB{{\bm{B}}}
\def\bD{{\bm{D}}}
\def\bR{{\bm{R}}}
\def\bh{{\bm{h}}}

\def\bw{{\bm{w}}}
\def\bI{{\bm{I}}}
\def\bG{{\bm{G}}}

\def\bPi{{\bm{\Pi}}}
\def\bPhi{{\bm{\Phi}}}
\def\btheta{{\bm{\theta}}}
\def\bu{{\bm{u}}}
\def\bv{{\bm{v}}}
\def\bxi{{\bm{\xi}}}

\def\RB{{\mathbb R}}
\def\EB{{\mathbb E}}
\def\PB{{\mathbb P}}
\def\NB{{\mathbb N}}
\def\diag{{\operatorname{diag}}}
\newcommand{\cov}{\mathsf{Cov}}

%% file: 0_abstract.tex

Quantile temporal-difference learning (QTD) is an effective method for learning return distributions through quantile approximation, yet its finite-time behavior remains poorly understood. Its update is nonlinear and nonsmooth, and the stability needed for a sharp convergence rate holds only near the target. We establish a global high-probability last-iterate guarantee for synchronous tabular QTD under general positive, nonincreasing step-size sequences and arbitrary initialization in the natural parameter range. For polynomially decaying step sizes with exponent $a\in(0,1)$, the last iterate converges to the target at rate $T^{-a/2}$ in the infinity norm, up to logarithmic and lower-order terms. A suitably tuned harmonic schedule recovers the $T^{-1/2}$ statistical rate up to logarithmic factors. 
For the $m$-quantile representation, its $\infty$-Wasserstein error scales as $\sqrt{m/T}$ up to logarithmic factors, matching the leading polynomial dependence on the quantile resolution and sample size of the corresponding model-based estimator.
The proof uses a two-stage global-to-local argument. From arbitrary initialization, Bellman contraction and CDF monotonicity first bring the iterate close to the target, after which, a novel variance--drift matching argument sharpens the control of accumulated noise and local contraction reduces the remaining errors, yielding the sharp rate.
Simulations verify the predicted polynomial decay and assess the finite-time entrance bound.

%% file: 1_introduction.tex
\section{Introduction}

Distributional reinforcement learning (DRL) learns the distribution of future outcomes rather than only their expectation. 
By retaining the full distribution of possible returns, it provides a richer picture of the uncertainty 
induced by the environment and the policy.
Methods built on this idea have achieved strong performance in challenging games and control tasks, and have also been applied to real-world problems such as robotic manipulation and autonomous navigation
\citep{bellemare2017distributional,dabney2018distributional,dabney2018implicit,barthmaron2018distributed,bodnar2019quantile,bellemare2020autonomous}.

Many distributional methods build on temporal-difference (TD) learning, which uses short transitions to learn long-term returns without waiting for an episode to end \citep{sutton2018reinforcement}. Classical TD learning predicts the expected return, whereas distributional TD learning uses the same recursive principle to estimate its full distribution \citep{bellemare2017distributional}. 
One prominent approach is quantile TD learning (QTD), which represents the return distribution through a fixed collection of quantiles, with each quantile corresponding to a value below which a specified proportion of returns falls,
and learns their locations through quantile regression
\citep{dabney2018distributional,rowland2023analysis}. QTD thus reduces distributional learning to a collection of simple but interacting quantile updates.

Despite its empirical success, the finite-time behavior of QTD remains poorly understood. 
For nonparametric and categorical distributional TD learning, finite-sample analyses have established a minimax-optimal rate in the $1$-Wasserstein distance, showing that estimating a return distribution need not sacrifice statistical efficiency \citep{peng2024statistical}.
For QTD, almost-sure asymptotic convergence establishes that the quantile iterates eventually approach their target
\citep{rowland2023analysis}. These results, however, do not describe how QTD behaves over a finite run. This leaves a basic question: \emph{how quickly does QTD converge to its target?}

In this work, we develop global, high-probability finite-time convergence guarantees for the last-iterate estimator of synchronous tabular QTD under a generative model. 
Our results
cover a general class of positive, nonincreasing step-size sequences under explicit conditions, and the resulting bounds quantify both the transient effect of initialization and the stochastic error induced by the step-size schedule.
In particular, they yield guarantees for three common schedules: (i) a constant step size $\alpha_t=\eta$, (ii) a polynomially decaying step size $\alpha_t=c(t+1)^{-a}$ with $0<a<1$, and (iii) a harmonic step size $\alpha_t=c/(t+t_0)$. 
We show that, up to logarithmic factors, a constant step size forgets the initialization geometrically until reaching a noise floor of order $\sqrt{\eta}$ ; polynomial decay yields an error of order $T^{-a/2}$; and a suitably tuned harmonic schedule recovers the canonical $T^{-1/2}$ statistical rate.
Moreover, for an $m$-quantile representation, the harmonic schedule yields a $W_\infty$ error scaling as $\sqrt{m/T}$ up to logarithmic factors, matching the model-based estimator of \citet{cheng2026statistical} in its leading polynomial dependence on quantile resolution $m$ and sample size.

The main technical difficulty is that sharp control of the QTD recursion is available only near the target, while the iterate may start far away. A purely local analysis therefore cannot explain why the iterate reaches this region and remains there. We address this through a two-stage argument: a global analysis first drives the iterate into a suitable local neighborhood, after which a refined concentration argument controls its stochastic fluctuations and prevents exit with high probability.
Within the local regime, a key variance--drift matching property leads to a sharper rate than worst-case bounded-noise arguments. 
Intuitively, an extreme quantile receives a weaker pull toward its target, but its quantile score also fluctuates less. Our analysis keeps these two effects together. 

Our contributions are as follows.
\begin{enumerate}[leftmargin=*,label=(\roman*)]
\item We establish a global finite-sample, high-probability last-iterate
guarantee for synchronous tabular QTD.
A unified result for positive, nonincreasing step sizes
yields explicit guarantees for constant, polynomial, and harmonic schedules.
\item We develop a two-stage analysis that connects global convergence with
sharp local rate analysis: the first stage guarantees entrance into a local
region, while the second controls stochastic fluctuations and possible exit from that region.
\item We uncover a variance--drift matching property of the local QTD
dynamics. At an extreme quantile, a weak restoring drift is accompanied by
smaller quantile noise, yielding sharper stochastic fluctuation control than
worst-case bounded-noise analysis.
\end{enumerate}

\paragraph{Related work.}
Foundational work on DRL developed distributional Bellman operators and
categorical and quantile approximations
\citep{bellemare2017distributional,rowland2018analysis,dabney2018distributional}.
For quantile representations, \citet{rowland2023analysis} established asymptotic convergence of QTD, while \citet{cheng2026online} established asymptotic normality and studied statistical inference for QTD.
\citet{rowland2023statistical} studied the statistical benefits of quantile approximations for value estimation. Finite-sample analyses are more developed for categorical and nonparametric distributional methods \citep{peng2024statistical,peng2025finite}. QTD resembles nonsmooth quantile regression, but its evolving Bellman target prevents a direct application of fixed-objective nonsmooth SGD analyses \citep{shamir2013stochastic,harvey2019tight}. 
Supplementary background and related work are provided in Appendix~\ref{Appendix_related_work}.

\paragraph{Paper organization.}
Section~\ref{Section:preliminary} defines the model, the QTD target, and the
algorithm. Section~\ref{Section:analysis} states the general finite-sample
result and its consequences for the three step-size schedules.
Section~\ref{Section:proof_sketch} explains the two-stage proof, and
Section~\ref{Section:simulations} provides numerical verification. Supplemental background and related work, complete
proofs, and additional experiments are all deferred to the appendix.

%% file: 2_problem_setup.tex
\section{Problem Setup}
\label{Section:preliminary}

\subsection{Distributional policy evaluation}

Consider a discounted Markov decision process
$\gM=\langle\gS,\gA,\gP_R,P,\gamma\rangle$.  
Here $\gS$ is a finite state space, $\gA$ is a finite action space, $s,s'\in\gS$ denote states, and
$a\in\gA$ denotes an action.  Given $(s,a)$,
$\gP_R(\cdot\mid s,a)$ is the reward distribution, supported on $[0,1]$, and
$P(\cdot\mid s,a)$ is the distribution of the next state.  
$\gamma\in(0,1)$ is the discount factor.
Fix a policy $\pi$, where $\pi(a\mid s)$ is the probability of choosing action
$a$ in state $s$. At time $t$, the policy draws
$A_t\sim\pi(\cdot\mid S_t)$, the environment draws
$R_t\sim\gP_R(\cdot\mid S_t,A_t)$ and
$S_{t+1}\sim P(\cdot\mid S_t,A_t)$. Starting from $S_0=s$, the
return is $G^\pi(s)=\sum_{t=0}^\infty\gamma^tR_t$.  We denote its distribution
by $\eta^\pi(s)=\operatorname{Law}(G^\pi(s))$ and collect these statewise
distributions as $\bm\eta^\pi=(\eta^\pi(s)\colon s\in\gS)$.  Because rewards lie in
$[0,1]$, the return lies in $[0,(1-\gamma)^{-1}]$.

For a state-indexed collection of probability measures
$\bm\eta=(\eta(s)\colon s\in\gS)$, let $b_{r,\gamma}(x)=r+\gamma x$.  We use
$(b_{r,\gamma})_\#\nu$ for the distribution of $r+\gamma X$ when $X\sim\nu$.
The distributional Bellman operator is
\begin{equation}
(\gT^\pi\bm\eta)(s)
=\sum\nolimits_{a\in\gA,,s'\in\gS}
\pi(a\mid s)P(s'\mid s,a)
\int_{\mathbb R}(b_{r,\gamma})_\#\eta(s'),\gP_R(\rd r\mid s,a).
  \label{eq:distributional_bellman}
\end{equation}
Its unique fixed point is the collection of return distributions
$\bm\eta^\pi$. 

We next define the finite quantile representation used by QTD.  
For a probability measure $\nu$ on
$\RB$, let $F_\nu(x)=\nu((-\infty,x])$ be its cumulative distribution function
(CDF) and let $F_\nu^{-1}(\tau)=\inf\{x\colon F_\nu(x)\ge\tau\}$ be its quantile
function.  At the mid-quantile levels $\tau_i=(2i-1)/(2m)$, $i\in[m]\coloneq \{1,\ldots,m\}$, define
$\Pi_m\nu=m^{-1}\sum_{i=1}^m\delta_{F_\nu^{-1}(\tau_i)}$, where $\delta_x$
denotes a point mass at $x$.  For a state-indexed collection $\bm\eta$, let
$\bPi_m$ apply the projection statewise, so that
$(\bPi_m\bm\eta)(s)=\Pi_m\eta(s)$.

We next introduce the $W_\infty$ metric, under which the distributional Bellman operator is contractive and the quantile projection is nonexpansive.
For two probability measures $\mu$ and $\nu$ define
$W_\infty(\mu,\nu)=\sup_{u\in(0,1)}
\abs{F_\mu^{-1}(u)-F_\nu^{-1}(u)}$.  For state-indexed collections
$\bm\mu=(\mu(s)\colon s\in\gS)$ and $\bm\nu=(\nu(s)\colon s\in\gS)$, write
$\overline W_\infty(\bm\mu,\bm\nu)=
\max_{s\in\gS}W_\infty(\mu(s),\nu(s))$.
The metric $W_\infty$ is the largest horizontal separation between two
quantile functions and therefore measures the worst quantile-location error.
The 
additional measure-theoretic details are
collected in Appendix~\ref{Appendix_additional_preliminaries}.

\begin{proposition}[Bellman and projection stability]
\label{prop:bellman_projection_stability}
For all state-indexed collections of probability measures $\bm\eta$ and
$\bm\eta'$, the distributional Bellman operator and the quantile projection
satisfy
\begin{equation*}
  \overline W_\infty(\gT^\pi\bm\eta,\gT^\pi\bm\eta')
  \le\gamma\overline W_\infty(\bm\eta,\bm\eta'),\qquad
  \overline W_\infty(\bPi_m\bm\eta,\bPi_m\bm\eta')
  \le\overline W_\infty(\bm\eta,\bm\eta').
\end{equation*}
\end{proposition}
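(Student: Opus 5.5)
For the Bellman contraction, I will use the coupling characterization of $W_\infty$. For measures on $\RB$, the comonotone (quantile) coupling $(F_\mu^{-1}(U),F_\nu^{-1}(U))$ with $U\sim\mathrm{Unif}(0,1)$ attains the infimum, so $W_\infty(\mu,\nu)=\inf_{\text{couplings}}\operatorname{ess\,sup}|X-Y|$. Fix $\bm\eta,\bm\eta'$ and let $\varepsilon=\overline W_\infty(\bm\eta,\bm\eta')$.

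For each $s'$, take the quantile coupling $(X_{s'},Y_{s'})$ of $\eta(s'),\eta'(s')$. It satisfies $|X_{s'}-Y_{s'}|\le W_\infty(\eta(s'),\eta'(s'))\le\varepsilon$ almost surely. Indeed $|F^{-1}_{\eta(s')}(u)-F^{-1}_{\eta'(s')}(u)|\le\varepsilon$ for every $u\in(0,1)$, directly from the definition as a supremum over $u$.

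Fix $s$. Draw $(A,R,S')$ from $\pi(\cdot\mid s)$, $\gP_R(\cdot\mid s,A)$ and $P(\cdot\mid s,A)$, independently of an auxiliary uniform $U$, and set $X=X_{S'}(U)$, $Y=Y_{S'}(U)$. Then $R+\gamma X\sim(\gT^\pi\bm\eta)(s)$ and $R+\gamma Y\sim(\gT^\pi\bm\eta')(s)$ by~\eqref{eq:distributional_bellman}. Moreover $|(R+\gamma X)-(R+\gamma Y)|=\gamma|X-Y|\le\gamma\varepsilon$ almost surely.

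To conclude, I need the converse direction: a coupling with $|X-Y|\le\delta$ a.s.\ forces $W_\infty\le\delta$. If $X\le Y+\delta$ a.s., then $F_X(x)\ge F_Y(x-\delta)$, so $F_X^{-1}(u)\le F_Y^{-1}(u)+\delta$, and symmetrically. This converse is the one measure-theoretic point that needs care, namely that the sup over quantile levels is controlled by an a.s.\ coupling bound. It is handled by this CDF-shift argument. Taking the max over $s$ gives the first inequality.

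For the projection, fix $s$ and write $\mu=\eta(s)$, $\nu=\eta'(s)$. Both $\Pi_m\mu$ and $\Pi_m\nu$ are uniform on $m$ atoms placed at the sorted values $q_i=F_\mu^{-1}(\tau_i)$ and $q_i'=F_\nu^{-1}(\tau_i)$; these are nondecreasing in $i$ because quantile functions are monotone. The quantile function of $\Pi_m\mu$ at $u\in((i-1)/m,i/m]$ therefore equals $q_i$, and similarly for $\nu$. Hence
\begin{equation*}
W_\infty(\Pi_m\mu,\Pi_m\nu)=\max_i|q_i-q_i'|\le\sup_{u\in(0,1)}|F_\mu^{-1}(u)-F_\nu^{-1}(u)|=W_\infty(\mu,\nu).
\end{equation*}
This holds even when some atoms coincide, since the quantile function of a measure with sorted atoms $q_1\le\dots\le q_m$ of mass $1/m$ each is still $q_i$ on the $i$-th block. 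Maximizing over $s$ gives the second inequality.
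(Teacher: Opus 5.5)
Your proof is correct and follows essentially the same route as the paper, which gives no argument of its own but defers to Bellemare, Dabney and Rowland (Proposition~4.15 and Lemma~5.25). There, the Bellman contraction is proved by coupling successor-state returns independently of the transition, and the projection bound by comparing quantile functions at the levels $\tau_i$. The only extra step on your side is that, because the paper defines $W_\infty$ through quantile functions rather than couplings, you verify both needed directions of the coupling characterization directly (the quantile coupling for one, the CDF-shift argument for the converse), which makes the proof self-contained.
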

Proposition~\ref{prop:bellman_projection_stability} follows from
\citet[Proposition~4.15 and Lemma~5.25]{bdr2022}.  It implies that
$\bPi_m\gT^\pi$ is a $\gamma$-contraction and therefore has a unique fixed
point $\bm\eta_m$.  Thus, it identifies the target of QTD and lets us control
the distance between distributions before the iterates enter 
a neighborhood of $\eta_m$, where the conditional mean update admits a first-order Taylor expansion.
We write
$\eta_m(s)=m^{-1}\sum_{i=1}^m\delta_{\theta_m(s,i)}$, with
$\theta_m(s,1)\le\ldots\le\theta_m(s,m)$, and call the array
$\bm\theta_m=(\theta_m(s,i)\colon s\in\gS,i\in[m])$ the quantile-projected Bellman
fixed point.  More generally, an array
$\bm\theta=(\theta(s,i)\colon s\in\gS,i\in[m])\in\RB^{\gS\times[m]}$ represents the
statewise distributions
$\eta_{\bm\theta}(s)=\frac{1}{m}\sum_{i=1}^m\delta_{\theta(s,i)}$; its entries need not be ordered.
In particular, $\bm\eta_m=\bm\eta_{\bm\theta_m}$.
The approximation error $\overline W_\infty(\bm\eta_m, \bm\eta^\pi)$ vanishes as $m\to\infty$ under our assumptions
\citep[Theorem~3.2]{cheng2026statistical}.
And we measure the statistical error by
$\|\bm\theta-\bm\theta_m\|_\infty$, which bounds the corresponding $W_\infty$ error:
$\overline W_\infty(\bm\eta_{\bm\theta},\bm\eta_m)
\le\|\bm\theta-\bm\theta_m\|_\infty$.
In the following, all the results for $\|\bm\theta-\bm\theta_m\|_\infty$ hold naturally for $\overline W_\infty(\bm\eta_{\bm\theta},\bm\eta_m)$.


\subsection{Synchronous QTD}

At iteration $t+1$, for each state $s\in\gS$, the generative model draws
independently across states
\begin{equation*}
  a^{(t+1,s)}\sim\pi(\cdot\mid s),\qquad
  r^{(t+1,s)}\sim\gP_R(\cdot\mid s,a^{(t+1,s)}),\qquad
  s^{\prime(t+1,s)}\sim P(\cdot\mid s,a^{(t+1,s)}).
\end{equation*}
The superscript $(t+1,s)$ records the iteration and the state being updated.
For a parameter array $\bm\theta$ and a threshold $z\in\RB$, define the
empirical Bellman-target CDF
\begin{equation*}
  \widehat F_{s,\bm\theta}^{(t+1)}(z)
  =\frac{1}{m}
  {\sum\nolimits_{j=1}^m}
  \ind\!\left\{
  r^{(t+1,s)}+\gamma\theta
  \bigl(s^{\prime(t+1,s)},j\bigr)<z
  \right\}.
\end{equation*}
Here $\ind\{\cdot\}$ denotes the indicator of an event.
For each \((s,i)\), the synchronous QTD update is
\begin{equation*}
  \theta^{(t+1)}(s,i)
  =\theta^{(t)}(s,i)+\alpha_t
  \left\{\tau_i-\widehat F_{s,\bm\theta^{(t)}}^{(t+1)}
  \bigl(\theta^{(t)}(s,i)\bigr)\right\},
\end{equation*}
where $(\alpha_t)_{t\ge0}$ are
deterministic, positive and nonincreasing step sizes.   
If too few sampled targets lie below the current coordinate
$\theta^{(t)}(s,i)$, the update moves it upward; if too many lie below it, the
update moves it downward. Each coordinate is therefore pushed toward its
prescribed quantile level $\tau_i$.
\citet{rowland2023analysis} established almost-sure convergence
of tabular QTD to the fixed point $\bm\theta_m$; our goal is to quantify this
convergence at finite time.
Throughout the general analysis, $(\alpha_t)_{t\ge0}$ is deterministic,
positive, and nonincreasing, with $\bar\alpha=\alpha_0$.
Section~\ref{Section:analysis} states the resulting guarantees for constant,
polynomially decreasing, and harmonic schedules.

\subsection{Regularity assumptions}

We impose the
following conditions on the reward distributions.

\begin{assumption}[Lipschitz reward densities]
\label{assump:density}
For every $(s,a)\in\gS\times\gA$, the reward distribution admits a density
$p_{s,a}$ on $(0,1)$. For some constants $C_0\ge1$ and $L\ge0$, uniformly over
$(s,a)\in\gS\times\gA$, $0<p_{s,a}(x)\le C_0$ for all $x\in(0,1)$ and
$p_{s,a}$ is $L$-Lipschitz on $(0,1)$. At the endpoints, either:
\begin{enumerate}[label=(\roman*),leftmargin=*]
  \item there is a constant $c_0>0$ such that
  $p_{s,a}(x)\ge c_0$ for every $(s,a)\in\gS\times\gA$ and $x\in(0,1)$; or
  \item let $\widetilde p_{s,a}$ denote the zero extension of $p_{s,a}$ to
  $\RB$: it equals $p_{s,a}$ on $(0,1)$ and zero elsewhere.  The function
  $\widetilde p_{s,a}$ is $L$-Lipschitz on $\RB$.  Moreover, there is a constant
  $\kappa\in(0,1/2]$ such that every $p_{s,a}$ is nondecreasing on
  $(0,\kappa)$ and nonincreasing on $(1-\kappa,1)$.
\end{enumerate}
\end{assumption}

Assumption~\ref{assump:density} imposes regularity conditions on the reward densities.
The upper bound $C_0$ prevents too much probability from being concentrated in a very short interval, while the Lipschitz constant $L$
prevents the density from changing abruptly.  
We allow two behaviors at the
endpoints of the reward range.  In case~(i), the density stays uniformly
positive throughout $(0,1)$.  In case~(ii), it may decrease to zero at either
endpoint, as happens for some beta distributions,
but it must do so without oscillating. 
These conditions ensure that, near the target quantiles, location errors induce a non-negligible change in the corresponding CDF values, yielding a corrective QTD update.

\begin{assumption}[No boundary quantile]
\label{Assumption_no_boundary_quantile}
Let $P^\pi(s'\mid s)=\sum_{a\in\gA}\pi(a\mid s)P(s'\mid s,a)$ be the
transition probability from $s$ to $s'$ under policy $\pi$. 
For every integer $m\ge1$, every $s,s'\in\gS$ with $P^\pi(s'\mid s)>0$,
and every $i,j\in[m]$, we
assume that $\theta_m(s,i)-\gamma\theta_m(s',j)\notin\{0,1\}$.
\end{assumption}

To interpret Assumption~\ref{Assumption_no_boundary_quantile}, observe that a
reward $r$ aligns the next-state value $\gamma\theta_m(s',j)$ with the current
quantile $\theta_m(s,i)$ precisely when
$r=\theta_m(s,i)-\gamma\theta_m(s',j)$.  The assumption simply says that none
of these relevant reward values is exactly $0$ or $1$, the two endpoints at
which the density may change its behavior. This excludes degenerate boundary configurations that would otherwise obstruct the local analysis around the target quantiles. 

%% file: 3_main_results.tex
\section{Main Results}
\label{Section:analysis}

Our main result gives a high-probability bound for the last QTD iterate under
any positive, nonincreasing step-size sequence.  
The result separates a burn-in requirement for reaching the local region from the subsequent error bound, which consists of the contracted error inherited at entrance and the accumulated sampling fluctuations.
We first state this general result and then specialize it to three step size schedules.  
To state the precise bounds, we need one property of the MDP.  Define
\begin{equation}
c_\gM \coloneq
\inf_{\substack{m\in\NB,\ s\in\gS,\ i\in[m],\ 0<|u|\le(1-\gamma)^{-1}}}
\frac{\big|F_{(\gT^\pi\bm\eta_m)(s)}(\theta_m(s,i)+u)-\tau_i\big|}
     {\tau_i(1-\tau_i)|u|}.
\label{eq:cM_definition}
\end{equation}
Under Assumptions~\ref{assump:density} and
\ref{Assumption_no_boundary_quantile}, $c_\gM>0$ and depends only on the MDP $\gM$
\citep[Lemma~B.1]{cheng2026statistical}. 
It measures how sensitively the Bellman-target CDF changes when a candidate quantile moves away from its target.
A larger \(c_{\mathcal M}\) means a stronger corrective signal in the QTD update.

\begin{theorem}[General nonincreasing step sizes]
\label{thm:global_nonasymptotic}
Suppose Assumptions~\ref{assump:density} and
\ref{Assumption_no_boundary_quantile} hold,
$\bm\theta^{(0)}\in[0,(1-\gamma)^{-1}]^{\gS\times[m]}$.
Assume further that the initial, and hence largest, step size satisfies
$\alpha_0C_0\le1$.
Fix $\delta\in(0,1)$ and $T\ge8$, and set
$u_T=\lfloor T/4\rfloor$ and $v_T=\lfloor T/2\rfloor$.
Let $\{u_T,u_T+1,\ldots,v_T-1\}$ be the entrance block and $\{v_T, v_T+1,\ldots, T\}$ be the local analysis block.
Define the accumulated step size and squared step size on the entrance block,
and the accumulated step size after that block, by
\[
A_T^{\mathrm{ent}}
\coloneq\sum\nolimits_{t=u_T}^{v_T-1}\alpha_t,\qquad
V_T^{\mathrm{ent}}
\coloneq\sum\nolimits_{t=u_T}^{v_T-1}\alpha_t^2,\qquad
A_T^{\mathrm{loc}}
\coloneq\sum\nolimits_{t=v_T}^{T-1}\alpha_t.
\]
We also define
\begin{equation*}
  \ell_T(\delta)=\log\frac{8|\gS|mT^2}{\delta},
  \qquad
  b_T(\delta)
  =2\sqrt{\frac{\alpha_{u_T}\ell_T(\delta)}
  {c_\gM(1-\gamma)}}
  +\frac{\alpha_{u_T}\ell_T(\delta)}3.
\end{equation*}
If the following two burn-in conditions hold:
\begin{subequations}
\label{eq:if-condition}
\begin{align}
  c_{\mathrm g}A_T^{\mathrm{ent}}
  &\ge D_{\mathrm g}+\frac{\beta}{2}V_T^{\mathrm{ent}}
  +\sqrt{2V_T^{\mathrm{ent}}\log\frac2\delta},
  &&\text{(Entrance)}\label{eq:entrance_condition}\\
  b_T(\delta)
  &\le
  \min\left\{\frac{r_{\mathrm{out}}}{16},
  \frac{\mu}{64L_h}\right\},
  &&\text{(Capture)}\label{eq:capture_condition}
\end{align}
\end{subequations}
where $L_h\coloneq L(1+\gamma)^2/2$, $\mu\coloneq c_\gM(1-\gamma)/(4m)$ is the local contraction rate, and
$r_{\mathrm{out}}$ is the local-region radius. $r_{\mathrm{out}}$ together with $c_{\mathrm g},D_{\mathrm g},\beta$ is defined in
Appendix~\ref{sec:explicit_proof_constants}.
Then, with probability at least $1-\delta$, it holds that
\begin{equation}
  \norm{\bm\theta^{(T)}-\bm\theta_m}_\infty
  \le
  \frac{r_{\mathrm{out}}}{2}
  \exp\left\{
  -\frac{15c_\gM(1-\gamma)}{64m}A_T^{\mathrm{loc}}
  \right\}
  +\frac{31}{30}b_T(\delta).
  \label{eq:generic_hp_bound}
\end{equation}
\end{theorem}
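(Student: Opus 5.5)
The proof has two stages, following the entrance/local split built into the statement. Throughout, write $\bm\Delta^{(t)}=\bm\theta^{(t)}-\bm\theta_m$. The sampled increment decomposes as the conditional mean plus a martingale difference:
\[
\tau_i-\widehat F^{(t+1)}_{s,\bm\theta^{(t)}}(\theta^{(t)}(s,i))
= h_{s,i}(\bm\theta^{(t)})+\xi^{(t+1)}(s,i),
\]
where $h_{s,i}(\bm\theta)=\tau_i-F_{(\gT^\pi\bm\eta_{\bm\theta})(s)}(\theta(s,i))$ (strict versus nonstrict inequality is immaterial under Assumption~\ref{assump:density}). The noise satisfies $|\xi|\le1$ and its conditional variance is at most $\bar F(1-\bar F)$, with $\bar F$ the conditional mean CDF.

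Two preliminary facts are needed. First, the invariance $\bm\theta^{(t)}\in[0,(1-\gamma)^{-1}]^{\gS\times[m]}$ holds deterministically, because $\alpha_0C_0\le1$ and an update only moves a coordinate upward when it lies below most targets, which lie in the range. Second, a linearization near $\bm\theta_m$ follows from Assumption~\ref{assump:density} and Assumption~\ref{Assumption_no_boundary_quantile}. By the definition \eqref{eq:cM_definition} of $c_\gM$ and the $L$-Lipschitz density, $h$ has a Jacobian of the form $-D(\bm I-\gamma \bm P)$, where $D$ is diagonal with entries proportional to the target density. This yields a local drift inequality
\[
\langle \text{sign}, h(\bm\theta)\rangle \le -c_\gM\tau_i(1-\tau_i)(1-\gamma)\|\bm\Delta\|_\infty+L_h\|\bm\Delta\|_\infty^2
\]
at the maximizing coordinate. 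The quadratic remainder is what the condition $b_T\le\mu/(64L_h)$ is designed to absorb.

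\textbf{Stage 1 (entrance).} I would show that with probability at least $1-\delta/2$ some $t\in\{u_T,\dots,v_T-1\}$ has $\|\bm\Delta^{(t)}\|_\infty\le r_{\mathrm{out}}/2$. The plan is to use a Lyapunov potential $\Phi(\bm\theta)$, a smooth surrogate for the $\overline W_\infty$ distance to $\bm\eta_m$ built from CDF gaps. Bellman contraction (Proposition~\ref{prop:bellman_projection_stability}) and CDF monotonicity give $\EB[\Phi(\bm\theta^{(t+1)})\mid\gF_t]\le\Phi(\bm\theta^{(t)})-c_{\mathrm g}\alpha_t+\frac{\beta}{2}\alpha_t^2$ as long as $\bm\theta^{(t)}$ is outside the local region. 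Here $\Phi\le D_{\mathrm g}$ initially, $c_{\mathrm g}$ is a uniform drift lower bound (positive away from $\bm\theta_m$ by contraction plus positive densities), and $\beta$ is a smoothness constant. Summing over the entrance block and applying Azuma--Hoeffding to the bounded martingale part gives the deviation term $\sqrt{2V_T^{\mathrm{ent}}\log(2/\delta)}$. Under \eqref{eq:entrance_condition}, the event that the iterate never enters the region contradicts $\Phi\ge0$.

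\textbf{Stage 2 (capture and local rate).} After an entrance time $\sigma\le v_T$, I would analyze the stopped process up to the first exit from the ball of radius $r_{\mathrm{out}}$. Coordinatewise, one has $\Delta^{(t+1)}(s,i)=(1-\alpha_t\lambda_{s,i})\Delta^{(t)}(s,i)+\alpha_t(\gamma\text{-coupling})+\alpha_t\xi^{(t+1)}(s,i)+\alpha_t O(L_h\|\bm\Delta\|^2)$. Unrolling with the $\ell_\infty$ contraction of $\bm I-\alpha D(\bm I-\gamma\bm P)$ at rate $1-\alpha_t\mu$ gives the deterministic part $\frac{r_{\mathrm{out}}}{2}\exp\{-\frac{15}{16}\mu\cdot4\,A_T^{\mathrm{loc}}\}$; the factor $15/16$ loses $1/16$ of the contraction to the quadratic remainder. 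The noise part is a weighted martingale sum $\sum_t\alpha_t\prod_{k>t}(1-\alpha_k\lambda_{s,i})\xi^{(t+1)}(s,i)$.

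The key step is the variance--drift matching. The conditional variance of $\xi(s,i)$ is about $\tau_i(1-\tau_i)$ (plus $O(\|\bm\Delta\|)$), while the drift rate $\lambda_{s,i}$ is at least $c_\gM(1-\gamma)\tau_i(1-\tau_i)$. Hence the total variance is at most $\alpha_{u_T}\tau_i(1-\tau_i)\sum_t\alpha_t\prod(1-\alpha_k\lambda)^2\lesssim\alpha_{u_T}/(c_\gM(1-\gamma))$, independent of $m$. Freedman's inequality with a union bound over $|\gS|m$ coordinates and $T^2$ time pairs (the entrance time and the current time) then gives the bound $b_T(\delta)$, including the $\alpha\ell/3$ range term. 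The capture condition \eqref{eq:capture_condition} ensures that $b_T$ plus the contracted initial error stays below $r_{\mathrm{out}}$, so exit has probability at most $\delta/2$. Summing the two contributions, with slack $31/30$ absorbing the $O(\|\bm\Delta\|)$ variance correction and coupling terms, yields \eqref{eq:generic_hp_bound}.

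The main obstacle is making the matching rigorous. The coordinate drift is a coupled linear system rather than diagonal, so I would first try to prove that the weighted noise recursion $N^{(t+1)}=(\bm I-\alpha_tJ)N^{(t)}+\alpha_t\xi^{(t+1)}$ satisfies a coordinatewise comparison bound that preserves the $\tau_i(1-\tau_i)$ weighting through the $\gamma\bm P$ mixing across states.
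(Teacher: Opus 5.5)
Your two-stage outline has the right architecture: a smooth potential with Azuma--Hoeffding for entrance, then linearization, propagated martingale noise, Freedman, and a first-exit argument for capture. However, the step that produces $b_T(\delta)$ is not established, and the scalar computation you give in its place does not describe the actual noise. The linearized error evolves through the full propagator $\bPhi_{t+1,k}=\prod_{j=t+1}^{k-1}(\bI-\alpha_j\bG_m)$. The noise in coordinate $(s,i)$ at time $k$ is therefore $\sum_t\alpha_t\bm e_{s,i}^\top\bPhi_{t+1,k}\bxi^{(t+1)}$, which mixes in the noise of every coordinate. Your sum $\sum_t\alpha_t\prod_{k>t}(1-\alpha_k\lambda_{s,i})\xi^{(t+1)}(s,i)$ drops exactly the $\gamma$-coupling you wrote into the recursion. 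So matching $\tau_i(1-\tau_i)$ against $\lambda_{s,i}$ one coordinate at a time proves nothing. The crude fallback, variance at most $1/4$ per unit of row mass against the worst-case rate $\mu=c_\gM(1-\gamma)/(4m)$, loses a factor of $m$ in the variance proxy relative to $b_T(\delta)$.

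You flag this yourself and propose looking for a ``coordinatewise comparison bound'', but no comparison principle is needed. The missing idea is to match variance and drift along rows of the propagator:
\begin{itemize}
\item Let $\bv_t^\top=\bm e_{s,i}^\top\bPhi_{t+1,k}$. It is entrywise nonnegative because $\alpha_0C_0\le1$ makes $\bI-\alpha_t\bG_m\ge\bm0$; this, not boundedness, is what that hypothesis is for. Set $u_t=\bv_t^\top\bm1\in[0,1]$.
\item The identity $\bB_m\bm1=\bD_m\bm1$ gives the exact mass loss $u_t-\bv_t^\top(\bI-\alpha_t\bG_m)\bm1=(1-\gamma)\alpha_t\bv_t^\top\bD_m\bm1$.
\item Inside the $r_{\mathrm{out}}$-ball, $\EB[(\xi^{(t+1)}_{s,i})^2\mid\gF_t]\le\tfrac32\tau_i(1-\tau_i)\le2c_\gM^{-1}d_{s,i}$. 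Cauchy--Schwarz with nonnegative weights then gives $\EB[(\bv_t^\top\bxi^{(t+1)})^2\mid\gF_t]\le2c_\gM^{-1}u_t\,\bv_t^\top\bD_m\bm1$.
\item The factors commute, so $\bv_t^\top(\bI-\alpha_t\bG_m)\bm1=u_{t-1}$. The predictable variance is then at most $\frac{2\alpha_r}{c_\gM(1-\gamma)}\sum_t u_t(u_t-u_{t-1})\le\frac{2\alpha_r}{c_\gM(1-\gamma)}$ by telescoping.
\item Freedman with range $\alpha_r$ and a union bound over $(s,i,r,k)$ yield the envelope.
\end{itemize}

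Several smaller claims also need correcting.
\begin{itemize}
\item The invariance $\bm\theta^{(t)}\in[0,(1-\gamma)^{-1}]^{\gS\times[m]}$ is false. A coordinate just above $0$ whose sampled targets all fall below it moves down by $\alpha_t(1-\tau_i)$ and can become negative. What holds, with no role for $C_0$, is the box $[-\bar\alpha/(1-\gamma),(1+\bar\alpha)/(1-\gamma)]$, hence $\norm{\be^{(t)}}_\infty\le(1+\bar\alpha)/(1-\gamma)$ for all $t$. You need this at time $u_T$, not at time $0$, to bound the potential at the start of the entrance block.
\item $D_{\mathrm g}$ is this bound plus the log-sum-exp slack minus $r_{\mathrm{out}}/2$. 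So the Stage~1 contradiction must be with $\Phi_\beta(\be^{(v_T)})\ge\norm{\be^{(v_T)}}_\infty>r_{\mathrm{out}}/2$, not with $\Phi\ge0$.
\item The potential should be the log-sum-exp of the signed location errors $\pm e_{s,i}$, not CDF gaps. Then $\norm{\nabla\Phi_\beta}_1\le1$ bounds the martingale increments by $\alpha_t$. The uniform drift $c_{\mathrm g}$ also requires showing that the softmax weight on signed coordinates well below the maximum is at most $c_{\mathrm g}/(1+2c_{\mathrm g})$.
\item The exponent is $\frac{15}{16}\mu A_T^{\mathrm{loc}}$, not $\frac{15}{16}\mu\cdot4A_T^{\mathrm{loc}}$.
\item The factor $31/30$ does not come from variance corrections. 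It comes from the quadratic remainder, $\frac{32L_h}{15\mu}b^2\le b/30$ under $b\le\mu/(64L_h)$, obtained by a bootstrap on $\norm{\be^{(t)}-\bw^{(t)}}_\infty$ with $\bw^{(t)}$ the propagated noise.
\end{itemize}
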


The first burn-in condition \eqref{eq:entrance_condition} says that the accumulated
progress $A_T^{\mathrm{ent}}$ is sufficient to overcome the initial error $D_{\mathrm g}$ and noise (the $V_T^{\mathrm{ent}}$ terms) and reach a
neighborhood of the target. 
The second \eqref{eq:capture_condition} says that subsequent sampling
fluctuations $b_T(\delta)$ are small enough to keep the iterate there.
This division into blocks is only a proof device; QTD runs without a restart or change of step-size rule.

Stage~I in Section~\ref{Section:proof_sketch} establishes entrance. Conditional
on entrance, Stage~II contracts the error remaining at entrance, producing the exponential
term in \eqref{eq:generic_hp_bound}, and controls the subsequent accumulated noise,
producing $b_T(\delta)$. 
We next specialize the general result to the three step-size schedules.

\begin{corollary}[Constant and polynomial step sizes]
\label{cor:constant_polynomial}
Under the assumptions of Theorem~\ref{thm:global_nonasymptotic}, the following
bounds hold whenever its two burn-in conditions \eqref{eq:entrance_condition}, \eqref{eq:capture_condition} are satisfied.
\begin{enumerate}[label=(\roman*),leftmargin=*]
  \item If $\alpha_t=\eta$, then, with probability at least $1-\delta$,
  \begin{equation*}
    \norm{\bm\theta^{(T)}-\bm\theta_m}_\infty
    \le C\exp\left\{-\frac{c_\gM(1-\gamma)\eta T}{Cm}\right\}
    +C\sqrt{\frac{\eta\ell_T(\delta)}{c_\gM(1-\gamma)}}
    +C\eta\ell_T(\delta).
  \end{equation*}
  \item If $\alpha_t=c(t+1)^{-a}$ with $a\in(0,1)$ and $cC_0\le1$, then,
  with probability at least $1-\delta$,
  \begin{equation}
    \norm{\bm\theta^{(T)}-\bm\theta_m}_\infty
    \le C
    \exp\left\{-\frac{c_\gM(1-\gamma)cT^{1-a}}{Cm}\right\}
    +C\sqrt{\frac{c\ell_T(\delta)}
    {c_\gM(1-\gamma)T^a}}
    +\frac{Cc\ell_T(\delta)}{T^a}.
    \label{eq:polynomial_hp_bound}
  \end{equation}
\end{enumerate}
\end{corollary}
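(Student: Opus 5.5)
The corollary is a direct specialization of Theorem~\ref{thm:global_nonasymptotic}. Since the burn-in conditions \eqref{eq:entrance_condition} and \eqref{eq:capture_condition} are assumed, the only work is to bound the two quantities appearing in \eqref{eq:generic_hp_bound}. These are the accumulated local step size $A_T^{\mathrm{loc}}=\sum_{t=v_T}^{T-1}\alpha_t$, which we need from below, and the fluctuation term $b_T(\delta)$, which we need from above and which depends only on $\alpha_{u_T}$. We also check that the hypothesis $\alpha_0C_0\le1$ holds: in case (i) it is part of the standing assumptions with $\alpha_0=\eta$, and in case (ii) $\alpha_0=c$ and $cC_0\le1$ is assumed. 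The prefactor $r_{\mathrm{out}}/2$ is a constant independent of $T$, so it is absorbed into $C$. We read $C$ as depending on the model constants (e.g.\ $r_{\mathrm{out}}$) but not on $T,\delta,\eta,c$.

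For case (i), take $\alpha_t=\eta$. Then $A_T^{\mathrm{loc}}=(T-v_T)\eta\ge \eta T/2$, because $v_T=\lfloor T/2\rfloor\le T/2$. Hence
\[
\exp\{-\tfrac{15c_\gM(1-\gamma)}{64m}A_T^{\mathrm{loc}}\}\le \exp\{-\tfrac{15c_\gM(1-\gamma)\eta T}{128m}\},
\]
which has the stated form with $C\ge 128/15$. For the noise term, $\alpha_{u_T}=\eta$, so
\[
\tfrac{31}{30}b_T(\delta)=\tfrac{31}{15}\sqrt{\eta\ell_T(\delta)/(c_\gM(1-\gamma))}+\tfrac{31}{90}\eta\ell_T(\delta),
\]
which matches the last two terms of (i).

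For case (ii), take $\alpha_t=c(t+1)^{-a}$. Since $\alpha_t$ is nonincreasing,
\[
A_T^{\mathrm{loc}}\ge (T-v_T)\,c\,T^{-a}\ge \tfrac{c}{2}T^{1-a},
\]
using $(t+1)\le T$ for $t\le T-1$. This gives the exponential term with an absolute constant. For $b_T(\delta)$ we need $\alpha_{u_T}=c(u_T+1)^{-a}$, where $u_T=\lfloor T/4\rfloor$, so $u_T+1> T/4$ and therefore
\[
\alpha_{u_T}\le c\,4^{a}T^{-a}\le 4cT^{-a}.
\]
Substituting into $b_T(\delta)$ gives
\[
b_T(\delta)\le 4\sqrt{c\ell_T(\delta)/(c_\gM(1-\gamma)T^a)}+\tfrac43 c\ell_T(\delta)T^{-a},
\]
which yields \eqref{eq:polynomial_hp_bound} after multiplying by $31/30$ and enlarging $C$.

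There is no genuine obstacle; the argument is bookkeeping. The only points requiring care are the floor operations for small $T$, where $T\ge8$ ensures $T-v_T\ge T/2$ and $u_T+1> T/4$, and making sure the constant $C$ is chosen uniformly: it must absorb $r_{\mathrm{out}}/2$, $31/30$, $4^a$ and $128/15$ without introducing any dependence on $T$ or $\delta$. The probability $1-\delta$ event is inherited verbatim from Theorem~\ref{thm:global_nonasymptotic}.
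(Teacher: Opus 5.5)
Your proposal is correct and matches the paper's proof: both substitute $\alpha_{u_T}$ and a lower bound on $A_T^{\mathrm{loc}}$ into \eqref{eq:generic_hp_bound}, using $\alpha_{u_T}\le 4cT^{-a}$ and $A_T^{\mathrm{loc}}\ge cT^{1-a}/2$ in the polynomial case. Your constant-case bound $A_T^{\mathrm{loc}}\ge\eta T/2$ is in fact slightly sharper than the paper's $\eta T/3$. The floor inequalities hold for every $T\ge1$, not only $T\ge 8$. Also, $r_{\mathrm{out}}\le 1/(8mC_0(1+\gamma))<1/8$, so $C$ can be taken universal.
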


For a constant step size, QTD forgets its initialization geometrically, but
every iteration continues to inject noise at the same scale. The iterate
therefore eventually fluctuates in a neighborhood whose leading radius scales as
$\sqrt{\eta}$. 
A smaller \(\eta\) shrinks this neighborhood but slows entrance, while the burn-in conditions require \(T\eta\) to be sufficiently large and \(\eta\ell_T(\delta)\) sufficiently small.

For polynomial decay, the decreasing steps also reduce the stochastic fluctuations, giving the leading last-iterate rate $T^{-a/2}$ up to logarithmic
factors. 
For every fixed \(a\in(0,1)\), the burn-in conditions hold for all sufficiently large \(T\).
As in the asymptotic analysis of \citet{rowland2023analysis}, square summability of the step sizes is not required. 
Here we further obtain a finite-time rate: for every \(a\in(0,1)\), including \(a\le1/2\), the last-iterate error decays as \(T^{-a/2}\) up to logarithmic factors.

The endpoint \(t^{-1}\) decay of the harmonic schedule requires a separate dyadic-block argument.
We divide time into blocks
whose endpoints double, such as $[N,2N)$, $[2N,4N)$, and $[4N,8N)$; these are
called \emph{dyadic blocks}.
For the harmonic step size, the individual steps
become smaller over time, but their sum within each such block remains of
constant order.  Thus, every block reduces the error inherited from the
previous block by a fixed factor, while the sampling noise becomes smaller.

\begin{corollary}[Harmonic step size]
\label{cor:harmonic_stepsize}
Under the assumptions of
Theorem~\ref{thm:global_nonasymptotic},
let $\alpha_t=c/(t+t_0)$. There is a universal constant $C_{\mathrm{harm}}>0$
such that, if $c\ge C_{\mathrm{harm}}\mu^{-1}$ and $t_0\ge cC_0$, then there
is a finite deterministic burn-in
$T_{\mathrm{harm}}(\delta)$, defined in \eqref{eq:definition_T_harm}, for which the following holds.  For every
$T\ge T_{\mathrm{harm}}(\delta)$, with probability at least $1-\delta$,
\begin{equation}
  \norm{\bm\theta^{(T)}-\bm\theta_m}_\infty
  \le C\left\{
  \sqrt{\frac{c\log(C|\gS|mT/\delta)}
  {c_\gM(1-\gamma)T}}
  +\frac{c\log(C|\gS|mT/\delta)}{T}\right\}.
  \label{eq:harmonic_hp_bound}
\end{equation}
\end{corollary}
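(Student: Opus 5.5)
The plan is to reuse the Stage~I/Stage~II machinery behind Theorem~\ref{thm:global_nonasymptotic}, but applied block by block on dyadic blocks rather than through the single global bound. A direct application of \eqref{eq:generic_hp_bound} does not suffice. With $\alpha_t=c/(t+t_0)$ we get $A_T^{\mathrm{loc}}\approx c\log 2$, so the exponential term $\frac{r_{\mathrm{out}}}{2}\exp\{-\frac{15}{16}\mu c\log 2\}$ is only a constant. Likewise $b_T(\delta)$ is computed with $\alpha_{u_T}\approx 4c/T$, which already has the right order.

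\textbf{Step 1 (burn-in).} Define $T_{\mathrm{harm}}(\delta)$ as the first time from which the following hold. The entrance condition \eqref{eq:entrance_condition} holds on an initial block $[N_0/4,N_0/2)$ with $N_0\asymp T/2^K$. This is automatic once $c\log 2\cdot c_{\mathrm g}$ dominates $D_{\mathrm g}$ plus the $V^{\mathrm{ent}}=O(c^2/N_0)$ terms, after enlarging $c$ through $C_{\mathrm{harm}}$. In addition, the capture condition \eqref{eq:capture_condition} holds at scale $N_0$. Since $b$ is decreasing in $N$, it then holds at every later scale. The condition $t_0\ge cC_0$ gives $\alpha_0C_0\le 1$. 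After this step we have $\|\bm\theta^{(N_0)}-\bm\theta_m\|_\infty\le r_{\mathrm{out}}/2$ with high probability.

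\textbf{Step 2 (one dyadic block).} Next I would extract from the Stage~II argument a one-block recursion. Suppose the iterate is in the local region at time $N$ with error $e_N$. Then on $[N,2N]$ the local linearized drift contracts at rate $\mu$ (up to the $L_h$ Taylor remainder, which the capture condition absorbs). The variance--drift matched martingale noise is bounded by $b_N\asymp\sqrt{c\ell/(c_\gM(1-\gamma)N)}+c\ell/N$. This should give, with probability $1-\delta_N$,
\[
e_{2N}\le e_N\exp\{-\tfrac{15}{16}\mu c\log 2\}+\tfrac{31}{30}b_N,
\]
and the iterate never leaves the region.

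\textbf{Step 3 (iterate).} Choosing $c\ge C_{\mathrm{harm}}\mu^{-1}$ makes the contraction factor $\rho\le 1/4$. Unrolling over $k=0,\dots,K$ with $N_k=2^kN_0$ and $T=N_K$ gives
\[
e_T\le\rho^K e_{N_0}+\sum\nolimits_k\rho^{K-k}b_{N_k}.
\]
Since $b_{N_k}\propto N_k^{-1/2}$ and $\rho\cdot 2^{1/2}<1$, the sum is geometric and dominated by $Cb_T$. The term $\rho^K r_{\mathrm{out}}\le (N_0/T)^2 r_{\mathrm{out}}$ is absorbed by choosing $K\asymp\log_2 T$ and $N_0$ at the burn-in size, which fixes $T_{\mathrm{harm}}$. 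A union bound with $\delta_N=\delta/(2K)$ costs only a $\log T$ inside the logarithm. This yields \eqref{eq:harmonic_hp_bound}.

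The main obstacle is Step~2. Theorem~\ref{thm:global_nonasymptotic} is stated from a global start, so I need its Stage~II conclusion in a restartable, conditional form. That form requires the initial error to lie in the local region at a stopping time, with noise concentration holding uniformly over the block. The Freedman-type bound behind $b_T$ must also be applied with the entrance error fed in as a random quantity. If this restart form is not directly available, I would instead run the Stage~II recursion once on $[N_0,T]$ with step-size weights $\prod_s(1-\mu\alpha_s)\approx (t/T)^{\mu c}$. I would then bound the weighted noise sum directly, which gives the same $\sqrt{c/T}$ scale when $\mu c>1/2$.
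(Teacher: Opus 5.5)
Your Steps~2--3 match the paper's dyadic argument. The restartable form you worry about is exactly Lemma~\ref{lem:variance_sensitive_local}: it is stated at an $(\gF_t)$-stopping time, conditionally on $\gF_n$. The paper then iterates $E_{j+1}\le qE_j+\tfrac{31}{30}b_j$ with $q\le 1/4$, as you do.

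The gap is Step~1. On one dyadic entrance block the additive drift supplies only $c_{\mathrm g}A^{\mathrm{ent}}\lesssim c_{\mathrm g}\,c\log 2$, so \eqref{eq:entrance_condition} forces $c\,c_{\mathrm g}\log 2\ge D_{\mathrm g}$. The explicit constants make this expensive:
\begin{itemize}
\item $\mu/c_{\mathrm g}=2/\{m\tau_1(1-\tau_1)r_{\mathrm{out}}\}\ge 4/r_{\mathrm{out}}\ge 32mC_0(1+\gamma)$;
\item $D_{\mathrm g}\ge(1-\gamma)^{-1}-r_{\mathrm{out}}/2\ge 15/\{16(1-\gamma)\}$.
\end{itemize}
Your requirement therefore reads $c\mu\gtrsim mC_0/(1-\gamma)$. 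So ``enlarging $c$ through $C_{\mathrm{harm}}$'' makes $C_{\mathrm{harm}}$ depend on $m$, $C_0$ and $\gamma$, which contradicts the universality in the statement. With such a gain, the leading term $\sqrt{c/T}$ also loses the $\sqrt{m/T}$ scaling. Your fallback of a single Stage~II run on $[N_0,T]$ only addresses the restart question and still relies on this Step~1.

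The missing idea is a scale-sensitive, multiplicative version of the global drift.
\begin{itemize}
\item The proof of Lemma~\ref{lem:global_smooth_max} actually yields \eqref{eq:stronger_global_drift}: $\langle\nabla\Phi_\beta(\be),\bh(\btheta)\rangle\ge a_mV^*/2\ge\frac{\mu}{4}\norm{\be}_\infty$, where $V^*=\norm{\be}_\infty$ by Lemma~\ref{lem:pathwise_boundedness}.
\item Outside the $r_{\mathrm{out}}/2$-ball, $\log(2|\gS|m)/\beta\le\norm{\be}_\infty/2$, so the drift is at least $\frac{\mu}{6}\Phi_\beta(\be)$.
\item This gives $\Phi_\beta(\be^{(t+1)})\le(1-\frac{\mu}{6}\alpha_t)\Phi_\beta(\be^{(t)})+\alpha_tZ_{t+1}+\frac{\beta}{2}\alpha_t^2$, a contraction at a rate comparable to $\mu$.
\item Set $p=\mu c/6$; $c\ge C_{\mathrm{harm}}/\mu$ with a universal $C_{\mathrm{harm}}$ guarantees $p>1$. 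The product of the contraction factors over $[u,v)$ is then at most $\{(u+t_0)/(v+t_0)\}^p$.
\item Take $v\approx(u+t_0)^{4/3}$ and use a backward-weighted Azuma bound, whose weighted squared steps are at most $c^2/\{(p-1)(v+t_0)\}$. This gives entrance with a polynomial burn-in.
\end{itemize}
If you keep the additive drift, the entrance block must instead satisfy $v/u\ge\exp\{2D_{\mathrm g}/(c\,c_{\mathrm g})\}$. For $c\asymp\mu^{-1}$ this is $\exp\{\Omega(m)\}$: finite, but not the burn-in \eqref{eq:definition_T_harm}.

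A secondary, fixable point concerns your union bound. With $N_0$ fixed and $\delta_N=\delta/(2K)$, $K\asymp\log_2T$, both the entrance deviation term and the first-block envelope carry $\log(K/\delta)$. Hence the entrance and capture conditions at scale $N_0$ eventually fail as $T\to\infty$. Use block-summable failure probabilities such as $\delta_j=\delta/2^{j+3}$, so each block's logarithm depends only on $N_j$. Handle $T$ between dyadic points with a final partial block, as the paper does.
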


The condition $c\ge C_{\mathrm{harm}}\mu^{-1}$  ensures that each dyadic block makes enough progress to forget the error inherited from the preceding block to keep pace with the decreasing noise,
while the offset $t_0$ keeps the early steps stable. 
The harmonic schedule recovers
the canonical $T^{-1/2}$ statistical rate for the last iterate, up to
logarithmic factors and the lower-order $T^{-1}$ term. For comparison,
polynomial decay gives $T^{-a/2}$, while a constant step size reaches a noise
floor of order $\sqrt{\eta}$.
Choosing $c\asymp\mu^{-1}$ and treating all other problem parameters as fixed, the bound scales as $\sqrt{m/T}$ up to logarithmic factors in $m$ and $T$. This matches the leading polynomial dependence on $m$ and sample size of the model-based estimator \citep{cheng2026statistical}.



%% file: 4_proof_sketch.tex
\section{Proof Sketch}
\label{Section:proof_sketch}


The proof uses two stages because different information is available at
different distances from the target. 
Far from $\bm\theta_m$, a first order Taylor approximation need not be accurate, but CDF monotonicity still shows that the mean update points towards the target. Stage~I uses this global directional information to reach a neighborhood of $\bm\theta_m$. Once the iterate is close, Stage~II uses a local Taylor expansion, leading to a linearized update map that is entrywise nonnegative and contractive. This keeps the iterate local and gives the sharp stochastic rate.
The stages correspond to the two
conditions in Theorem~\ref{thm:global_nonasymptotic} respectively.

In the following, we first write QTD in vector form, which separates its mean update from the
sampling noise. At iteration $t$, the synchronous batch contains one sampled
$(a,r,s^\prime)$ triplet for every state, denoted by
$\gZ_t=\{(a^{(t,s)},r^{(t,s)},s^{\prime(t,s)}):s\in\gS\}$. Set
$\gF_t=\sigma(\bm\theta^{(0)},\gZ_1,\ldots,\gZ_t)$, and let $F_{s,a}$ be
the CDF of the reward distribution $\gP_R(\cdot\mid s,a)$. Thus, $\gF_t$
records the initialization and all samples observed through time $t$. The mean and
sample update fields are
\begin{equation}
\begin{aligned}
&h_{s,i}(\bm\theta)
=\tfrac1m \sum\nolimits_{a\in\gA,\,s'\in\gS}
  \sum\nolimits_{j=1}^m
  \pi(a\mid s)P(s'\mid s,a)
  F_{s,a}\!\bigl(\theta(s,i)-\gamma\theta(s',j)\bigr)-\tau_i,\\
&\widehat h_{s,i}^{(t+1)}(\bm\theta)
=\tfrac1m \sum\nolimits_{j=1}^m
  \ind\!\bigl\{r^{(t+1,s)}
  +\gamma\theta(s^{\prime(t+1,s)},j)<\theta(s,i)\bigr\}-\tau_i .
\end{aligned}
\label{eq:qtd_mean_field}
\end{equation}
With
$\bxi^{(t+1)}=\widehat\bh^{(t+1)}(\bm\theta^{(t)})-
\bh(\bm\theta^{(t)})$, the vector QTD recursion is
\begin{equation}
\label{eq:vector-update-rule}
	\bm\theta^{(t+1)}
	=\bm\theta^{(t)}-\alpha_t
	\{\bh(\bm\theta^{(t)})+\bxi^{(t+1)}\},
	\qquad \EB[\bxi^{(t+1)}\mid\gF_t]=\bm0.
\end{equation}
Here $\bh(\bm\theta)$ is the average update direction and $\bh(\btheta_m)=0$, while
$\bxi^{(t+1)}$ is the centered error of the direction computed from the current
sample batch. Define $\be^{(t)}=\bm\theta^{(t)}-\bm\theta_m$, then
$\bh(\bm\theta^{(t)})=\bh(\bm\theta_m+\be^{(t)})$ and the recursion can be
studied directly in terms of the error vector $\be^{(t)}$.

\subsection{Stage I: the global error decreases until the local region is reached}

Outside the local region, Bellman structure still provides a global directional signal, though being non-smooth. In particular, if a signed coordinate $\varsigma e_{s,i}=\|\be\|_\infty$ for some $\varsigma\in\{-1,1\}$, then
\begin{equation*}
	\varsigma\Bigl(\{\theta(s,i)-\gamma\theta(s',j)\}
	-\{\theta_m(s,i)-\gamma\theta_m(s',j)\}\Bigr)
	=\varsigma\{e_{s,i}-\gamma e_{s',j}\}
	\ge (1-\gamma)\|\be\|_\infty.
\end{equation*}
CDF monotonicity converts this gap into a drift towards the target $\btheta_m$. The maximizing coordinate, however,
may change from one iteration to the next, so tracking any single coordinate
does not yield a stable potential.  We therefore aggregate all signed
coordinate errors through the smooth maximum
\begin{equation*}
    \Phi_\beta(\be)
    =\beta^{-1}\log\sum\nolimits_{s\in\gS,\ i\in[m]}
    \left(e^{\beta e_{s,i}}+e^{-\beta e_{s,i}}\right).
\end{equation*}
For large $\beta$, $\Phi_\beta$ closely approximates
$\norm{\be}_\infty$ while remaining smooth for a one-step Taylor
expansion. Lemma~\ref{lem:global_smooth_max} makes the preceding inward-drift
intuition quantitative.


\begin{lemma}[Global drift of the smooth maximum]
	\label{lem:global_smooth_max}
    There exist some positive constants $r_{\mathrm{out}}$, $c_\mathrm{g}$ and $\beta$ (specified in Appendix~\ref{sec:explicit_proof_constants}).
	For every $\be,\bu\in\RB^{\gS\times[m]}$,
	\begin{equation*}
    \label{eq:Phi}
		\norm{\be}_\infty
		\le\Phi_\beta(\be)
		\le\norm{\be}_\infty+\frac{\log(2|\gS|m)}{\beta},\quad
		\norm{\nabla\Phi_\beta(\be)}_1\le1,\quad
		\bu^\top\nabla^2\Phi_\beta(\be)\bu
		\le\beta\norm{\bu}_\infty^2.
	\end{equation*}
	Moreover, if $\be=\bm\theta-\bm\theta_m$ and
	$\norm{\be}_\infty\ge r_{\mathrm{out}}/2$, then
	$\left\langle\nabla\Phi_\beta(\be),\bh(\bm\theta)\right\rangle
	\ge c_{\mathrm g}>0.$
\end{lemma}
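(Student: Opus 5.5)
We will split the lemma into its analytic part, which concerns $\Phi_\beta$ alone, and its drift part, which uses the Bellman structure of $\bh$.

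\emph{Analytic part.} Write $N=2|\gS|m$. The $N$ exponentials are $e^{\pm\beta e_{s,i}}$, and the largest of them equals $e^{\beta\|\be\|_\infty}$. Hence the log-sum-exp is at least $\beta\|\be\|_\infty$ and at most $\beta\|\be\|_\infty+\log N$, which gives the two-sided bound. For the gradient, set
\[
p_{s,i}^{\pm}=\frac{e^{\pm\beta e_{s,i}}}{Z},\qquad Z=\sum\nolimits_{s,i}\bigl(e^{\beta e_{s,i}}+e^{-\beta e_{s,i}}\bigr).
\]
Then $\partial_{s,i}\Phi_\beta=p^+_{s,i}-p^-_{s,i}$, so $\|\nabla\Phi_\beta\|_1\le\sum(p^++p^-)=1$.

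For the Hessian, view $\Phi_\beta(\be)=\beta^{-1}\operatorname{LSE}(\beta A\be)$, where $A=\binom{I}{-I}$. Then
\[
\bu^\top\nabla^2\Phi_\beta\bu=\beta\,\var_{p}(A\bu).
\]
This is the variance of the random variable taking value $\pm u_{s,i}$ with probability $p^\pm_{s,i}$, and it is bounded by its second moment, which is at most $\|\bu\|_\infty^2$.

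\emph{Drift part.} We want the lower bound on $\langle\nabla\Phi_\beta(\be),\bh(\bm\theta)\rangle$. Write it as $\sum_{s,i}(p^+_{s,i}-p^-_{s,i})\,h_{s,i}(\bm\theta)$. The plan has three steps.

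(a) Take a coordinate with $\varsigma e_{s,i}=\|\be\|_\infty\ge r_{\mathrm{out}}/2$. The displayed inequality in Stage I gives $\varsigma(e_{s,i}-\gamma e_{s',j})\ge(1-\gamma)\|\be\|_\infty$ for every $s',j$. Monotonicity of $F_{s,a}$ then shows that $\varsigma h_{s,i}(\bm\theta)$ is at least $\varsigma$ times the value of the mean field obtained by shifting every argument by $\varsigma(1-\gamma)\|\be\|_\infty$ from the fixed-point arguments. In terms of the Bellman-target CDF, this reads
\[
\varsigma h_{s,i}\ge\varsigma\bigl\{F_{(\gT^\pi\bm\eta_m)(s)}\bigl(\theta_m(s,i)+\varsigma(1-\gamma)\|\be\|_\infty\bigr)-\tau_i\bigr\}.
\]
By the definition of $c_\gM$, this is at least
\[
c_\gM\tau_i(1-\tau_i)(1-\gamma)\|\be\|_\infty\ge\frac{c_\gM(1-\gamma)r_{\mathrm{out}}}{4m}=:2c_*,
\]
using $\tau_i(1-\tau_i)\ge1/(4m)$. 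One must check that the shift stays within $(1-\gamma)^{-1}$; this holds after clipping, since $F$ saturates. The same argument applies to any coordinate with $\varsigma e_{s,i}\ge\|\be\|_\infty-\epsilon$, with $\epsilon$ a small fraction of $r_{\mathrm{out}}$: the gap becomes $(1-\gamma)\|\be\|_\infty-\epsilon$, which is still of order $(1-\gamma)r_{\mathrm{out}}$.

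(b) Split the weights. On near-maximal signed coordinates, the signed weight $p^{\varsigma}$ multiplies a term $\varsigma h\ge c_*$. The opposite-sign weight on the same coordinate is at most $e^{-2\beta(\|\be\|_\infty-\epsilon)}$ relative to the top weight. Coordinates that are not near-maximal carry total mass at most $N e^{-\beta\epsilon}$. Since $|h_{s,i}|\le1$, these leftover contributions are bounded by $Ne^{-\beta\epsilon}$ plus a similar term. The near-maximal mass is at least $1-Ne^{-\beta\epsilon}$.

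(c) Choose the constants. Take $\epsilon=(1-\gamma)r_{\mathrm{out}}/4$ and $\beta\ge\epsilon^{-1}\log(8N/c_*)$. Then
\[
\langle\nabla\Phi_\beta,\bh\rangle\ge c_*(1-Ne^{-\beta\epsilon})-2Ne^{-\beta\epsilon}\ge c_*/2=:c_{\mathrm g}.
\]

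\emph{Main obstacle.} The difficulty lies in step (a) and in the bookkeeping of (b). The pointwise lower bound must hold uniformly over all near-maximal coordinates, not just the argmax, and the extreme quantiles have only weak drift, of order $1/m$. This weak drift forces $\beta$ to grow like $m\log(|\gS|m)/r_{\mathrm{out}}$. Those constants are exactly what Appendix~\ref{sec:explicit_proof_constants} records.
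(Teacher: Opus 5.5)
Your proposal is correct and follows essentially the paper's proof: the log-sum-exp sandwich; the gradient and Hessian as the mean and $\beta$ times the variance of a random signed basis vector; a near-maximal signed set with slack $(1-\gamma)r_{\mathrm{out}}/4$, on which monotonicity and the definition of $c_\gM$ (with the shift clipped) give a uniform signed drift; and a choice of $\beta$ that pushes the remaining softmax mass below that drift. Only constant-level slips remain. First, $\tau_i(1-\tau_i)\norm{\be}_\infty\ge r_{\mathrm{out}}/(8m)$, not $r_{\mathrm{out}}/(4m)$, so $c_*$ must be halved. Second, the leftover mass needs no factor $2$, since the opposite-sign weights already lie among the non-near-maximal ones; this is why the paper can use the slightly smaller $\beta=\frac{4}{(1-\gamma)r_{\mathrm{out}}}\log[2|\gS|m(1+2c_{\mathrm g})/c_{\mathrm g}]$. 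Third, the weak drift enters $\beta$ only through $\log(1/c_{\mathrm g})$, so $\beta$ scales like $\{(1-\gamma)r_{\mathrm{out}}\}^{-1}\log(|\gS|m/c_{\mathrm g})$ rather than $m\log(|\gS|m)/r_{\mathrm{out}}$.
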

In Lemma \ref{lem:global_smooth_max}, the first three bounds make $\Phi_\beta$ a smooth proxy for the maximum error,
while the final inequality $\left\langle\nabla\Phi_\beta(\be),\bh(\bm\theta)\right\rangle\ge c_{\mathrm g}$ ensures a uniform first-order decrease outside the
local ball.
As the expected QTD update is $-\alpha\bh(\bm\theta)$, a positive lower bound on $\langle\nabla\Phi_\beta(\be),\bh(\bm\theta)\rangle$ ensures that this update decreases $\Phi_\beta$ to first order.
We then show that this decrease dominates the accumulated
sampling noise. Fix $0\le u<v$ and define
\begin{equation*}
	A_{u,v}=\textstyle\sum_{t=u}^{v-1}\alpha_t,
	\qquad
	V_{u,v}=\textstyle\sum_{t=u}^{v-1}\alpha_t^2,
	\qquad
	\tau_{u,v}=\inf\bigl\{u\le t\le v:
	\|\be^{(t)}\|_\infty\le r_{\mathrm{out}}/2\bigr\}.
\end{equation*}
with $\inf\varnothing=\infty$. Here, $A_{u,v}$ measures the deterministic
progress available during the step block $[u, v)$, whereas $V_{u,v}$ is the variance scale of its accumulated sampling noise.
For $u\le t<v$ with $t<\tau_{u,v}$, set
$Z_{t+1}=-\langle\nabla\Phi_\beta(\be^{(t)}),\bxi^{(t+1)}\rangle$.
Then $\EB[Z_{t+1}\mid\gF_t]=0$, $|Z_{t+1}|\le1$, and Taylor's expansion with
Lemma~\ref{lem:global_smooth_max} gives
\begin{equation*}
\Phi_\beta(\be^{(t+1)})
\le \Phi_\beta(\be^{(t)})-c_{\mathrm g}\alpha_t
+\alpha_tZ_{t+1}+\beta\alpha_t^2/2.
\end{equation*}
This is the desired global decay: the first negative term is \textit{the inward progress}, while the last two terms are the sampling fluctuation and the
second-order cost. If no entrance occurs by time $v$, 
we have $\Phi_\beta(\be^{(v)}) \ge \norm{\be^{(v)}}_\infty > r_{\mathrm{out}}/2$,
summing over the block
forces $\sum_{t=u}^{v-1}\alpha_tZ_{t+1}
\ge c_{\mathrm g}A_{u,v}
-\bigl(\Phi_\beta(\be^{(u)})-r_{\mathrm{out}}/2\bigr)
-\frac\beta2V_{u,v}.$
Azuma--Hoeffding thus gives the following entrance bound.

\begin{lemma}[Late entrance]
	\label{lem:finite_time_entrance}
	For any pair $0\le u<v$,
    	\begin{equation*}
		\label{eq:tau-bound}
		\PB(\tau_{u,v}=\infty)
		\le
		\exp\bigl\{
-\bigl[c_{\mathrm g}A_{u,v}-D_{\mathrm g}
-(\beta/2)V_{u,v}\bigr]_+^2/(2V_{u,v})
\bigr\},
	\end{equation*}
    where $[x]_+:=\max\{x,0\}$ denotes the positive part of $x$, $D_\mathrm{g}>0$ is defined in Appendix~\ref{sec:explicit_proof_constants}.
\end{lemma}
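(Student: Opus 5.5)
The plan is to turn the one-step inequality displayed before the lemma into a supermartingale argument on the event of non-entrance. I will use Lemma~\ref{lem:global_smooth_max} and Azuma--Hoeffding with increments $\alpha_t Z_{t+1}$, each bounded by $\alpha_t$, so the total variance proxy is $V_{u,v}$.

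The first step is to justify the one-step bound. For $u\le t<v$ with $t<\tau_{u,v}$, write $\be^{(t+1)}=\be^{(t)}-\alpha_t\{\bh(\bm\theta^{(t)})+\bxi^{(t+1)}\}$ and Taylor expand $\Phi_\beta$ to second order with an integral remainder. The Hessian bound $\bu^\top\nabla^2\Phi_\beta\bu\le\beta\|\bu\|_\infty^2$ is applied with $\bu=\alpha_t\widehat\bh^{(t+1)}(\bm\theta^{(t)})$, which gives second-order cost $\le\beta\alpha_t^2/2$. This uses $\|\widehat\bh\|_\infty\le1$, valid because each entry is a difference of two numbers in $[0,1]$. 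Since $t<\tau_{u,v}$ we have $\|\be^{(t)}\|_\infty>r_{\mathrm{out}}/2$, so the drift bound $\langle\nabla\Phi_\beta,\bh\rangle\ge c_{\mathrm g}$ applies. The noise term $Z_{t+1}$ satisfies $|Z_{t+1}|\le\|\nabla\Phi_\beta\|_1\|\bxi^{(t+1)}\|_\infty\le1$. For this I want $\|\bxi\|_\infty\le1$: both $\widehat h_{s,i}$ and $h_{s,i}$ lie in $[-\tau_i,1-\tau_i]$, so their difference lies in $[-1,1]$. Finally, $\EB[Z_{t+1}\mid\gF_t]=0$ because $\nabla\Phi_\beta(\be^{(t)})$ is $\gF_t$-measurable.

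Next I handle the stopping. I define stopped increments $\widetilde Z_{t+1}=Z_{t+1}\ind\{t<\tau_{u,v}\}$. The event $\{t<\tau_{u,v}\}$ is $\gF_t$-measurable, so $M_k=\sum_{t=u}^{k-1}\alpha_t\widetilde Z_{t+1}$ is a martingale with differences bounded in absolute value by $\alpha_t$. On $\{\tau_{u,v}=\infty\}$, every $t\in[u,v)$ satisfies $t<\tau_{u,v}$, and summing the one-step bounds gives
\[
\Phi_\beta(\be^{(v)})\le\Phi_\beta(\be^{(u)})-c_{\mathrm g}A_{u,v}+M_v+\tfrac\beta2 V_{u,v}.
\]
Also, on this event $\Phi_\beta(\be^{(v)})\ge\|\be^{(v)}\|_\infty>r_{\mathrm{out}}/2$. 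Hence
\[
M_v\ge c_{\mathrm g}A_{u,v}-\bigl(\Phi_\beta(\be^{(u)})-r_{\mathrm{out}}/2\bigr)-\tfrac\beta2V_{u,v}.
\]

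The remaining ingredient is a deterministic bound $\Phi_\beta(\be^{(u)})-r_{\mathrm{out}}/2\le D_{\mathrm g}$. I will take $D_{\mathrm g}$ to be $(1-\gamma)^{-1}+\log(2|\gS|m)/\beta$ (or that minus $r_{\mathrm{out}}/2$). This needs the invariance $\bm\theta^{(t)}\in[0,(1-\gamma)^{-1}]^{\gS\times[m]}$, up to a slack absorbed into $D_{\mathrm g}$. I expect this to be the main obstacle. I would prove it by induction using $\alpha_tC_0\le1$ and the structure of the update. If $\theta(s,i)\le0$, then all sampled targets $r+\gamma\theta(s',j)\ge0$ are not strictly below $\theta(s,i)$, so the update moves upward by $\alpha_t\tau_i$. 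Near the upper end the mirror argument applies. Since steps have size at most $\alpha_t\le\alpha_0$, the iterate can overshoot the box by at most $\alpha_0$. I would either show exact invariance more carefully, or enlarge the box by $\alpha_0$ and check that $\theta_m$ and the bound on $\|\be\|_\infty$ still give a constant; the precise choice is set in Appendix~\ref{sec:explicit_proof_constants}. Since $\theta_m$ lies in the same box, $\|\be^{(u)}\|_\infty\le(1-\gamma)^{-1}$ (plus slack), and Lemma~\ref{lem:global_smooth_max} then gives the $\Phi_\beta$ bound.

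Finally, set $x=[c_{\mathrm g}A_{u,v}-D_{\mathrm g}-(\beta/2)V_{u,v}]_+$. If $x=0$ the claim is trivial. Otherwise, $\PB(\tau_{u,v}=\infty)\le\PB(M_v\ge x)\le\exp\{-x^2/(2V_{u,v})\}$ by Azuma--Hoeffding with ranges $[-\alpha_t,\alpha_t]$, which gives exactly the stated constant $2V_{u,v}$.
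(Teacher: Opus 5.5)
Your skeleton matches the paper's proof: the Taylor step with $\bu=\alpha_t\widehat\bh^{(t+1)}(\bm\theta^{(t)})$, the stopped martingale $M_k$, the lower bound on $M_v$ on $\{\tau_{u,v}=\infty\}$, and Azuma--Hoeffding with variance proxy $V_{u,v}$. The paper applies Azuma conditionally on $\gF_u$ with the realized budget $\Phi_\beta(\be^{(u)})-r_{\mathrm{out}}/2$ and only then replaces it by $D_{\mathrm g}$; your unconditional version is equivalent once that budget is bounded deterministically. The gap is in the step you flagged yourself, namely the almost-sure bound $\Phi_\beta(\be^{(u)})-r_{\mathrm{out}}/2\le D_{\mathrm g}$. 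Neither of your two proposed routes works.

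\emph{Exact invariance of $[0,(1-\gamma)^{-1}]$ is false.} Take one state, $m=1$, $\tau_1=1/2$, $\theta^{(0)}=0$, and steps $\alpha_0>\alpha_1>\alpha_2$ with $\alpha_1+\alpha_2>\alpha_0$. Then $\theta^{(1)}=\alpha_0/2$. With positive probability two small rewards give two downward moves, so $\theta^{(3)}=(\alpha_0-\alpha_1-\alpha_2)/2<0$.

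\emph{Enlarging the box by $\bar\alpha$ does not close the induction either.} Your base claim, that a coordinate $\theta(s,i)\le 0$ sees only targets $r+\gamma\theta(s',j)\ge0$, assumes all other coordinates are nonnegative. That is exactly what fails once any coordinate overshoots. Suppose coordinates are only known to lie in $[-\Delta',B+\Delta']$ with $B=(1-\gamma)^{-1}$. The update is forced upward only when $\theta(s,i)<-\gamma\Delta'$, and a coordinate just above $-\gamma\Delta'$ can still drop by nearly $\bar\alpha$. Closing the induction therefore needs $\gamma\Delta'+\bar\alpha\le\Delta'$, i.e.\ $\Delta'=\bar\alpha/(1-\gamma)$, not $\bar\alpha$. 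The upper side is symmetric, using $1+\gamma(B+\Delta')=B+\gamma\Delta'$.

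\emph{The fix.} This is the paper's Lemma~\ref{lem:pathwise_boundedness}. It gives $\norm{\be^{(t)}}_\infty\le(1+\bar\alpha)/(1-\gamma)$. Combined with the smooth-max upper bound, this yields
\[
\Phi_\beta(\be^{(u)})-\frac{r_{\mathrm{out}}}{2}\le\frac{1+\bar\alpha}{1-\gamma}-\frac{r_{\mathrm{out}}}{2}+\frac{\log(2|\gS|m)}{\beta}=D_{\mathrm g},
\]
which is exactly the appendix constant. Your candidate values of $D_{\mathrm g}$ omit the $\bar\alpha/(1-\gamma)$ excursion, so they are strictly smaller than the paper's and are not justified by your argument. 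The condition $\alpha_tC_0\le1$ plays no role in this step.
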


This bound makes the role of Stage~I explicit. 
The term
$c_{\mathrm g}A_{u,v}$ is the inward progress supplied by the global drift,
$D_{\mathrm g}$ bounds the initial distance in the smooth potential, and
$\beta V_{u,v}/2$ is the second-order cost of using that potential. 
Once we have $c_{\mathrm g}A_{u,v}
\ge D_{\mathrm g}+\frac\beta2V_{u,v}
+\sqrt{2V_{u,v}\log(1/\delta)}$, the
iterate enters the local ball by time $v$ with probability at least
$1-\delta$.
This is the moving-block point emphasized after
Corollary~\ref{cor:constant_polynomial}: no global summability condition on
squared steps is needed.

\subsection{Stage II: local errors contract and recent noise determines the rate}

\paragraph{Local linear structure.}
After entrance, a fixed drift towards the target is no longer enough to describe the rate: we must determine how quickly an earlier error is forgotten and how the noise from different iterations accumulates. 
This is where a local linearization of the mean field becomes useful.
Write $d_{s,i}=p_{(\gT^\pi\bm\eta_m)(s)}(\theta_m(s,i))$ for the Bellman-target density at a target quantile and set $\bD_m=\diag_{s,i}(d_{s,i})$.
We also introduce a nonnegative matrix $\bB_m$, which records how successor-quantile perturbations
change the current update and is given in
Appendix~\ref{sec:local_matrix_entries} due to space limits. The Jacobian of the mean field at \(\bm\theta_m\) then becomes
$\bG_m=\bD_m-\gamma\bB_m$.

\begin{lemma}[Local linear structure]
	\label{lem:jacobian_summarize}
	$\bh$ is continuously differentiable on the
	$r_{\mathrm{out}}$-neighborhood of $\bm\theta_m$, with
	$\nabla\bh(\bm\theta_m)=\bG_m$ and $\bB_m\bm1=\bD_m\bm1$.
	If $0\le\alpha C_0\le1$, where $\alpha$ denotes any step size, then
	$\bI-\alpha\bG_m\ge\bm0$ and
	$\norm{\bI-\alpha\bG_m}_\infty\le1-\alpha\mu$, where
	$\mu\coloneq c_\gM(1-\gamma)/(4m)$ is the local contraction rate.
	Moreover, whenever $\norm{\be}_\infty\le r_{\mathrm{out}}$,
	$\bh(\bm\theta_m+\be)=\bG_m\be+\bR(\be)$, where
	$\norm{\bR(\be)}_\infty\le
	L(1+\gamma)^2\norm{\be}_\infty^2/2$.
\end{lemma}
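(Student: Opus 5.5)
We prove Lemma~\ref{lem:jacobian_summarize} in three steps: differentiate $\bh$ near $\bm\theta_m$, read off the two matrix claims from the row structure of $\bG_m$, and obtain the remainder bound from the Lipschitz continuity of the densities.

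\textbf{Differentiability and the Jacobian.} Each coordinate $h_{s,i}(\bm\theta)$ is a finite average of terms $F_{s,a}(\theta(s,i)-\gamma\theta(s',j))$. Under Assumption~\ref{assump:density}, $F_{s,a}$ is $C^1$ with derivative $\widetilde p_{s,a}$ everywhere except possibly at the reward endpoints $0$ and $1$. In case~(ii) the zero extension is Lipschitz, so $F_{s,a}$ is $C^1$ on all of $\RB$; in case~(i) there may be a kink at $0$ and $1$. Assumption~\ref{Assumption_no_boundary_quantile} states that each argument $x_{s,i,s',j}=\theta_m(s,i)-\gamma\theta_m(s',j)$ with $P^\pi(s'\mid s)>0$ avoids $\{0,1\}$. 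Since there are finitely many such arguments, we can choose $r_{\mathrm{out}}$ so small that $(1+\gamma)r_{\mathrm{out}}$ is below their distance to $\{0,1\}$. If $\norm{\be}_\infty\le r_{\mathrm{out}}$, then every argument moves by at most $(1+\gamma)r_{\mathrm{out}}$ and stays on one side of the kinks. This gives continuous differentiability on the neighborhood. The chain rule then yields
\[
\partial_{\theta(s,i)}h_{s,i}=\tfrac1m\textstyle\sum_{a,s',j}\pi P\,p_{s,a}(x)
\]
plus a $\theta(s,i)$ contribution when $s'=s$ and $j=i$, together with $\partial_{\theta(s',j)}h_{s,i}=-\tfrac{\gamma}{m}\sum_a\pi P\,p_{s,a}(x)$. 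At $\bm\theta_m$, the diagonal "own" term equals the density of $(\gT^\pi\bm\eta_m)(s)$ at $\theta_m(s,i)$, which is $d_{s,i}$, because that CDF is exactly $\tfrac1m\sum\pi P F_{s,a}(z-\gamma\theta_m(s',j))$. Hence $\nabla\bh(\bm\theta_m)=\bD_m-\gamma\bB_m$, with $\bB_m\ge0$ collecting the successor terms. Summing row $(s,i)$ of $\bB_m$ gives the same expression as $d_{s,i}$, so $\bB_m\bm1=\bD_m\bm1$.

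\textbf{Nonnegativity and contraction of $\bI-\alpha\bG_m$.} The off-diagonal entries of $\bI-\alpha\bG_m$ are $\alpha\gamma(\bB_m)$ entries and hence nonnegative. The diagonal entry is $1-\alpha d_{s,i}+\alpha\gamma(\bB_m)_{(s,i),(s,i)}\ge 1-\alpha C_0\ge0$, because $d_{s,i}\le C_0$ as an average of densities bounded by $C_0$. For a nonnegative matrix the $\infty$-norm is the maximal row sum, which here is $1-\alpha d_{s,i}+\alpha\gamma d_{s,i}=1-\alpha(1-\gamma)d_{s,i}$.

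It therefore remains to show $d_{s,i}\ge c_\gM/(4m)$, and this is the main obstacle. Let $F=F_{(\gT^\pi\bm\eta_m)(s)}$. The definition \eqref{eq:cM_definition} gives $|F(\theta_m(s,i)+u)-\tau_i|\ge c_\gM\tau_i(1-\tau_i)|u|$. Because $\tau_i(1-\tau_i)\ge(2m-1)/(4m^2)\ge1/(4m)$, this is at least $c_\gM|u|/(4m)$. By the fixed-point property, $F(\theta_m(s,i))=\tau_i$ exactly: $F$ is continuous, and $\theta_m(s,i)$ is its $\tau_i$-quantile. Dividing by $|u|$ and letting $u\to0$ gives $d_{s,i}=F'(\theta_m(s,i))\ge c_\gM/(4m)$, using the differentiability established above. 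Consequently $\norm{\bI-\alpha\bG_m}_\infty\le1-\alpha\mu$.

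\textbf{Remainder.} For each row, write $h_{s,i}(\bm\theta_m+\be)-h_{s,i}(\bm\theta_m)$ as an average of $F_{s,a}(x+\Delta)-F_{s,a}(x)$ with $\Delta=e_{s,i}-\gamma e_{s',j}$, so $|\Delta|\le(1+\gamma)\norm{\be}_\infty$. The segment from $x$ to $x+\Delta$ avoids the kinks, and the density is $L$-Lipschitz there, so a first-order Taylor expansion gives
\[
|F(x+\Delta)-F(x)-p(x)\Delta|\le L\Delta^2/2\le L(1+\gamma)^2\norm{\be}_\infty^2/2.
\]
Averaging these bounds with the probability weights, whose total is $1$, and using $\bh(\bm\theta_m)=0$, yields $\norm{\bR(\be)}_\infty\le L(1+\gamma)^2\norm{\be}_\infty^2/2$.
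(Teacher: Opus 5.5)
Your proposal is correct and follows essentially the same route as the paper. It uses the boundary-margin argument for continuous differentiability (needed only under case~(i)), the row-sum identity $\bB_m\bm1=\bD_m\bm1$ from the Bellman-target density formula, the limit $u\to0$ in the definition of $c_\gM$ to get $d_{s,i}\ge c_\gM/(4m)$, and the Lipschitz first-order Taylor bound for the remainder. In the $c_\gM$ step, your explicit check that $F(\theta_m(s,i))=\tau_i$ fills a point the paper leaves implicit.

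One small wording point: $r_{\mathrm{out}}$ is already fixed in the paper, not something you get to choose. Its case-(i) definition already contains the term $\Delta_m/(2(1+\gamma))$, which gives exactly the margin $(1+\gamma)r_{\mathrm{out}}<\Delta_m$ your argument needs.
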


Lemma \ref{lem:jacobian_summarize} shows how the Jacobian $\bG_m$ controls the local dynamics.
The entrywise inequality $\bI-\alpha\bG_m\ge\bm0$ makes the propagation weights nonnegative, and the norm bound makes their total influence contract.
The quadratic remainder $\norm{\bR(\be)}_\infty$ is the small difference between the exact nonlinear
update and this linear approximation.

To track the accumulated contraction effect, we define the linear propagation matrices $\bPhi_{r,k}$ by
\begin{equation*}
  \bPhi_{r,k}
  =\textstyle\prod_{j=r}^{k-1}(\bI-\alpha_j\bG_m),\qquad
  \bPhi_{r,k}\ge\bm0,\qquad
  \norm{\bPhi_{r,k}}_\infty
  \le e^{-\mu\sum_{j=r}^{k-1}\alpha_j}.
\end{equation*}
The two bounds serve different purposes. Nonnegativity allows each row to be used as nonnegative weights below, while contraction makes the influence of older errors decay over time. The matrix $\bPhi_{r,k}$ therefore tracks how a perturbation introduced at time $r$ contributes to the error at time $k$.

\paragraph{Local error recursion.}
Let $n$ be an entrance time with
$\norm{\be^{(n)}}_\infty\le r_{\mathrm{out}}/2$, and let $\sigma$ be the first
subsequent exit from the $r_{\mathrm{out}}$-ball. Before $\sigma$, the update rule \eqref{eq:vector-update-rule} as well as Lemma \ref{lem:jacobian_summarize} gives
\begin{equation}
  \be^{(t+1)}=(\bI-\alpha_t\bG_m)\be^{(t)}
  -\alpha_t\bxi^{(t+1)}-\alpha_t\bR(\be^{(t)}).
  \label{eq:qtd_decomposition}
\end{equation}
For $n<k\le\sigma$, unrolling this recursion gives
\begin{equation}
\label{eq:error-recursion}
\be^{(k)}
=
\underbrace{
  \bPhi_{n,k}\be^{(n)}
}_{\text{entrance error}}
-\underbrace{
  {\textstyle\sum_{t=n}^{k-1}}
  \alpha_t\bPhi_{t+1,k}\bxi^{(t+1)}
}_{\text{sampling noise}}
-\underbrace{
  {\textstyle\sum_{t=n}^{k-1}}
  \alpha_t\bPhi_{t+1,k}\bR(\be^{(t)})
}_{\text{nonlinear remainder}}.
\end{equation}
Thus, \eqref{eq:error-recursion} summarizes Stage~II in one line: the error
present at entrance is damped (due to the bound for $\norm{\bPhi_{r,k}}_\infty$), new sampling noise is propagated through the
same dynamics, and curvature contributes a quadratic correction.
Writing $L_h=L(1+\gamma)^2/2$, contraction bounds the first term by
$e^{-\mu\sum_{j=n}^{k-1}\alpha_j}\norm{\be^{(n)}}_\infty$. The third is an
exponentially weighted sum of quadratic local errors because
$\norm{\bR(\be)}_\infty\le L_h\norm{\be}_\infty^2$, so it can be absorbed
locally. The main difficulty is the sampling-noise term.

\paragraph{Controlling the sampling-noise term.}
Fix a coordinate $(s,i)$ and times $n\le r<k\le T$. For $r\le t<k$, define
$\bv_t^\top=\bm e_{(s,i)}^\top\bPhi_{t+1,k}$ and
$u_t=\bv_t^\top\bm1$. Here, $\bm e_{(s,i)}$ selects the output coordinate,
$\bv_t$ describes how sampling noise at iteration $t$ propagates to coordinate $(s,i)$ at time $k$, and $u_t$ is its total weight.
Intuitively, bounding $\EB[(\bv_t^\top\bxi^{(t+1)})^2\mid\gF_t]$ through $u_t$ directly would lead to a loose analysis. 
Instead, we relate it to the actual one-step reduction $u_t-u_{t-1}$, whose sum over $t$ telescopes. The
next lemma introduces the variance--drift matching property, and establishes this relation by showing that the variance and weight reduction share the same density-weighted factor.




\begin{lemma}[Variance--drift matching]
\label{lem:variance_drift_matching}
Suppose $\norm{\bm\theta^{(t)}-\bm\theta_m}_\infty\le r_{\mathrm{out}}$.
For any nonnegative $\bv \ge 0$,
\begin{subequations}
\begin{align}
\bv^\top\bm1-\bv^\top(\bI-\alpha_t\bG_m)\bm1
&=(1-\gamma)\alpha_t\bv^\top\bD_m\bm1,
\label{eq:drift_matching}\\
\EB[(\bv^\top\bxi^{(t+1)})^2\mid\gF_t]
&\le 2c_\gM^{-1}(\bv^\top\bm1)(\bv^\top\bD_m\bm1).
\label{eq:variance_matching}
\end{align}
\end{subequations}
\end{lemma}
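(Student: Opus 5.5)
The drift identity \eqref{eq:drift_matching} is purely algebraic. The variance bound \eqref{eq:variance_matching} needs three ingredients: a coordinatewise conditional variance computation, Cauchy--Schwarz, and a comparison between the variance at $\bm\theta^{(t)}$ and the density at $\bm\theta_m$, which is where $c_\gM$ enters.

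\emph{Drift.} By definition $\bv^\top\bm1-\bv^\top(\bI-\alpha_t\bG_m)\bm1=\alpha_t\bv^\top\bG_m\bm1$. Lemma~\ref{lem:jacobian_summarize} gives $\bG_m\bm1=\bD_m\bm1-\gamma\bB_m\bm1=(1-\gamma)\bD_m\bm1$, since $\bB_m\bm1=\bD_m\bm1$. This yields \eqref{eq:drift_matching} directly; no use of $\bv\ge0$ is needed here.

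\emph{Variance: reduction to diagonal terms.} Only the $s$-th triplet enters $\xi^{(t+1)}_{s,\cdot}$, and triplets are independent across states given $\gF_t$. Hence the blocks $(\xi_{s,i})_i$ for different $s$ are conditionally independent and centered, and
\[
\EB[(\bv^\top\bxi^{(t+1)})^2\mid\gF_t]=\sum_s\EB\Bigl[\Bigl(\sum_i v_{s,i}\xi_{s,i}\Bigr)^2\Bigm|\gF_t\Bigr].
\]
Within a block, Cauchy--Schwarz with weights $v_{s,i}\ge0$ gives
\[
\Bigl(\sum_i v_{s,i}\xi_{s,i}\Bigr)^2\le\Bigl(\sum_i v_{s,i}\Bigr)\sum_i v_{s,i}\xi_{s,i}^2 .
\]
So it suffices to show $\var(\xi_{s,i}\mid\gF_t)\le 2c_\gM^{-1}d_{s,i}$. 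Summing then gives $\sum_s(\sum_iv_{s,i})\sum_iv_{s,i}\,2c_\gM^{-1}d_{s,i}\le 2c_\gM^{-1}(\bv^\top\bm1)(\bv^\top\bD_m\bm1)$, using $\sum_iv_{s,i}\le\bv^\top\bm1$.

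\emph{Variance: one coordinate.} The quantity $\widehat h_{s,i}+\tau_i$ is an average of indicators in $[0,1]$ with mean $p:=h_{s,i}(\bm\theta^{(t)})+\tau_i$. Therefore $\var\le\EB[X^2]-p^2\le p-p^2=p(1-p)$. It remains to show $p(1-p)\le 2c_\gM^{-1}d_{s,i}$ whenever $\norm{\be}_\infty\le r_{\mathrm{out}}$.

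At $\bm\theta_m$ we have $p=\tau_i$, so the target is $\tau_i(1-\tau_i)\le c_\gM^{-1}d_{s,i}$. This follows from the definition \eqref{eq:cM_definition}: dividing by $|u|$ and letting $u\to0$, the difference quotient of $F_{(\gT^\pi\bm\eta_m)(s)}$ at $\theta_m(s,i)$ tends to $d_{s,i}$. Hence $d_{s,i}\ge c_\gM\tau_i(1-\tau_i)$, using differentiability, which holds by Assumption~\ref{assump:density}.

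\emph{Main obstacle: the perturbation step.} We must pass from $\tau_i(1-\tau_i)$ to $p(1-p)$ and absorb the result into the factor $2$. Since $|p(1-p)-\tau_i(1-\tau_i)|\le|p-\tau_i|=|h_{s,i}(\bm\theta^{(t)})|$, and $\bh$ is Lipschitz with $|h_{s,i}|\le C_0(1+\gamma)\norm{\be}_\infty$, it would suffice to have $C_0(1+\gamma)r_{\mathrm{out}}\le c_\gM\tau_i(1-\tau_i)$. I expect $r_{\mathrm{out}}$ in Appendix~\ref{sec:explicit_proof_constants} to be chosen small enough for exactly this (of order $c_\gM/(mC_0)$, since $\tau_i(1-\tau_i)\ge 1/(4m)$). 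Then $p(1-p)\le 2\tau_i(1-\tau_i)\le 2c_\gM^{-1}d_{s,i}$.

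If $r_{\mathrm{out}}$ is not defined that way, I would instead use the quadratic expansion from Lemma~\ref{lem:jacobian_summarize}. That gives $|p-\tau_i|\le\norm{\bG_m\be}_\infty+L_h\norm{\be}_\infty^2$, and I would compare this with $\tau_i(1-\tau_i)$ via the local radius. This step is where the precise choice of $r_{\mathrm{out}}$ matters.
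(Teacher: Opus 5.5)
Your route is the paper's: the drift identity from $\bB_m\bm1=\bD_m\bm1$, the bound $\var\le p(1-p)$ for an average of indicators, the Lipschitz estimate $\abs{p-\tau_i}\le C_0(1+\gamma)\norm{\be}_\infty$, the lower bound $d_{s,i}\ge c_\gM\tau_i(1-\tau_i)$ obtained from $u\to0$ in the definition of $c_\gM$, and weighted Cauchy--Schwarz. Your splitting into conditionally independent state blocks is valid but unnecessary. Cauchy--Schwarz applied over all coordinates of $\bv$ already gives $(\bv^\top\bxi)^2\le(\bv^\top\bm1)\sum_{s,i}v_{s,i}\xi_{s,i}^2$, and this needs no independence.

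The step you flagged as the obstacle needs correcting, because the factor $c_\gM$ in your smallness requirement $C_0(1+\gamma)r_{\mathrm{out}}\le c_\gM\tau_i(1-\tau_i)$ is misplaced. The paper's $r_{\mathrm{out}}$ does not imply that condition in general. Its only $c_\gM$-dependent entry scales like $1/L$ rather than $1/C_0$, so the condition fails, for instance, when $L=0$ and $c_\gM<1/4$. Moreover, turning that condition into $p(1-p)\le2\tau_i(1-\tau_i)$ would also require $c_\gM\le1$, which you did not justify.

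What you actually need is $\abs{p-\tau_i}\lesssim\tau_i(1-\tau_i)$, with no $c_\gM$. The entry $1/(8mC_0(1+\gamma))$ of $r_{\mathrm{out}}$ supplies exactly this: $\abs{p-\tau_i}\le C_0(1+\gamma)r_{\mathrm{out}}\le 1/(8m)\le\frac12\tau_i(1-\tau_i)$, using $\tau_i(1-\tau_i)\ge1/(4m)$. Hence $p(1-p)\le\tau_i(1-\tau_i)+\abs{p-\tau_i}\le\frac32\tau_i(1-\tau_i)\le\frac32c_\gM^{-1}d_{s,i}$, which is within the claimed $2c_\gM^{-1}d_{s,i}$. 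The constant $c_\gM$ should enter only through $d_{s,i}\ge c_\gM\tau_i(1-\tau_i)$.

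Your fallback via the quadratic expansion is not needed, and it would not help. The bound $\norm{\bG_m\be}_\infty\le(1+\gamma)C_0\norm{\be}_\infty$ is of the same order as the Lipschitz bound you already have.
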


The common factor $\bv^\top\bD_m\bm1$ is the key. It determines both the one-step loss of propagation weight in \eqref{eq:drift_matching} and the conditional noise variance in \eqref{eq:variance_matching}.
Consequently, a direction $\bv_t$ with smaller restoring drift (i.e., $u_t-u_{t-1}$) would contribute less conditional stochastic variation in $\bv_t^\top\bxi^{(t+1)}$ given $\mathcal{F}_t$.
Indeed, applying the lemma with $\bv=\bv_t$ and using step-size monotonicity gives
\begin{equation*}
\sum\nolimits_{t=r}^{k-1}\alpha_t^2
\EB[(\bv_t^\top\bxi^{(t+1)})^2\mid\gF_t]
\le
\bigl(2\alpha_r/[c_\gM(1-\gamma)]\bigr)
\sum\nolimits_{t=r}^{k-1}u_t(u_t-u_{t-1})
\le
2\alpha_r/[c_\gM(1-\gamma)],
\end{equation*}
where the last inequality follows from $0\le u_t\le1$ and the telescoping of
$u_t-u_{t-1}$. Thus, the accumulated variance is controlled by the actual
loss of propagation weight.


The argument above controls one coordinate $(s, i)$ and one time interval $[r, k)$.
Lemma~\ref{lem:uniform_linearized_martingale} in the appendix and a union bound
give, conditionally on $\gF_n$ with probability at least $1-\delta$,
\begin{equation}
\label{eq:highprob-event}
  \max\nolimits_{n\le r<k\le T}
  \bigl\|\sum\nolimits_{t=r}^{k-1}\alpha_t\bPhi_{t+1,k}
  \ind\{t<\sigma\}\bxi^{(t+1)}\bigr\|_\infty
  \le b_{n,T}(\delta).
\end{equation}
The appendix gives the explicit logarithmic factors in $b_{n,T}(\delta)$, whose leading scale is $\sqrt{\alpha_n/[c_\gM(1-\gamma)]}$, with a lower-order term linear in $\alpha_n$.
The indicator halts the sum at a possible exit, ensuring that the local assumptions hold. For $k\le\sigma$, it equals one throughout the sum, which is then exactly the sampling-noise term in \eqref{eq:error-recursion}.



\paragraph{Staying inside the local ball.}


We now show that the uniform noise bound prevents the iterate from leaving
the local region. Stage~I reaches the ball of radius
$r_{\mathrm{out}}/2$, while the local expansion remains valid up to radius
$r_{\mathrm{out}}$. On the event \eqref{eq:highprob-event}, if the first exit
occurred at some $\sigma\le T$, then \eqref{eq:error-recursion} with
$k=\sigma$, together with contraction and the quadratic remainder bound in Lemma \ref{lem:jacobian_summarize},
would imply $\|\be^{(\sigma)}\|_\infty<r_{\mathrm{out}}$, contradicting the definition of
$\sigma$. The next lemma formalizes this argument.

\begin{lemma}[Local capture]
\label{lem:variance_sensitive_local}
Let $n\le T$ be an $(\gF_t)$-stopping time satisfying
$\norm{\bm\theta^{(n)}-\bm\theta_m}_\infty\le r_{\mathrm{out}}/2$.
If $b_{n,T}(\delta)\le\min\{r_{\mathrm{out}}/16,\mu/(64L_h)\}$, where
$L_h=L(1+\gamma)^2/2$ and
$\mu/L_h=+\infty$ when $L_h=0$, then,
conditionally on $\gF_n$, with probability at least
$1-\delta$, the iterates remain in the $r_{\mathrm{out}}$-neighborhood
through time $T$, and
\begin{equation}
\label{eq:local-iterate}
\|\bm\theta^{(T)}-\bm\theta_m\|_\infty
\le \|\bm\theta^{(n)}-\bm\theta_m\|_\infty
e^{-(15/16)\mu\sum_{t=n}^{T-1}\alpha_t}
+b_{n,T}(\delta)+(32L_h/(15\mu))b_{n,T}^2(\delta).
\end{equation}
\end{lemma}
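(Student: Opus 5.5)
We work on the event $\mathcal E$ of \eqref{eq:highprob-event}, which holds with conditional probability at least $1-\delta$ given $\gF_n$. Write $b=b_{n,T}(\delta)$, $\epsilon_k=\norm{\be^{(k)}}_\infty$, and let $\sigma$ be the first time after $n$ at which $\epsilon_k>r_{\mathrm{out}}$. The proof has three steps: a pathwise bound on $\epsilon_k$ valid up to $\sigma$, a bootstrap showing $\sigma>T$, and a final evaluation at $k=T$. Since \eqref{eq:highprob-event} already supplies the probability bound, no further probabilistic input is needed.

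\textbf{Step 1: pathwise bound.} For $n<k\le\min(\sigma,T)$, the iterates $\be^{(t)}$ with $n\le t<k$ lie in the $r_{\mathrm{out}}$-ball. Hence \eqref{eq:error-recursion} holds by Lemma~\ref{lem:jacobian_summarize}, and the indicator in \eqref{eq:highprob-event} equals one for these $t$. The noise term is therefore at most $b$ (take $r=n$). The propagation bound $\norm{\bPhi_{t+1,k}}_\infty\le e^{-\mu A_{t+1,k}}$ and the remainder bound $\norm{\bR(\be)}_\infty\le L_h\norm{\be}_\infty^2$ then give
\begin{equation*}
\epsilon_k\le e^{-\mu A_{n,k}}\epsilon_n+b+L_h\sum\nolimits_{t=n}^{k-1}\alpha_t e^{-\mu A_{t+1,k}}\epsilon_t^2 .
\end{equation*}

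\textbf{Step 2: bootstrap.} Define the candidate envelope $M_k=e^{-(15/16)\mu A_{n,k}}\epsilon_n+\rho$, with the constant $\rho=b+(32L_h/(15\mu))b^2$. We prove $\epsilon_k\le M_k$ by strong induction on $k$, for $k\le\min(\sigma,T)$. Suppose $\epsilon_t\le M_t$ for all $t<k$. Use $(x+y)^2\le 2x^2+2y^2$ to split $\epsilon_t^2$ and bound the two resulting sums separately.

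The constant part contributes $2L_h\rho^2\sum\nolimits_t\alpha_te^{-\mu A_{t+1,k}}$. Since $\alpha_t\le 1/C_0$, we have $e^{\mu\alpha_t}$ bounded by an absolute constant, so this sum is at most roughly $(1/\mu)(1+o(1))$. The constant part is therefore about $2L_h\rho^2/\mu$. The capture condition $b\le\mu/(64L_h)$ gives $\rho\le 2b$, so this term is $O(L_h b^2/\mu)$. That is where the $32/15$ coefficient should come from, after tuning how the square is split: for example, $(x+y)^2\le(1+\eta)x^2+(1+1/\eta)y^2$ gives sharper constants.

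The transient part contributes $2L_h\epsilon_n\cdot\epsilon_n\sum_t\alpha_te^{-\mu A_{t+1,k}}e^{-(15/8)\mu A_{n,t}}$. Write $e^{-\mu A_{t+1,k}}e^{-(15/8)\mu A_{n,t}}\le e^{-(15/16)\mu A_{n,k}}e^{-(1/16)\mu A_{t+1,k}}e^{-(15/16)\mu A_{n,t}}\cdot e^{\mu\alpha_t}$. The leftover weighted sum is then $O(1/\mu)$. So the transient part is at most $C L_h\epsilon_n/\mu\cdot e^{-(15/16)\mu A_{n,k}}\epsilon_n$, which we want to absorb into the missing $1-e^{-(1/16)\mu A}$ slack of the first term.

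\textbf{Main obstacle.} This absorption is the delicate point. It needs $L_h\epsilon_n/\mu$ to be small, i.e.\ $r_{\mathrm{out}}\lesssim\mu/L_h$, which should be built into the explicit definition of $r_{\mathrm{out}}$ in Appendix~\ref{sec:explicit_proof_constants}. If that fails, we instead run the comparison directly on the recursion $\epsilon_{t+1}\le(1-\alpha_t\mu)\epsilon_t+\alpha_tL_h\epsilon_t^2+(\text{noise increments})$. For $\epsilon_t\le r_{\mathrm{out}}$ with $L_hr_{\mathrm{out}}\le\mu/16$, the local rate is at least $(15/16)\mu$. This is exactly where the factor $15/16$ arises, and so this second route is likely the intended one.

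\textbf{Step 3: no exit and conclusion.} With the envelope in hand, $\epsilon_k\le r_{\mathrm{out}}/2+\rho\le r_{\mathrm{out}}/2+r_{\mathrm{out}}/8<r_{\mathrm{out}}$, using $b\le r_{\mathrm{out}}/16$ and $\rho\le 2b$. So $\sigma>T$, the recursion holds through time $T$, and evaluating the envelope at $k=T$ gives \eqref{eq:local-iterate}.
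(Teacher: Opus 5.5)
Your Steps 1 and 3 are fine, but Step 2 is the entire content of the lemma and you do not carry it out. Neither route you sketch closes as stated.

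\textbf{The first route.} You bound the leftover weighted sum by $O(1/\mu)$, so the transient contribution becomes a fixed multiple $CL_h\epsilon_n/\mu$ of $e^{-(15/16)\mu A_{n,k}}\epsilon_n$. The slack $e^{-(15/16)\mu A_{n,k}}\epsilon_n(1-e^{-\mu A_{n,k}/16})$ into which you want to absorb it tends to zero with $A_{n,k}$; at $k=n+1$ it is at most $\mu\alpha_n\epsilon_n/16$. No smallness of $L_hr_{\mathrm{out}}/\mu$ repairs this.

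The constants are also tight, since $r_{\mathrm{out}}$ only guarantees $L_hr_{\mathrm{out}}\le\mu/16$, so your losses are not harmless:
\begin{itemize}
\item Replacing $\prod_{j=t+1}^{k-1}(1-\mu\alpha_j)$ by $e^{-\mu A_{t+1,k}}$ inflates the weighted sums by a constant factor, not by $1+o(1)$, because only $\mu\bar\alpha\le1$ is available. For constant steps, $\sum_t\alpha e^{-\mu\alpha(k-1-t)}$ approaches $\alpha/(1-e^{-\mu\alpha})>1/\mu$.
\item Feeding the inflated $\rho$ into the square via $(x+\rho)^2\le2x^2+2\rho^2$ breaks self-consistency. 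Even with the product bound on $\norm{\bPhi_{t+1,k}}_\infty$ and the sharp $\rho\le\frac{31}{30}b$, the constant part tends to $2L_h\rho^2/\mu$ for large $A_{n,k}$. In the worst case $b=\mu/(64L_h)$ this exceeds the budget $\rho-b=\frac{32L_h}{15\mu}b^2$, because $2(\frac{31}{30})^2>\frac{32}{15}$.
\end{itemize}

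\textbf{The fallback route.} It fails for a more basic reason: the noise cannot enter a scalar recursion for $\epsilon_t=\norm{\be^{(t)}}_\infty$ increment by increment. The event \eqref{eq:highprob-event} controls only the propagated vector sums. Once norms are taken, the per-step terms $\alpha_t\norm{\bxi^{(t+1)}}_\infty$ have no cancellation and can accumulate to order $1/\mu$, far above $r_{\mathrm{out}}$. This gives neither the envelope $b$ nor capture.

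\textbf{The missing idea.} This is how the paper argues (Lemma~\ref{lem:nonlinear_epoch_bootstrap}): subtract the propagated noise as a vector \emph{before} taking norms.
\begin{itemize}
\item Let $\bw^{(k)}=-\sum_{t=n}^{k-1}\alpha_t\bPhi_{t+1,k}\ind\{t<\sigma\}\bxi^{(t+1)}$. Then $\bw^{(n)}=\bm0$, $\bw^{(t+1)}=(\bI-\alpha_t\bG_m)\bw^{(t)}-\alpha_t\ind\{t<\sigma\}\bxi^{(t+1)}$, and $\norm{\bw^{(k)}}_\infty\le b$ on the event.
\item For $t<\sigma$, the residual $y_t=\norm{\be^{(t)}-\bw^{(t)}}_\infty$ obeys the noise-free recursion $y_{t+1}\le(1-\mu\alpha_t)y_t+L_h\alpha_t(y_t+b)^2$.
\item If $y_t\le r_{\mathrm{out}}/2$, then $L_h(y_t+b)^2\le2L_hy_t^2+2L_hb^2\le\frac{\mu}{16}y_t+2L_hb^2$, and $2L_hb^2\le\mu r_{\mathrm{out}}/512$. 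Hence $y_{t+1}\le(1-\frac{15}{16}\mu\alpha_t)y_t+2L_h\alpha_tb^2\le r_{\mathrm{out}}/2$.
\item By induction, $\norm{\be^{(k)}}_\infty\le y_k+b\le9r_{\mathrm{out}}/16$ up to $\sigma\wedge T$, so $\sigma>T$.
\item Unrolling with $\sum_t\alpha_t\prod_{\ell>t}(1-\frac{15}{16}\mu\alpha_\ell)\le16/(15\mu)$ and $y_n=\norm{\be^{(n)}}_\infty$ gives exactly \eqref{eq:local-iterate}.
\end{itemize}
Everything other than $\bw$ (the transient and the accumulated quadratic correction) is absorbed into the contraction via $2L_hy_t^2\le\frac{\mu}{16}y_t$. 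Only the bare envelope $b$ remains as forcing. That is precisely what your envelope $M_k$, with $\rho$ inside the square, fails to do.
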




This completes Stage~II. The three terms on the right-hand side of \eqref{eq:local-iterate} are the contracted entrance error, the accumulated sampling noise $b_{n,T}(\delta)$, and the quadratic nonlinear correction of order $b_{n,T}^2(\delta)$, respectively. Once
$b_{n,T}(\delta)$ is sufficiently small, the nonlinear correction is lower order, and the first-exit argument ensures that the iterates remain in the local region.

\subsection{Completing the proof of Theorem~\ref{thm:global_nonasymptotic}}

Now, we are ready for the final proof.
Set $u=\lfloor T/4\rfloor$ and $v=\lfloor T/2\rfloor$, where the first quarter is a warm-up, the next quarter is the entrance window, and the final half is the local window. 
The entrance condition~\eqref{eq:entrance_condition} gives an entrance time
$n\in[u,v]$ with failure probability at most $\delta/2$.
Step-size monotonicity gives
$b_{n,T}(\delta/2)\le b_T(\delta)$, so the capture
condition~\eqref{eq:capture_condition} allows us to apply
Lemma~\ref{lem:variance_sensitive_local} from time $n$ onward.
Step-size monotonicity further bounds the accumulated local step size from below by
$\sum_{t=n}^{T-1}\alpha_t \ge A_T^{\mathrm{loc}}$, and the quadratic remainder is absorbed into the
noise term.
 Consequently, for some constant $C>0$, we have $\norm{\be^{(T)}}_\infty\le
C\exp\{-C^{-1}\mu A_T^{\mathrm{loc}}\}+Cb_T(\delta)$.
Finally, a union bound yields \eqref{eq:generic_hp_bound} with probability
at least $1-\delta$, completing the proof.
 The explicit constants and remaining details, as well as the proofs of the corollaries, are given in Appendix~\ref{sec:proof_stepsize_specializations}.

%% file: 5_simulations.tex
\section{Numerical Verification}
\label{Section:simulations}

\begin{figure}[t!]
  \centering
  \includegraphics[width=\textwidth]{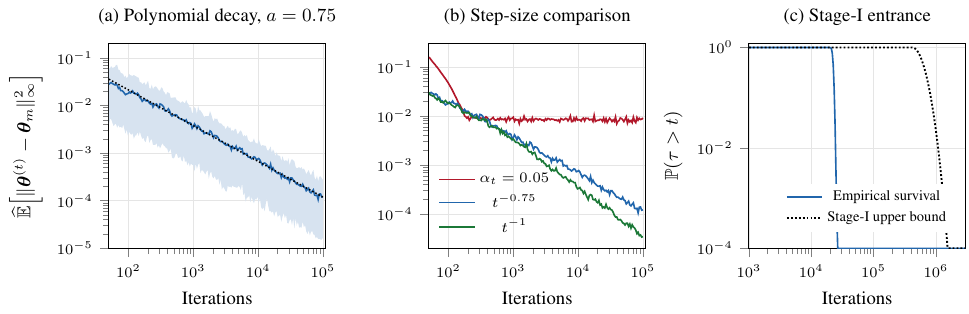}
  \caption{Numerical verification of the finite-time theory. (a) Mean squared
  sup-norm error for $a=0.75$, with the 10th--90th percentile band and a
  fixed-slope reference. (b) Comparison of the three step-size classes.
  (c) Empirical Stage-I non-entrance probability and the bound from
  Lemma~\ref{lem:finite_time_entrance}.}
  \label{fig:qtd_simulation_summary}
\end{figure}

Figure~\ref{fig:qtd_simulation_summary} provides three numerical checks of our
finite-time theory.  Panels~(a)--(b) use a one-state, one-action MDP with
uniform rewards, $\gamma=0.5$, $m=7$, and 200 independent trajectories up to
$T=10^5$.  For $\alpha_t=4/(t+20)^{0.75}$, Panel~(a) compares the mean squared
error with a reference line whose slope is fixed at the predicted value
$-0.75$; the two curves are nearly parallel.  Panel~(b) compares constant,
polynomial, and harmonic step sizes.  The constant schedule reaches a noise
floor, whereas the decreasing schedules continue to improve, with the
harmonic schedule performing best at long horizons, consistently with its
$T^{-1/2}$ raw-error rate.
Panel~(c) checks the Stage-I entrance bound in
Lemma~\ref{lem:finite_time_entrance}.  
The empirical non-entrance probability
stays below the initialization-specific upper bound; the gap reflects the conservativeness of the theoretical bounds.
Appendix~\ref{sec:additional_simulations} gives the full protocol, additional
polynomial exponents, and reproducibility details.


%% file: 6_conclusion.tex
\section{Conclusion}

We have established a global high-probability last-iterate bound for synchronous
tabular QTD under general nonincreasing step sizes. Constant steps forget the
initialization geometrically before reaching a noise floor, polynomial decay
gives the rate $T^{-a/2}$ up to logarithmic factors, and a tuned harmonic
schedule recovers the canonical $T^{-1/2}$ rate.
In particular, the tuned harmonic schedule yields a $W_\infty$ error scaling as $\sqrt{m/T}$ up to logarithmic factors in $m$ and $T$, matching the model-based statistical rate. 
The proof cleanly separates global and local roles: order
monotonicity and Bellman contraction localize the iterates, after which a
positive, contractive linearization and variance--drift matching yield the
sharp stochastic fluctuation.

%% file: tex/S_supplementary_related_work.tex
\section{Supplementary Background and Related Work}
\label{Appendix_related_work}

\paragraph{Nonsmooth SGD and QTD.}
For a fixed scalar response $Y$, a $\tau$-quantile minimizes the expected
pinball loss
\begin{equation*}
  q\longmapsto \EB\bigl[\rho_\tau(Y-q)\bigr],
  \qquad
  \rho_\tau(u)=u\bigl(\tau-\ind\{u<0\}\bigr).
\end{equation*}
A sample subgradient with respect to $q$ is $\ind\{Y<q\}-\tau$; under our
continuous-density assumptions, averaging gives the CDF mismatch
$F_Y(q)-\tau$. QTD steps in the negative of this direction. Nonsmooth SGD
theory provides averaged- and last-iterate
guarantees for fixed convex objectives under several regimes
\citep{shamir2013stochastic,harvey2019tight}.

The important difference is that QTD does not repeatedly sample from one
fixed response distribution. To make this distinction explicit, freeze a
quantile array $\bm\vartheta$ and, for state $s$, let
\begin{equation*}
  Y_{\bm\vartheta}(s)=R+\gamma\vartheta(S',J),
  \qquad J\sim\operatorname{Unif}([m]),
\end{equation*}
where the action, reward, and next state are sampled according to the policy
and MDP, and $J$ is independent of this transition. For fixed
$\bm\vartheta$, averaging over $j$ is the conditional expectation over $J$,
so the mean increment at coordinate $(s,i)$ is the negative of a subgradient
in $q$ of
\begin{equation*}
  L_{s,i}(q;\bm\vartheta)
  =\EB\bigl[\rho_{\tau_i}(Y_{\bm\vartheta}(s)-q)\bigr].
\end{equation*}
The cited nonsmooth-SGD analyses repeatedly use noisy subgradients of one fixed
convex objective. QTD instead takes this negative-subgradient step at
$q=\theta(s,i)$ while simultaneously setting $\bm\vartheta=\bm\theta$ and
updating the full array. It varies the prediction coordinate while treating
the target as fixed, even though the target distribution was generated by the
current array. This semi-gradient recursion is generally not SGD on one fixed
convex objective, so fixed-objective bounds do not directly control distance
to the coupled quantile-projected Bellman fixed point. Our global localization
and local stability arguments handle this feedback.

\paragraph{Classical quantile estimation and regression.}
Quantiles admit two complementary descriptions. Minimizers of expected
pinball loss satisfy $F(q^-)\le\tau\le F(q)$; at a regular quantile with a
continuous, strictly increasing CDF, this reduces to $F(q)-\tau=0$. The loss
view underlies regression quantiles and modern quantile regression
\citep{koenker1978regression,koenker2005quantile}. The estimating-equation view
underlies the local expansions used in quantile asymptotics. Bahadur and Kiefer
representations quantify how empirical CDF fluctuations translate into
sample-quantile errors \citep{bahadur1966note,kiefer1967bahadur}. Knight's
identity \citep{knight1998limiting} is a standard tool for expanding the
nonsmooth quantile-regression objective: it separates a pinball-loss difference
into a linear score and an integral remainder.

The same conversion is visible in our local analysis. A positive density near
a target quantile turns a small location error into a proportional CDF error,
which supplies the restoring drift of the QTD recursion. Unlike ordinary
quantile regression, however, that CDF is a Bellman-target CDF assembled from
all successor-state quantiles. The local derivative is therefore a coupled
matrix rather than a collection of independent scalar slopes.

\paragraph{Recursive quantile estimation and stochastic approximation.}
The classical Robbins--Monro procedure estimates the solution of a mean
equation from noisy observations and provides the basic stochastic
approximation template for online quantile estimation
\citep{robbins1951stochastic}. Recursive procedures tailored to quantiles
include the space-efficient estimator of \citet{tierney1983space}; more recent
work gives nonasymptotic confidence bounds for recursive quantile estimates
\citep{chen2023recursive}. General stochastic approximation theory studies
stability, limiting dynamics, and averaged iterates under broad conditions
\citep{kushner2003stochastic,benaim2006dynamics,polyak1992acceleration}.
These results provide scalar and asymptotic analogues for QTD. Ordinary
recursive quantile estimation samples repeatedly from a fixed response
distribution, whereas QTD changes its Bellman-target distribution with the
current array. We additionally need global finite-time control of this coupled
recursion and a high-probability last-iterate rate. Our two stages separate
these tasks: global drift establishes entrance, and variance-sensitive local
analysis gives the rate afterward.

Most directly, \citet{rowland2023analysis} established almost-sure asymptotic
convergence of tabular QTD, while \citet{cheng2026online} developed functional
limit theorems and online inference for QTD. Our question is complementary: we
seek a global high-probability finite-time bound for the last iterate from
arbitrary initialization and identify the variance-sensitive mechanism that
governs its local fluctuations.

\paragraph{Quantile representations in distributional reinforcement learning.}
Quantile regression distributional RL introduced a fixed grid of learned
quantile locations as an alternative to categorical value distributions
\citep{dabney2018distributional}. Subsequent work developed implicit quantile
networks, fully parameterized quantile functions, and non-crossing
architectures for more flexible representations
\citep{dabney2018implicit,yang2019fully,zhou2020noncrossing}. The broader link
between distributional statistics, finite collections of learned statistics,
and imputation strategies, which reconstruct a distribution from those
statistics, was studied by \citet{rowland2019statistics}. Much of this line
emphasizes richer representations and function-approximation algorithms. Our
fixed-grid tabular setting lets us isolate how Bellman coupling, global
localization, and variance--drift matching interact in finite time.
Complementary finite-sample analyses cover nonparametric and categorical
distributional TD, including linear function approximation
\citep{peng2024statistical,peng2025finite}; these representations have update
structures different from the coupled quantile-score recursion studied here.

For the same quantile-projected target, \citet[Theorems~3.1--3.2]{cheng2026statistical}
give a model-based $\widetilde O(\sqrt{m/n})$ $\overline W_\infty$
estimation bound and a modulus-of-continuity approximation bound to the
exact return law. Their estimator forms an empirical model from $n$
independent samples per state--action pair, or $n|\gS||\gA|$ samples in
total. Our synchronous QTD uses one fresh transition per state per iteration,
or $T|\gS|$ samples over $T$ iterations. With gain parameter
$c\asymp\mu^{-1}$, under the burn-in condition, Corollary~\ref{cor:harmonic_stepsize} matches their leading
polynomial dependence on quantile resolution and the respective sample
counts in the statistical rate. 

More explicitly, their approximation bound is
\begin{equation*}
 \overline W_\infty(\bm\eta_m,\bm\eta^\pi)
 \le\frac1{1-\gamma}\max_{s\in\gS}\omega_s\!\left(\frac1{2m}\right),
 \qquad
 \omega_s(\rho)=\sup_{\substack{u,v\in(0,1)\\|u-v|\le\rho}}
 \left|F_{\eta^\pi(s)}^{-1}(u)-F_{\eta^\pi(s)}^{-1}(v)\right|.
\end{equation*}
Under the reward assumptions here, the return quantile functions extend
continuously to the endpoints and hence are uniformly continuous. Therefore
$\max_s\omega_s(1/(2m))\to0$. Combining this existing approximation bound
with our statistical rates bounds the total
$\overline W_\infty$ error to $\bm\eta^\pi$.

%% file: tex/C_additional_preliminaries.tex
This appendix records background definitions that are useful for locating the
paper within distributional reinforcement learning but are not needed to
state the main theorem.

Let $\sP$ be the space of probability measures on $\RB$.  For
$p\in[1,\infty)$ and $\mu,\nu\in\sP$ with finite $p$th moments, their
$p$-Wasserstein distance is
\begin{equation*}
  W_p(\mu,\nu)
  =\left(\int_0^1
  \abs{F_\mu^{-1}(u)-F_\nu^{-1}(u)}^p\,\rd u\right)^{1/p}.
\end{equation*}
For any (possibly extended) metric $d$ on $\sP$, its statewise supremum
extension is
\begin{equation*}
  \overline d(\bm\mu,\bm\nu)=\sup_{s\in\gS}d(\mu(s),\nu(s)).
\end{equation*}
The main text uses the case $d=W_\infty$ because both the distributional
Bellman operator and the midpoint quantile projection are stable in that
metric.

For a measurable map $g:\RB\to\RB$, the pushforward $g_\#\mu$ is defined by
$(g_\#\mu)(A)=\mu(g^{-1}(A))$ for every Borel set $A$.  Therefore, the
reward-mixture term in the distributional Bellman operator is interpreted as
the measure satisfying
\begin{equation*}
  \left[\int (b_{r,\gamma})_\#\eta(s')\,
  \gP_R(\rd r\mid s,a)\right](A)
  =\int (b_{r,\gamma})_\#\eta(s')(A)\,
  \gP_R(\rd r\mid s,a).
\end{equation*}
This identity makes explicit that the integral in
Equation~\eqref{eq:distributional_bellman} is a mixture of probability
measures rather than an ordinary scalar integral.

%% file: tex/A_omitted_proofs.tex
\subsection{Explicit constants for the main results}
\label{sec:explicit_proof_constants}

This subsection records the proof-level constants suppressed from
Sections~\ref{Section:analysis} and~\ref{Section:proof_sketch}.  First define the distance between the
fixed-point density arguments and the endpoints of the reward support by
\begin{equation*}
  \Delta_m=\min_{\substack{P^\pi(s'\mid s)>0\\ i,j\in[m]}}
  \min\left\{
  \abs{\theta_m(s,i)-\gamma\theta_m(s',j)},
  \abs{\theta_m(s,i)-\gamma\theta_m(s',j)-1}
  \right\}>0,
  \label{eq:boundary_margin}
\end{equation*}
where $P^\pi(s^\prime\mid s)=\sum_{a\in\gA}\pi(a\mid s)P(s^\prime\mid s,a)$. Set
\begin{equation*}
  r_{\mathrm{out}}=
  \begin{cases}
  \displaystyle
  \min\left\{
  \frac{\Delta_m}{2(1+\gamma)},
  \frac{1}{8mC_0(1+\gamma)},
  \frac{c_\gM(1-\gamma)}{32mL(1+\gamma)^2}
  \right\}, & \text{under Assumption~\ref{assump:density}(i)},\\[1.2em]
  \displaystyle
  \min\left\{
  \frac{1}{8mC_0(1+\gamma)},
  \frac{c_\gM(1-\gamma)}{32mL(1+\gamma)^2}
  \right\}, & \text{under Assumption~\ref{assump:density}(ii)},
  \end{cases}
\end{equation*}
where a fraction with denominator $L=0$ is interpreted as $+\infty$.
Define the local contraction and nonlinear-remainder scales by
\begin{equation*}
  \mu=\frac{c_\gM(1-\gamma)}{4m},
  \qquad
  L_h=\frac{L(1+\gamma)^2}{2},
\end{equation*}
Since the step sizes are nonincreasing, write
$\bar\alpha=\sup_{t\ge0}\alpha_t=\alpha_0$ for their largest value.  Define
the global-drift constants by
\begin{equation*}
\label{eq:global_smooth_constants}
\begin{aligned}
  c_{\mathrm g}
  &=\frac{c_\gM\tau_1(1-\tau_1)(1-\gamma)r_{\mathrm{out}}}{8},\\
  \beta
  &=\frac{4}{(1-\gamma)r_{\mathrm{out}}}
  \log\left[\frac{2|\gS|m(1+2c_{\mathrm g})}{c_{\mathrm g}}\right],\\
  D_{\mathrm g}
  &=\frac{1+\bar\alpha}{1-\gamma}-\frac{r_{\mathrm{out}}}{2}
  +\frac{\log(2|\gS|m)}{\beta}.
\end{aligned}
\end{equation*}
These global-drift constants are used only in the entrance condition.
Local capture instead requires
$b_T(\delta)\le\min\{r_{\mathrm{out}}/16,\mu/(64L_h)\}$,
where $\mu/L_h=+\infty$ when $L_h=0$. For use in the proofs below, also
write $u_T=\lfloor T/4\rfloor$ and $v_T=\lfloor T/2\rfloor$, and define
\begin{equation*}
  A_T^{\mathrm{ent}}=\sum_{t=u_T}^{v_T-1}\alpha_t,
  \qquad
  V_T^{\mathrm{ent}}=\sum_{t=u_T}^{v_T-1}\alpha_t^2,
  \qquad
  A_T^{\mathrm{loc}}=\sum_{t=v_T}^{T-1}\alpha_t.
\end{equation*}
Also define
\begin{equation*}
  \Gamma_T
  =c_{\mathrm g}A_T^{\mathrm{ent}}-D_{\mathrm g}
  -\frac\beta2V_T^{\mathrm{ent}},
  \qquad
  b_T(\delta)
  =2\sqrt{\frac{\alpha_{u_T}\ell_T(\delta)}
  {c_\gM(1-\gamma)}}
  +\frac{\alpha_{u_T}\ell_T(\delta)}3.
\end{equation*}
For Corollary~\ref{cor:harmonic_stepsize}, one admissible universal choice is
\begin{equation*}
  C_{\mathrm{harm}}
  =\max\left\{8,\frac{16\log4}{15\log(4/3)}\right\}.
\end{equation*}
Then $c\ge C_{\mathrm{harm}}/\mu$ ensures both
$\exp\{-15\mu c\log(4/3)/16\}\le1/4$ and
$\mu c/6\ge4/3$. The latter inequality is used only in the sharper
harmonic entrance argument.

\subsection{Proof of Lemma~\ref{lem:global_smooth_max}: Global drift}
\begin{proof}
Put $V\coloneq\norm{\be}_\infty$ for the largest coordinate error. The bounds in
the standard log-sum-exp inequalities give
\begin{equation*}
 \norm{\be}_\infty\le\Phi_\beta(\be)
 \le\norm{\be}_\infty+\frac{\log(2|\gS|m)}{\beta}.
\end{equation*}
For
$(s,i)\in\gS\times[m]$ and $\varsigma\in\{-1,1\}$, define the softmax weight
\begin{equation*}
 w_{s,i,\varsigma}(\be)
 \coloneq
 \frac{\exp(\beta\varsigma e_{s,i})}
 {\sum_{(s',j)\in\gS\times[m]}
 \{\exp(\beta e_{s',j})+\exp(-\beta e_{s',j})\}}.
\end{equation*}
If $X$ is the auxiliary random signed basis vector that equals
$\varsigma\bm e_{s,i}$ with probability $w_{s,i,\varsigma}(\be)$, then
\begin{equation*}
 \nabla\Phi_\beta(\be)=\EB[X],
 \qquad
 \nabla^2\Phi_\beta(\be)=\beta\cov(X).
\end{equation*}
Since $\norm{X}_1=1$ and
$\abs{\bu^\top X}\le\norm{\bu}_\infty$, it follows that
\begin{equation*}
 \norm{\nabla\Phi_\beta(\be)}_1\le1,
 \qquad
 \bu^\top\nabla^2\Phi_\beta(\be)\bu
 =\beta\var(\bu^\top X)
 \le\beta\norm{\bu}_\infty^2.
\end{equation*}

It remains to prove the drift bound.  Let $\mathcal I(\be)$ be the set of
signed coordinates whose errors are close to the maximum:
\begin{equation*}
 \mathcal I(\be)\coloneq
 \left\{(s,i,\varsigma):\varsigma e_{s,i}
 \ge V-\frac{(1-\gamma)r_{\mathrm{out}}}{4}\right\}.
\end{equation*}
If $\varsigma=1$, then for every $(s^\prime,j)$,
\begin{equation*}
 e_{s,i}-\gamma e_{s^\prime,j}
 \ge(1-\gamma)V-\frac{(1-\gamma)r_{\mathrm{out}}}{4}
 \ge\frac{(1-\gamma)V}{2}.
\end{equation*}
We denote
\begin{equation*}
    V^*=\min\brc{V,\frac{1+\bar{\alpha}}{1-\gamma}}
\end{equation*}
and monotonicity of the reward CDFs and the definition of $c_\gM$ give
\begin{equation*}
\begin{aligned}
 h_{s,i}(\bm\theta)
 &\ge
 F_{(\gT^\pi\bm\eta_m)(s)}
 \left(\theta_m(s,i)
 +\frac{(1-\gamma)V^*}{2}\right)-\tau_i\\
 &\ge
 \frac{c_\gM\tau_1(1-\tau_1)
 (1-\gamma)V^*}{2}\coloneq a_mV^*
\end{aligned}
\end{equation*}
The same argument with all inequalities reversed shows that
$-h_{s,i}(\bm\theta)\ge a_mV^*$ when $\varsigma=-1$.

For a signed coordinate outside $\mathcal I(\be)$, its exponential weight is
at most $\exp\{-\beta(1-\gamma)r_{\mathrm{out}}/4\}$ times the weight of a
maximizing signed coordinate.  Hence
\begin{equation*}
 w\coloneq\sum_{(s,i,\varsigma)\notin\mathcal I(\be)}
 w_{s,i,\varsigma}(\be)
 \le 2|\gS|m
 \exp\left\{-\frac{\beta(1-\gamma)r_{\mathrm{out}}}{4}\right\}
 =\frac{c_{\mathrm g}}{1+2c_{\mathrm g}}.
\end{equation*}
Every signed mean-field coordinate has magnitude at most one.  The coordinates
in $\mathcal I(\be)$ contribute at least $a_mV^*$ per unit softmax
weight, whereas all remaining coordinates contribute at least $-1$.
Moreover, since $a_mV^*\geq 2c_\mathrm{g}$,
\begin{equation*}
    w\leq\frac{a_mV^*}{2(1+a_mV^*)}.
\end{equation*}
Therefore,
\begin{equation}\label{eq:stronger_global_drift}
 \left\langle\nabla\Phi_\beta(\be),\bh(\bm\theta)\right\rangle
 \ge a_mV^*(1-w)-w\ge \frac{a_mV^*}{2},
\end{equation}
Since $a_mV^*\geq 2c_\mathrm{g}$, we have actually proved a stronger version of the drift bound stated in Lemma~\ref{lem:global_smooth_max}. 
\end{proof}

\subsection{Proof of Lemma~\ref{lem:finite_time_entrance}: Late entrance}

\begin{proof}
Let $\be^{(t)}$ denote the error at time $t$, and fix the block endpoints
$u<v$ from Lemma~\ref{lem:finite_time_entrance}. Set
\begin{equation*}
    \be^{(t)}\coloneq\bm\theta^{(t)}-\bm\theta_m.
\end{equation*}
For integers $r<q$, write $A_{r,q}$ and $V_{r,q}$ for the accumulated step
size and accumulated squared step size, respectively:
\begin{equation*}
    A_{r,q}\coloneq\sum_{t=r}^{q-1}\alpha_t,
    \qquad
    V_{r,q}\coloneq\sum_{t=r}^{q-1}\alpha_t^2.
\end{equation*}
For the global comparison argument, use the drift scale $c_{\mathrm g}$ and
smoothing parameter $\beta$ defined in
Appendix~\ref{sec:explicit_proof_constants}.  We use the smooth maximum
\begin{equation*}
 \Phi_\beta(\be)
 \coloneq
 \frac1\beta\log
 \sum_{s\in\gS,\,i\in[m]}
 \left(e^{\beta e_{s,i}}+e^{-\beta e_{s,i}}\right).
\end{equation*}
Lemma~\ref{lem:global_smooth_max} gives
\begin{equation}\label{eq:smooth_max_equivalence}
    \norm{\be}_\infty
    \le\Phi_\beta(\be)
    \le\norm{\be}_\infty+\frac{\log(2|\gS|m)}{\beta},
\end{equation}
\begin{equation}\label{eq:smooth_max_hessian}
    \norm{\nabla\Phi_\beta(\be)}_1\le1,
    \qquad
    \bu^\top\nabla^2\Phi_\beta(\be)\bu
    \le\beta\norm{\bu}_\infty^2,
\end{equation}
and, whenever
$\norm{\bm\theta-\bm\theta_m}_\infty\ge r_{\mathrm{out}}/2$,
\begin{equation}\label{eq:global_smooth_drift}
    \left\langle
    \nabla\Phi_\beta(\bm\theta-\bm\theta_m),
    \bh(\bm\theta)
    \right\rangle
    \ge c_{\mathrm g}.
\end{equation}

For the sharper conditional form used in the Stage-I numerical diagnostic,
define the realized budget at the left endpoint of the block by
\begin{equation*}
    D_{u,\beta}
    \coloneq
    \Phi_\beta(\be^{(u)})-\frac{r_{\mathrm{out}}}{2}.
\end{equation*}
This quantity is $\gF_u$-measurable. 

For $t\ge u$, define the centered one-step fluctuation
\begin{equation*}
    Z_{t+1}
    \coloneq
    -\left\langle\nabla\Phi_\beta(\be^{(t)}),
    \bxi^{(t+1)}\right\rangle.
\end{equation*}
Then $\EB[Z_{t+1}\mid\gF_t]=0$ and $|Z_{t+1}|\le1$. Indeed, every
coordinate of $\bxi^{(t+1)}$ is the difference of two numbers in
$[0,1]$. Moreover,
\begin{equation*}
    \bh(\bm\theta^{(t)})+\bxi^{(t+1)}
    =\widehat\bh^{(t+1)}(\bm\theta^{(t)}),
    \qquad
    \norm{\widehat\bh^{(t+1)}(\bm\theta^{(t)})}_\infty\le1.
\end{equation*}
Thus, on $\{t<\tau_{u,v}\}$, Taylor's theorem, the QTD recursion, and
Equations~\eqref{eq:smooth_max_hessian}--\eqref{eq:global_smooth_drift}
imply
\begin{equation}\label{eq:one_step_global_drift}
    \Phi_\beta(\be^{(t+1)})
    \le
    \Phi_\beta(\be^{(t)})
    -c_{\mathrm g}\alpha_t
    +\alpha_tZ_{t+1}
    +\frac\beta2\alpha_t^2.
\end{equation}

Define the corresponding stopped process by
\begin{equation*}
    M_k\coloneq
    \sum_{t=u}^{k-1}\alpha_t\ind\{t<\tau_{u,v}\}Z_{t+1},
    \qquad u\le k\le v,
\end{equation*}
which is a martingale because $\ind\{t<\tau_{u,v}\}$ is
$\gF_t$-measurable; its increments are bounded by $\alpha_t$.
On $\{\tau_{u,v}=\infty\}$, summing
Equation~\eqref{eq:one_step_global_drift} from $u$ to $v-1$ and using
$\Phi_\beta(\be^{(v)})>r_{\mathrm{out}}/2$ yields
\begin{equation*}
    M_v
    \ge
    c_{\mathrm g}A_{u,v}
    -D_{u,\beta}
    -\frac\beta2V_{u,v}.
\end{equation*}
Conditionally on $\gF_u$, the Azuma--Hoeffding inequality therefore gives
\begin{equation}\label{eq:initialization_specific_entrance}
    \PB(\tau_{u,v}=\infty\mid\gF_u)
    \le
    \exp\left\{-
    \frac{\left(c_{\mathrm g}A_{u,v}-D_{u,\beta}
    -\frac\beta2V_{u,v}\right)_+^2}{2V_{u,v}}
    \right\}.
\end{equation}
By Lemma~\ref{lem:pathwise_boundedness} and Lemma~\ref{lem:global_smooth_max}, we know that $D_{u,\beta}\le D_{\mathrm g}$.
Therefore, the right-hand side of
Equation~\eqref{eq:initialization_specific_entrance} is at most the same
expression with $D_{u,\beta}$ replaced by $D_{\mathrm g}$.  Taking
expectations with respect to $\gF_u$ therefore yields
\begin{equation*}
    \PB(\tau_{u,v}=\infty)
    \le
    \exp\left\{-
    \frac{\left(c_{\mathrm g}A_{u,v}-D_{\mathrm g}
    -\frac\beta2V_{u,v}\right)_+^2}{2V_{u,v}}
    \right\}.
\end{equation*}
This is the claimed initialization-uniform entrance bound.  In particular, the
entrance condition~\eqref{eq:entrance_condition} of
Theorem~\ref{thm:global_nonasymptotic} gives directly
\begin{equation*}
  c_{\mathrm g}A_T^{\mathrm{ent}}-D_{\mathrm g}
  -\frac\beta2V_T^{\mathrm{ent}}
  \ge\sqrt{2V_T^{\mathrm{ent}}\log\frac2\delta}.
\end{equation*}
Thus, for $(u,v)=(u_T,v_T)$, the right-hand side is at most $\delta/2$.

For completeness, Stage~I also admits a fixed-$m$ refinement.  Define
\begin{equation}
 c_{\gM,m}\coloneq\inf_{\substack{s\in\gS,\ i\in[m]\\0<|z|\le(1-\gamma)^{-1}}}\left|\frac{F_{(\gT^\pi\bm\eta_m)(s)}(\theta_m(s,i)+z)-\tau_i}{\tau_i(1-\tau_i)z}\right|.\label{eq:fixed_m_identifiability}
\end{equation}
Then $c_{\gM,m}\ge c_\gM$.  The preceding entrance proof only invokes the identifiability inequality at the current value of $m$, so the same conditional bound remains valid when the Stage-I constants $r_{\mathrm{out}}$, $c_{\mathrm g}$, and $\beta$ are computed with $c_{\gM,m}$.  The main theorem retains the smaller, uniform constant $c_\gM$ to use one MDP-dependent sensitivity constant across quantile resolutions.
\end{proof}

\subsection{Proof of Lemma~\ref{lem:jacobian_summarize}: Local linear structure}
\label{sec:local_matrix_entries}

For clarity, the matrix in the local derivative
$\bG_m=\bD_m-\gamma\bB_m$ has entries
\begin{equation*}
 (\bB_m)_{(s,i),(s',j)}
 =\frac1m\sum_{a\in\gA}\pi(a\mid s)P(s'\mid s,a)
 p_{s,a}\bigl(\theta_m(s,i)-\gamma\theta_m(s',j)\bigr).
\end{equation*}
Here $\bD_m=\diag_{s,i}(d_{s,i})$ and
$d_{s,i}=p_{(\gT^\pi\bm\eta_m)(s)}(\theta_m(s,i))$.

\begin{proof}
The boundary argument in the proof of
Lemma~\ref{lem:local_density_stability} shows that, throughout the
$r_{\mathrm{out}}$-ball, every density argument either remains in
$(0,1)$ or remains in one of the two exterior components. Under
condition~{\rm (ii)} of Assumption~\ref{assump:density}, the zero
extension is Lipschitz across the endpoints as well. Hence $\bh$ is
continuously differentiable on this ball. Differentiating
Equation~\eqref{eq:qtd_mean_field} at $\bm\theta_m$ gives
\begin{equation*}
    \nabla\bh(\bm\theta_m)=\bD_m-\gamma\bB_m=\bG_m.
\end{equation*}
Summing a row of $\bB_m$ over $(s^\prime,j)$ and using
Equation~\eqref{eq:bellman_target_density_formula} gives
$\bB_m\bm1=\bD_m\bm1$. This proves the first part of
Lemma~\ref{lem:jacobian_summarize}.

Equation~\eqref{eq:technical_density_lower_from_cM} gives
$d_{s,i}\ge c_\gM/(4m)$.
Since $d_{s,i}\le C_0$, the condition $0\le\alpha C_0\le1$ implies
$\bI-\alpha\bG_m\ge\bm0$. Its row sums satisfy
\begin{equation*}
    (\bI-\alpha\bG_m)\bm1=\bm1-\alpha(1-\gamma)\bD_m\bm1\le
    \left(1-\frac{\alpha c_\gM(1-\gamma)}{4m}\right)\bm1.
\end{equation*}
For a nonnegative matrix, the largest row sum is its induced
$\ell_\infty$ norm. Consequently,
\begin{equation*}
    \bI-\alpha\bG_m\ge\bm0,
    \qquad
    \norm{\bI-\alpha\bG_m}_\infty
    \le1-\frac{\alpha c_\gM(1-\gamma)}{4m}=1-\alpha\mu,
\end{equation*}
which proves the second part of Lemma~\ref{lem:jacobian_summarize}.
For completeness, the same argument also gives the additional structural
fact that $\bG_m$ is a nonsingular $M$-matrix. Choose any
$\alpha\in(0,C_0^{-1}]$.  The preceding bound gives
$\norm{\bI-\alpha\bG_m}_\infty<1$, so the Neumann series converges and
\begin{equation*}
 \bG_m^{-1}
 =\alpha\sum_{k=0}^\infty(\bI-\alpha\bG_m)^k\ge\bm0.
\end{equation*}
Moreover, $\bG_m=\bD_m-\gamma\bB_m$ has nonpositive off-diagonal entries.
Thus it is a $Z$-matrix with a nonnegative inverse, equivalently a
nonsingular $M$-matrix.

Finally, fix one density argument $x$ at the fixed point and let
$w=e_{s,i}-\gamma e_{s^\prime,j}$ be its scalar perturbation. The
$L$-Lipschitz property gives
\begin{equation*}
    \abs{F_{s,a}(x+w)-F_{s,a}(x)-p_{s,a}(x)w}
    \le\frac L2w^2.
\end{equation*}
Averaging this inequality and using
$\abs{w}\le(1+\gamma)\norm{\be}_\infty$ proves the final claim of
Lemma~\ref{lem:jacobian_summarize}.
\end{proof}

\subsection{Proof of Lemma~\ref{lem:variance_drift_matching}: Variance--drift matching}
\label{sec:proof_variance_drift_matching}
\begin{proof}
Since $\bB_m\bm1=\bD_m\bm1$,
\begin{equation*}
\bv^\top\bm1-\bv^\top(\bI-\alpha_t\bG_m)\bm1
=\alpha_t\bv^\top\bG_m\bm1
=(1-\gamma)\alpha_t\bv^\top\bD_m\bm1,
\end{equation*}
which proves Equation~\eqref{eq:drift_matching}.

For the variance bound, let $Y_{s,i}^{(t+1)}$ be the random empirical-CDF
value used to update the current coordinate:
\begin{equation*}
    Y_{s,i}^{(t+1)}
    \coloneq\frac1m\sum_{j=1}^m
    \ind\left\{r^{(t+1,s)}+\gamma\theta^{(t)}
    \bigl(s^{\prime(t+1,s)},j\bigr)<\theta^{(t)}(s,i)\right\}.
\end{equation*}
Its conditional mean is denoted by
$q_{s,i}^{(t)}\coloneq\EB[Y_{s,i}^{(t+1)}\mid\gF_t]$.
Since $Y_{s,i}^{(t+1)}\in[0,1]$, its conditional variance is at most
$q_{s,i}^{(t)}(1-q_{s,i}^{(t)})$.
Assumption~\ref{assump:density} implies that the density entering
$q_{s,i}^{(t)}$ is bounded by $C_0$, so the definition of
$r_{\mathrm{out}}$ gives
\begin{equation*}
\begin{aligned}
    \abs{q_{s,i}^{(t)}-\tau_i}
    &\le C_0(1+\gamma)
    \norm{\bm\theta^{(t)}-\bm\theta_m}_\infty\\
    &\le\frac1{8m}
    \le\frac12\tau_i(1-\tau_i).
\end{aligned}
\end{equation*}
Consequently,
\begin{equation}\label{eq:coordinate_variance_bound}
\begin{aligned}
    \EB[(\xi_{s,i}^{(t+1)})^2\mid\gF_t]
    &\le q_{s,i}^{(t)}(1-q_{s,i}^{(t)})\\
    &=\tau_i(1-\tau_i)
      +(q_{s,i}^{(t)}-\tau_i)(1-2\tau_i)
      -(q_{s,i}^{(t)}-\tau_i)^2\\
    &\le\frac32\tau_i(1-\tau_i)
    \le2c_\gM^{-1}d_{s,i},
\end{aligned}
\end{equation}
where the last inequality follows from
Equation~\eqref{eq:technical_density_lower_from_cM}. For every
$\bv\ge\bm0$, Cauchy--Schwarz inequality gives
\begin{equation*}
    (\bv^\top\bxi^{(t+1)})^2
    \le(\bv^\top\bm1)
    \sum_{s,i}v_{s,i}(\xi_{s,i}^{(t+1)})^2.
\end{equation*}
Taking conditional expectations and applying
Equation~\eqref{eq:coordinate_variance_bound} proves
Equation~\eqref{eq:variance_matching}.
\end{proof}

\subsection{Proof of Lemma~\ref{lem:variance_sensitive_local}: Local capture}
\label{sec:proof_variance_sensitive_local}

\begin{proof}
Condition on $\gF_n$. Let $\be^{(t)}$ be the local error and let $\sigma$
be the first exit time from the $r_{\mathrm{out}}$-ball:
\begin{equation*}
    \be^{(t)}\coloneq\bm\theta^{(t)}-\bm\theta_m,
    \qquad
    \sigma\coloneq
    \inf\left\{t\ge n:\norm{\be^{(t)}}_\infty>r_{\mathrm{out}}\right\}.
\end{equation*}
The constants $\mu$ and $L_h$ below are, respectively, the local contraction
scale and the coefficient of the quadratic remainder:
\begin{equation*}
    \mu\coloneq\frac{c_\gM(1-\gamma)}{4m},
    \qquad
    L_h\coloneq\frac{L(1+\gamma)^2}{2}.
\end{equation*}
For $r\le k$, let $\bPhi_{r,k}$ denote the linearized propagator from time
$r$ to time $k$:
\begin{equation*}
    \bPhi_{r,k}\coloneq\prod_{j=r}^{k-1}(\bI-\alpha_j\bG_m).
\end{equation*}
The bold symbol $\bPhi_{r,k}$ is a matrix propagator and is distinct from the
scalar smooth maximum $\Phi_\beta$ used in the entrance argument.
Under the standing assumptions, $\bar\alpha C_0\le1$ and
$\alpha_t\le\bar\alpha$.
Moreover,
$\mu\le c_\gM/(4m)\le d_{s,i}\le C_0$, where we used
Equation~\eqref{eq:technical_density_lower_from_cM}. Hence
$0\le\mu\alpha_t\le1$. The bounds proved in
Lemma~\ref{lem:jacobian_summarize} give
\begin{equation*}
    \bI-\alpha_t\bG_m\ge\bm0,
    \qquad
    \norm{\bI-\alpha_t\bG_m}_\infty\le1-\mu\alpha_t.
\end{equation*}

For $t<\sigma$, Equation~\eqref{eq:qtd_decomposition} gives
\begin{equation*}
    \be^{(t+1)}
    =(\bI-\alpha_t\bG_m)\be^{(t)}
    -\alpha_t\bxi^{(t+1)}
    -\alpha_t\bR(\be^{(t)}).
\end{equation*}
Consequently, for every $n<k\le\sigma$,
\begin{equation*}
    \be^{(k)}=\bPhi_{n,k}\be^{(n)}-\sum_{t=n}^{k-1}\alpha_t\bPhi_{t+1,k}\bxi^{(t+1)}-\sum_{t=n}^{k-1}\alpha_t\bPhi_{t+1,k}\bR(\be^{(t)}).
\end{equation*}

Define the local logarithmic complexity $\ell_T^{\mathrm{loc}}(\delta)$ and the resulting uniform
noise envelope $b_{n,T}(\delta)$ by
\begin{equation*}
\begin{aligned}
    \ell_T^{\mathrm{loc}}(\delta)
    &\coloneq\log\frac{4|\gS|mT^2}{\delta},\\
    b_{n,T}(\delta)
    &\coloneq
    2\sqrt{\frac{\alpha_n\ell_T^{\mathrm{loc}}(\delta)}
    {c_\gM(1-\gamma)}}
    +\frac{\alpha_n\ell_T^{\mathrm{loc}}(\delta)}{3}.
\end{aligned}
\end{equation*}
Lemma~\ref{lem:uniform_linearized_martingale} implies that, with conditional
probability at least $1-\delta$,
\begin{equation}\label{eq:uniform_noise_event}
    \max_{n\le r<k\le T}
    \norm{
    \sum_{t=r}^{k-1}\alpha_t
    \bPhi_{t+1,k}
    \ind\{t<\sigma\}\bxi^{(t+1)}
    }_\infty
    \le b_{n,T}(\delta).
\end{equation}

The definition of $r_{\mathrm{out}}$ gives
$L_hr_{\mathrm{out}}/\mu\le1/16$, and the other two smallness conditions in
Lemma~\ref{lem:nonlinear_epoch_bootstrap} are exactly the assumed bounds on
$b_{n,T}(\delta)$.

We now work on the event in Equation~\eqref{eq:uniform_noise_event}.
Applying Lemma~\ref{lem:nonlinear_epoch_bootstrap} gives
\begin{equation*}
    \norm{\be^{(T)}}_\infty
    \le
    \norm{\be^{(n)}}_\infty
    \exp\left\{-\frac{15\mu}{16}
    \sum_{t=n}^{T-1}\alpha_t\right\}
    +b_{n,T}(\delta)
    +\frac{32L_h}{15\mu}b_{n,T}^2(\delta).
\end{equation*}
The same bootstrap also proves that the exit time satisfies $\sigma>T$.
This is the claimed local-capture statement and completes the proof.
\end{proof}

\subsection{Proofs for Section~\ref{Section:analysis}}
\label{sec:proof_stepsize_specializations}

\begin{proof}[Proof of Theorem~\ref{thm:global_nonasymptotic}]
Apply Lemma~\ref{lem:finite_time_entrance} with
$u=u_T$ and $v=v_T$. The entrance condition gives
$\Gamma_T\ge\sqrt{2V_T^{\mathrm{ent}}\log(2/\delta)}$, so
\begin{equation*}
  \PB(\tau_{u_T,v_T}=\infty)\le\frac\delta2.
\end{equation*}
On the complementary event, set $n=\tau_{u_T,v_T}$.  This is an
$(\gF_t)$-stopping time, $u_T\le n\le v_T$, and
$\norm{\bm\theta^{(n)}-\bm\theta_m}_\infty\le r_{\mathrm{out}}/2$.
Because the step sizes are nonincreasing,
\begin{equation*}
  \alpha_n\le\alpha_{u_T},\qquad
  \sum_{t=n}^{T-1}\alpha_t\ge
  \sum_{t=v_T}^{T-1}\alpha_t=A_T^{\mathrm{loc}}.
\end{equation*}
Moreover, the local envelope in Lemma~\ref{lem:variance_sensitive_local}
satisfies $b_{n,T}(\delta/2)\le b_T(\delta)$, because
$\ell_T^{\mathrm{loc}}(\delta/2)=\ell_T(\delta)$ and
$\alpha_n\le\alpha_{u_T}$. The capture
condition~\eqref{eq:capture_condition} therefore lets us apply that lemma. To make the
random starting time explicit, for each
$t\in\{u_T,\ldots,v_T\}$ let $\gE_t$ denote the local-capture conclusion
when the lemma is started at time $t$, and define $\gE_{\mathrm{loc}}$ on
$\{n<\infty\}$ by $\gE_{\mathrm{loc}}=\gE_n$.  The conditional conclusion
of the lemma gives
\begin{equation*}
 \PB(\gE_t^c\mid\gF_t)\ind\{n=t\}
 \le\frac\delta2\ind\{n=t\}.
\end{equation*}
Taking expectations and summing over the disjoint events $\{n=t\}$ shows
\begin{equation*}
 \PB(\gE_{\mathrm{loc}}^c,\ n<\infty)
 \le\sum_{t=u_T}^{v_T}
 \EB\left[\ind\{n=t\}
 \PB(\gE_t^c\mid\gF_t)\right]
 \le\frac\delta2.
\end{equation*}
On $\gE_{\mathrm{loc}}$, the lemma yields
\begin{equation*}
  \norm{\bm\theta^{(T)}-\bm\theta_m}_\infty
  \le \frac{r_{\mathrm{out}}}{2}
  \exp\left\{-\frac{15\mu}{16}A_T^{\mathrm{loc}}\right\}
  +b_T(\delta)+\frac{32L_h}{15\mu}b_T^2(\delta).
\end{equation*}
The assumed inequality $b_T(\delta)\le\mu/(64L_h)$ bounds the quadratic
remainder when $L_h>0$ because
\begin{equation*}
 \frac{32L_h}{15\mu}b_T^2(\delta)
 \le\frac{32}{15\cdot64}b_T(\delta)=\frac1{30}b_T(\delta).
\end{equation*}
The remainder vanishes when $L_h=0$.  Thus, on the same event,
\begin{equation}
  \norm{\bm\theta^{(T)}-\bm\theta_m}_\infty
  \le \frac{r_{\mathrm{out}}}{2}
  \exp\left\{-\frac{15\mu}{16}A_T^{\mathrm{loc}}\right\}
  +\frac{31}{30}b_T(\delta).
  \label{eq:generic_hp_bound_explicit}
\end{equation}
A union bound gives this inequality with probability at least $1-\delta$.
This is exactly Equation~\eqref{eq:generic_hp_bound}.
\end{proof}

\begin{proof}[Proof of Corollary~\ref{cor:constant_polynomial}]
For the constant schedule, $\alpha_{u_T}=\eta$ and
$A_T^{\mathrm{loc}}\ge\eta T/3$ for $T\ge8$.  Substitution into
Equation~\eqref{eq:generic_hp_bound_explicit} gives the sharper appendix
bound
\begin{equation*}
  \norm{\bm\theta^{(T)}-\bm\theta_m}_\infty
  \le \frac{r_{\mathrm{out}}}{2}e^{-5\mu\eta T/16}
  +\frac{31}{15}\sqrt{\frac{\eta\ell_T(\delta)}
  {c_\gM(1-\gamma)}}+\frac{31}{90}\eta\ell_T(\delta),
\end{equation*}
which implies the first claim.

Now let $\alpha_t=c(t+1)^{-a}$ with $a\in(0,1)$.  For $T\ge8$,
monotonicity and the block lengths give the explicit bounds
\begin{equation*}
  \alpha_{u_T}\le4cT^{-a},\qquad
  A_T^{\mathrm{loc}}\ge\frac c2T^{1-a}.
\end{equation*}
Substituting these inequalities into
Equation~\eqref{eq:generic_hp_bound_explicit} yields
\begin{equation*}
\begin{aligned}
  \norm{\bm\theta^{(T)}-\bm\theta_m}_\infty
  \le{}&\frac{r_{\mathrm{out}}}{2}
  \exp\left\{-\frac{15\mu c}{32}T^{1-a}\right\}
  +\frac{62}{15}\sqrt{\frac{c\ell_T(\delta)}
  {c_\gM(1-\gamma)T^a}}
  +\frac{62c\ell_T(\delta)}{45T^a}.
\end{aligned}
\end{equation*}
Because $\mu=c_\gM(1-\gamma)/(4m)$, this explicit inequality proves
Equation~\eqref{eq:polynomial_hp_bound}.

For completeness, the burn-in conditions are eventually satisfied for every
fixed $\delta$.  Indeed, on the entrance block,
\begin{equation*}
  \frac c8T^{1-a}\le A_T^{\mathrm{ent}}\le2cT^{1-a},
  \qquad
  V_T^{\mathrm{ent}}\le8c^2T^{1-2a}.
\end{equation*}
Consequently,
$V_T^{\mathrm{ent}}/A_T^{\mathrm{ent}}\le64cT^{-a}$,
$(A_T^{\mathrm{ent}})^2/V_T^{\mathrm{ent}}\ge T/512$, and
$b_T(\delta)\to0$.  Indeed, because
$A_T^{\mathrm{ent}}\to\infty$ and
$V_T^{\mathrm{ent}}/A_T^{\mathrm{ent}}\to0$, for all sufficiently large
$T$ we have
\begin{equation*}
 D_{\mathrm g}\le\frac{c_{\mathrm g}}4A_T^{\mathrm{ent}},
 \qquad
 \frac\beta2V_T^{\mathrm{ent}}
 \le\frac{c_{\mathrm g}}4A_T^{\mathrm{ent}}.
\end{equation*}
Thus $\Gamma_T\ge c_{\mathrm g}A_T^{\mathrm{ent}}/2$, and
\begin{equation*}
 \frac{\Gamma_T^2}{2V_T^{\mathrm{ent}}}
 \ge\frac{c_{\mathrm g}^2(A_T^{\mathrm{ent}})^2}
 {8V_T^{\mathrm{ent}}}\longrightarrow\infty.
\end{equation*}
This verifies the entrance condition for every fixed $\delta$, while
$b_T(\delta)\to0$ verifies the local condition.  Hence the argument covers
the full range $a\in(0,1)$.
\end{proof}

\paragraph{Dependence of the sufficient burn-in on quantile resolution.}
Under Assumption~\ref{assump:density}(ii), keep
$c_\gM,C_0,L,\gamma$ and the polynomial-schedule parameters $a,c$ fixed.
The explicit constants satisfy
\begin{equation*}
 r_{\mathrm{out}}\asymp m^{-1},\qquad
 \mu\asymp m^{-1},\qquad
 c_{\mathrm g}\asymp m^{-2},\qquad
 \beta=\widetilde\Theta(m),\qquad D_{\mathrm g}=O(1).
\end{equation*}
For $\alpha_t=c(t+1)^{-a}$, the entrance-block sums have orders
$A_T^{\mathrm{ent}}\asymp T^{1-a}$ and
$V_T^{\mathrm{ent}}\asymp T^{1-2a}$. Separately dominating
$D_{\mathrm g}$, the potential-curvature cost
$\beta V_T^{\mathrm{ent}}/2$, and the martingale deviation in
Equation~\eqref{eq:entrance_condition} gives, up to logarithmic factors,
\begin{equation*}
 T^{1-a}\gtrsim m^2,\qquad T^a\gtrsim m^3,\qquad T\gtrsim m^4.
\end{equation*}
The capture condition itself only requires $T^a\gtrsim m^2$, up to
logarithmic factors. Consequently, a sufficient burn-in length can be
chosen of order
\begin{equation*}
 \widetilde O\!\left(
 \max\{m^{2/(1-a)},\ m^{3/a},\ m^4\}\right).
\end{equation*}
Here logarithmic factors may depend on the failure probability $\delta$ and the number
of coordinates $|\gS|m$, and the implicit constants may depend on the fixed
schedule and problem parameters. The same polynomial scaling holds under
Assumption~\ref{assump:density}(i) if $\Delta_m\gtrsim m^{-1}$.
If this boundary margin is smaller, it also enters the local radius and
the global-drift constants, so the displayed scaling need not apply.

\begin{proof}[Proof of Corollary~\ref{cor:harmonic_stepsize}]
Write $\alpha_t=c/(t+t_0)$. The offset condition gives
$\bar\alpha C_0=(c/t_0)C_0\le1$, and the gain condition gives
\begin{equation*}
  q\coloneq\exp\{-15\mu c\log(4/3)/16\}\le\frac14.
\end{equation*}
We sharpen the entrance step by retaining the dependence of the global drift
on the current error magnitude. Write
\begin{equation*}
    \be^{(t)}\coloneq\bm\theta^{(t)}-\bm\theta_m.
\end{equation*}
Set
\begin{equation*}
    \lambda_{\mathrm h}\coloneq\frac{\mu}{6},
    \qquad
    p\coloneq\lambda_{\mathrm h}c.
\end{equation*}
By the choice of $C_{\mathrm{harm}}$ in
Appendix~\ref{sec:explicit_proof_constants},
$p\ge4/3>1$.

Also set
\begin{equation*}
    B_\Phi\coloneq\frac{1+\bar\alpha}{1-\gamma}+\frac{\log(2|\gS|m)}{\beta}.
\end{equation*}
Lemma~\ref{lem:pathwise_boundedness} and the smooth-max equivalence imply
$\Phi_\beta(\be^{(t)})\le B_\Phi$ for every $t$.

Choose an integer $u\ge\max\{1,t_0\}$ and define
\begin{equation*}
    v=\left\lceil (u+t_0)^{4/3}-t_0\right\rceil,
    \qquad N=2v.
\end{equation*}
We will increase $u$ below until a finite list of deterministic inequalities
is satisfied. Let
\begin{equation*}
    \tau=\inf\left\{u\le t\le v:\norm{\be^{(t)}}_\infty\le\frac{r_{\mathrm{out}}}{2}\right\},
\end{equation*}
with the convention $\inf\varnothing=\infty$, and define
\begin{equation*}
    Z_{t+1}=-\left\langle\nabla\Phi_\beta(\be^{(t)}),\bxi^{(t+1)}\right\rangle.
\end{equation*}
Then
$\EB[Z_{t+1}\mid\gF_t]=0$ and $|Z_{t+1}|\le1$.
On $\{t<\tau\}$, Equation~\eqref{eq:stronger_global_drift} yields
\begin{equation*}
  \left\langle\nabla\Phi_\beta(\be),\bh(\bm\theta)\right\rangle
  \ge \frac{\mu}{4}\norm{\be}_\infty, 
\end{equation*}
and the definition of $\beta$ gives
\begin{equation*}
 \frac{\log(2|\gS|m)}{\beta}\le\frac{(1-\gamma)r_{\mathrm{out}}}{4}\le\frac{V}{2}.
\end{equation*}
Thus $\Phi_\beta(\be)\le3V/2$ and
\begin{equation*}
  \left\langle\nabla\Phi_\beta(\be),\bh(\bm\theta)\right\rangle\ge \frac{\mu}{6}\Phi_\beta(\be).
\end{equation*}
This inequality and the same Taylor expansion
as in the proof of Lemma~\ref{lem:finite_time_entrance} give
\begin{equation}\label{eq:harmonic_multiplicative_drift}
    \Phi_\beta(\be^{(t+1)})\le(1-\lambda_{\mathrm h}\alpha_t)\Phi_\beta(\be^{(t)})+\alpha_tZ_{t+1}+\frac\beta2\alpha_t^2.
\end{equation}

For $u\le t<v$, define the deterministic backward weights
\begin{equation*}
    \omega_{t,v}\coloneq\prod_{j=t+1}^{v-1}(1-\lambda_{\mathrm h}\alpha_j),
    \qquad
    \omega_{u-1,v}\coloneq\prod_{j=u}^{v-1}(1-\lambda_{\mathrm h}\alpha_j).
\end{equation*}
Since $0\le\lambda_{\mathrm h}\alpha_j\le1$,
$0\le\omega_{t,v}\le1$.
Integral comparison gives
\begin{equation}
\begin{aligned}
    \omega_{u-1,v}&\le\exp\left\{-\lambda_{\mathrm h}\sum_{t=u}^{v-1}\alpha_t\right\}\\
    &\le\left(\frac{u+t_0}{v+t_0}\right)^p\le(u+t_0)^{-p/3}.
\end{aligned}
\label{eq:harmonic_initial_weight}
\end{equation}

We also need a weighted squared-step estimate. For
$k\in\{u,\ldots,v\}$, set
\begin{equation*}
    H_k\coloneq\sum_{t=u}^{k-1}\alpha_t^2\prod_{j=t+1}^{k-1}(1-\lambda_{\mathrm h}\alpha_j).
\end{equation*}
Then $H_u=0$ and, writing $x=k+t_0$,
\begin{equation*}
    H_{k+1}=\left(1-\frac{p}{x}\right)H_k+\frac{c^2}{x^2}.
\end{equation*}
An induction therefore yields
\begin{equation}\label{eq:harmonic_weighted_square_sum}
    H_k\le\frac{c^2}{(p-1)(k+t_0)},
    \qquad u\le k\le v.
\end{equation}
Indeed, if the bound holds at $k$, then
\begin{equation*}
    H_{k+1}\le\frac{c^2}{p-1}\frac{x-1}{x^2}\le\frac{c^2}{(p-1)(x+1)}.
\end{equation*}

Because $0\le\omega_{t,v}\le1$,
Equation~\eqref{eq:harmonic_weighted_square_sum} also gives
\begin{equation*}
    \sum_{t=u}^{v-1}\alpha_t^2\omega_{t,v}^2\le\frac{c^2}{(p-1)(v+t_0)}.
\end{equation*}

Now consider the stopped martingale
\begin{equation*}
    M_v\coloneq\sum_{t=u}^{v-1}\alpha_t\omega_{t,v}\ind\{t<\tau\}Z_{t+1}.
\end{equation*}
Azuma--Hoeffding and the preceding squared-weight bound imply
\begin{equation}\label{eq:harmonic_weighted_azuma}
    \PB\left(M_v>c\sqrt{\frac{2\log(4/\delta)}{(p-1)(v+t_0)}}\right)\le\frac\delta4.
\end{equation}

On $\{\tau=\infty\}$, unrolling
Equation~\eqref{eq:harmonic_multiplicative_drift} gives
\begin{equation*}
\begin{aligned}
 \Phi_\beta(\be^{(v)})\le&B_\Phi\omega_{u-1,v}+M_v+\frac\beta2\sum_{t=u}^{v-1}\alpha_t^2\omega_{t,v}\\
 \le&B_\Phi(u+t_0)^{-p/3}+M_v+\frac{\beta c^2}{2(p-1)(v+t_0)}.
\end{aligned}
\end{equation*}

Increase $u$, if necessary, until
\begin{equation}
\begin{aligned}
 B_\Phi(u+t_0)^{-p/3}&\le\frac{r_{\mathrm{out}}}{8},\\
 \frac{\beta c^2}{2(p-1)(v+t_0)}&\le\frac{r_{\mathrm{out}}}{8},\\
 c\sqrt{\frac{2\log(4/\delta)}{(p-1)(v+t_0)}}&\le\frac{r_{\mathrm{out}}}{8}.
\end{aligned}
\label{eq:harmonic_scale_sensitive_entrance_conditions}
\end{equation}
All three conditions hold for all sufficiently large $u$, because
$v+t_0\ge(u+t_0)^{4/3}$.

On the event complementary to that in
Equation~\eqref{eq:harmonic_weighted_azuma},
the event $\{\tau=\infty\}$ would imply
\begin{equation*}
 \Phi_\beta(\be^{(v)})\le\frac{3r_{\mathrm{out}}}{8}<\frac{r_{\mathrm{out}}}{2},
\end{equation*}
contradicting
\[
 \Phi_\beta(\be^{(v)})\ge\norm{\be^{(v)}}_\infty>\frac{r_{\mathrm{out}}}{2}.
\]
Hence
$\PB(\tau=\infty)\le\delta/4$.

On the complementary event,
$n=\tau\in[u,v]$ is a stopping time satisfying
\begin{equation*}
 \norm{\bm\theta^{(n)}-\bm\theta_m}_\infty
 \le
 \frac{r_{\mathrm{out}}}{2}.
\end{equation*}

The interval from entrance to $N$ leaves enough time for a fixed local
reduction: since $v\ge u\ge t_0$,
\begin{equation*}
 \sum_{t=v}^{N-1}\alpha_t
 \ge c\log\frac{2v+t_0}{v+t_0}
 \ge c\log(4/3).
\end{equation*}
Moreover, $\alpha_n\le c/(u+t_0)$, so the initial local envelope satisfies
\begin{equation*}
\begin{aligned}
 b_{n,N}(\delta/4)\le\bar b^{\mathrm{init}}_u
 \coloneq{}&2\sqrt{\frac{c}{c_\gM(1-\gamma)(u+t_0)}
              \log\frac{16|\gS|mN^2}{\delta}}\\
 &+\frac{c}{3(u+t_0)}\log\frac{16|\gS|mN^2}{\delta}.
\end{aligned}
\end{equation*}
Since $N=O((u+t_0)^{4/3})$, we have
$\bar b^{\mathrm{init}}_u\to0$. Increase $u$, if necessary, until
\begin{equation*}
  \bar b^{\mathrm{init}}_u\le
  \min\left\{\frac{r_{\mathrm{out}}}{16},
             \frac{\mu}{64L_h}\right\},
\end{equation*}
where the second entry is $+\infty$ when $L_h=0$.
Lemma~\ref{lem:variance_sensitive_local}, in the exact form supplied by
Lemma~\ref{lem:nonlinear_epoch_bootstrap}, now gives
\begin{equation*}
\begin{aligned}
 \norm{\bm\theta^{(N)}-\bm\theta_m}_\infty
 &\le q\frac{r_{\mathrm{out}}}{2}
      +b_{n,N}(\delta/4)
      +\frac{32L_h}{15\mu}b_{n,N}^2(\delta/4)\\
 &\le\frac{r_{\mathrm{out}}}{8}
      +\frac{31r_{\mathrm{out}}}{480}
 <\frac{r_{\mathrm{out}}}{2}.
\end{aligned}
\end{equation*}
The quadratic term is at most $b_{n,N}(\delta/4)/30$ when $L_h>0$ and
vanishes when $L_h=0$. The same lemma keeps the trajectory inside the
$r_{\mathrm{out}}$-ball through time $N$. Its failure probability,
conditionally on $\gF_n$, is at most $\delta/4$. Summing these conditional
bounds over the disjoint events $\{n=t\}$ gives an unconditional failure
probability at most $\delta/4$, just as in the proof of
Theorem~\ref{thm:global_nonasymptotic}. Thus entrance and initial capture
together cost at most $\delta/2$.


It remains to propagate the rate beyond $N$. Let $N_j=2^jN$ and assign
failure probabilities $\delta_j=\delta/2^{j+3}$, so that
$\sum_{j\ge0}\delta_j=\delta/4$. On the full block
$[N_j,N_{j+1}]$, define the exact local envelope
\begin{equation*}
 b_j=2\sqrt{\frac{\alpha_{N_j}}{c_\gM(1-\gamma)}
 \log\frac{4|\gS|mN_{j+1}^2}{\delta_j}}
 +\frac{\alpha_{N_j}}3
 \log\frac{4|\gS|mN_{j+1}^2}{\delta_j}.
\end{equation*}
Since $N_j\ge N\ge t_0$, integral comparison and monotonicity give
\begin{equation*}
 \sum_{t=N_j}^{N_{j+1}-1}\alpha_t\ge c\log(4/3),
 \qquad \alpha_{N_j}\le\frac{c}{N_j}.
\end{equation*}
Also, $2^j=N_j/N\le N_j$, and hence
\begin{equation*}
 \log\frac{4|\gS|mN_{j+1}^2}{\delta_j}
 =\log\frac{2^{j+7}|\gS|mN_j^2}{\delta}
 \le L_j,
 \qquad
 L_j\coloneq\log\frac{128|\gS|mN_j^3}{\delta}.
\end{equation*}
Introduce the square-root and linear parts
\begin{equation*}
 S_j\coloneq
 \sqrt{\frac{cL_j}{c_\gM(1-\gamma)N_j}},
 \qquad
 U_j\coloneq\frac{cL_j}{N_j},
 \qquad R_j\coloneq S_j+U_j.
\end{equation*}
Then
\begin{equation}\label{eq:harmonic_block_envelope}
 b_j\le2S_j+\frac13U_j\le2R_j.
\end{equation}
Because $L_j/N_j$ decreases with $j$ (here $L_0\ge\log128>3$), both
$S_j$ and $U_j$ decrease. Increase $u$, and consequently $N$, once more
if necessary, until
\begin{equation*}
  2S_0+\frac13U_0\le
  \min\left\{\frac{r_{\mathrm{out}}}{16},
             \frac{\mu}{64L_h}\right\}.
\end{equation*}
All $b_j$ then satisfy the local-capture conditions. Every requirement above holds for all sufficiently large $u$:
all three entrance terms in Equation~\eqref{eq:harmonic_scale_sensitive_entrance_conditions}
tend to zero, and both local noise envelopes vanish. We may
therefore take the smallest integer $u\ge\max\{1,t_0\}$ satisfying
this finite list of deterministic inequalities. This makes the initial
time $N$ constructive and deterministic.

Set $E_j=\norm{\bm\theta^{(N_j)}-\bm\theta_m}_\infty$. If
$E_j\le r_{\mathrm{out}}/2$, the exact local-capture estimate on block $j$
gives, outside an event of conditional probability at most $\delta_j$,
\begin{equation*}
 E_{j+1}\le qE_j+b_j+\frac{32L_h}{15\mu}b_j^2
 \le qE_j+\frac{31}{30}b_j.
\end{equation*}
Since $q\le1/4$ and $b_j\le r_{\mathrm{out}}/16$, this bound also gives
$E_{j+1}<r_{\mathrm{out}}/2$. Thus every successive application of the
local lemma is legitimate on the preceding success events. Conditioning
on $\gF_{N_j}$ and summing over the blocks bounds their total failure
probability by $\sum_{j\ge0}\delta_j$; no independence is needed.
Iteration yields
\begin{equation}\label{eq:harmonic_endpoint_iteration}
 E_j\le q^jE_0+\frac{31}{30}
 \sum_{\ell=0}^{j-1}q^{j-1-\ell}b_\ell.
\end{equation}

For $0\le\ell<j$, monotonicity of $L_j$ gives
\begin{equation*}
 S_\ell\le2^{(j-\ell)/2}S_j,
 \qquad U_\ell\le2^{j-\ell}U_j.
\end{equation*}
The linear term is kept separate because it decays as $N_j^{-1}$,
whereas the square-root term decays as $N_j^{-1/2}$, up to logarithms.
With $r=j-1-\ell$ and $q\le1/4$, we obtain
\begin{equation*}
\begin{aligned}
 \sum_{\ell=0}^{j-1}q^{j-1-\ell}b_\ell
 &\le2\sqrt2S_j\sum_{r=0}^{j-1}(q\sqrt2)^r
      +\frac23U_j\sum_{r=0}^{j-1}(2q)^r\\
 &\le\frac{2\sqrt2}{1-\sqrt2/4}S_j+\frac43U_j.
\end{aligned}
\end{equation*}
Define the numerical constant
\begin{equation*}
 K_*\coloneq\frac{31\sqrt2}{15(1-\sqrt2/4)}.
\end{equation*}
Since $K_*>62/45$, Equation~\eqref{eq:harmonic_endpoint_iteration}
implies $E_j\le q^jE_0+K_*R_j$. The inherited error can eventually
be absorbed into $R_j$: indeed,
$R_j\ge2^{-j/2}S_0$ and hence
$q^jr_{\mathrm{out}}/(2R_j)\to0$. Let $j_0$ be the smallest
nonnegative integer for which
$q^{j_0}r_{\mathrm{out}}/2\le R_{j_0}$. This inequality remains true
thereafter because $R_{j+1}\ge R_j/2$ and $q\le1/4$. Consequently,
for every $j\ge j_0$,
\begin{equation}\label{eq:harmonic_endpoint_rate}
 E_j\le(1+K_*)R_j.
\end{equation}

Now fix an integer $T\ge N_{j_0}$ and choose $J$ such that
$N_J\le T<N_{J+1}$. If $T>N_J$, apply the local lemma on the partial
block $[N_J,T]$ with failure probability $\delta/4$. Since $T<2N_J$,
\begin{equation*}
 \log\frac{16|\gS|mT^2}{\delta}
 \le\log\frac{64|\gS|mN_J^2}{\delta}\le L_J,
\end{equation*}
so its envelope satisfies
$b_{\mathrm{part}}\le2S_J+U_J/3\le2R_J$ and the local-capture
smallness conditions. Using the exact bootstrap estimate without
requiring contraction on this partial block gives
\begin{equation*}
\begin{aligned}
 \norm{\bm\theta^{(T)}-\bm\theta_m}_\infty
 &\le E_J+b_{\mathrm{part}}
       +\frac{32L_h}{15\mu}b_{\mathrm{part}}^2\\
 &\le\left(1+K_*+\frac{31}{15}\right)R_J.
\end{aligned}
\end{equation*}
The same bound follows directly from
Equation~\eqref{eq:harmonic_endpoint_rate} when $T=N_J$.

Set
\begin{equation*}
 L_T\coloneq\log\frac{128|\gS|mT^3}{\delta},
 \qquad K_{\mathrm{harm}}\coloneq K_*+\frac{46}{15}.
\end{equation*}
Since $N_J\le T<2N_J$ and $L_J\le L_T$, the preceding bound implies
\begin{equation}
 \norm{\bm\theta^{(T)}-\bm\theta_m}_\infty
 \le K_{\mathrm{harm}}\left\{
 \sqrt{\frac{2cL_T}{c_\gM(1-\gamma)T}}
 +\frac{2cL_T}{T}\right\}.
 \label{eq:harmonic_hp_bound_explicit}
\end{equation}
For $C_{2,*}=6K_{\mathrm{harm}}$, we have
$L_T\le3\log(C_{2,*}|\gS|mT/\delta)$, and
Equation~\eqref{eq:harmonic_hp_bound_explicit} implies
Equation~\eqref{eq:harmonic_hp_bound} with $C=C_{2,*}$.

The entrance and initial-capture events cost at most $\delta/2$, all
full dyadic blocks cost at most $\delta/4$, and the final partial block
costs at most $\delta/4$. Thus the bound holds with probability at least
$1-\delta$ for each fixed $T\ge T_{\mathrm{harm}}(\delta)$, where
$T_{\mathrm{harm}}(\delta)=N_{j_0}$ is finite and deterministic.
Only the local contraction condition $c\ge C_{\mathrm{harm}}/\mu$
is required of the gain coefficient. 
\end{proof}

\paragraph{Dependence of the sufficient burn-in on quantile resolution.}
Under Assumption~\ref{assump:density}(ii), keep
$c_\gM,C_0,L,\gamma$ fixed and take the natural harmonic tuning
$c\asymp\mu^{-1}\asymp m$ together with $t_0\asymp c$.
Then
\begin{equation*}
 r_{\mathrm{out}}\asymp m^{-1},\qquad
 \mu\asymp m^{-1},\qquad
 \beta=\widetilde\Theta(m),\qquad
 B_\Phi=O(1),\qquad
 p=\lambda_{\mathrm h}c=\Theta(1),
\end{equation*}
with $p\ge4/3$ under the gain condition in
Corollary~\ref{cor:harmonic_stepsize}.

Recall that the scale-sensitive entrance construction takes
\begin{equation*}
 v=\left\lceil (u+t_0)^{4/3}-t_0\right\rceil,
 \qquad N=2v.
\end{equation*}
The three conditions in
Equation~\eqref{eq:harmonic_scale_sensitive_entrance_conditions}
can be satisfied, up to logarithmic factors, by taking
\begin{equation*}
 u\gtrsim m^3.
\end{equation*}
Indeed, the initial-error term is then already smaller than
$r_{\mathrm{out}}$, while the potential-curvature and martingale terms
require, respectively,
\begin{equation*}
 v\gtrsim m^4
 \qquad\text{and}\qquad
 v\gtrsim m^4
\end{equation*}
up to logarithmic factors.  The initial local-capture condition
$\bar b_u^{\mathrm{init}}
 \le\min\{r_{\mathrm{out}}/16,\mu/(64L_h)\}$
also requires only $u\gtrsim m^3$ up to logarithmic factors.
Consequently, entrance and initial capture can be completed by a
deterministic time
\begin{equation}
 N=\widetilde O(m^4).
 \label{eq:harmonic_initial_burnin_m}
\end{equation}

It remains to absorb the error inherited at time $N$ into the stochastic
scale of the subsequent dyadic blocks.  Recall that
$N_j=2^jN$ and
\begin{equation*}
 R_j=S_j+U_j,\qquad
 S_j=
 \sqrt{\frac{cL_j}{c_\gM(1-\gamma)N_j}},
 \qquad
 U_j=\frac{cL_j}{N_j}.
\end{equation*}
Under the preceding choice of $N$,
\begin{equation*}
 S_0=\widetilde\Omega(m^{-3/2}),
 \qquad
 R_j\ge 2^{-j/2}S_0
 =\widetilde\Omega\!\left(
   m^{-3/2}2^{-j/2}
 \right).
\end{equation*}
Since $E_0\le r_{\mathrm{out}}/2=O(m^{-1})$, the defining condition
$q^{j_0}r_{\mathrm{out}}/2\le R_{j_0}$ is satisfied once
\begin{equation*}
 (q\sqrt2)^{j_0}
 \lesssim m^{-1/2}
\end{equation*}
up to logarithmic factors.  Because $q\le1/4$, one may therefore take
\begin{equation*}
 j_0\le \frac13\log_2m+O(\log\log m).
\end{equation*}
Combining this with Equation~\eqref{eq:harmonic_initial_burnin_m} gives
the sufficient harmonic burn-in bound
\begin{equation}\label{eq:definition_T_harm}
 T_{\mathrm{harm}}(\delta)
 =N_{j_0}
 =\widetilde O(m^{13/3}),
\end{equation}
which is also polynomial in $m$. 
The same polynomial scaling holds under
Assumption~\ref{assump:density}(i) if
$\Delta_m\gtrsim m^{-1}$.  If the boundary margin is smaller, it also
enters $r_{\mathrm{out}}$ and the preceding scaling need not apply.

%% file: tex/B_technical_lemmas.tex
\subsection{Global boundedness}

\begin{lemma}\label{lem:pathwise_boundedness}
Let $B$ be the largest possible discounted return and let $\Delta$ be the
largest extra excursion caused by one update:
\begin{equation*}
 B\coloneq(1-\gamma)^{-1},\qquad
 \Delta\coloneq \bar\alpha(1-\gamma)^{-1}.
\end{equation*}
If $\bm\theta^{(0)}\in[0,B]^{\gS\times[m]}$, then, almost surely,
\begin{equation}\label{eq:pathwise_parameter_box}
 \bm\theta^{(t)}\in[-\Delta,B+\Delta]^{\gS\times[m]}
 \qquad\text{for every }t\ge0.
\end{equation}
Consequently,
\begin{equation}\label{eq:pathwise_error_bound}
 \norm{\bm\theta^{(t)}-\bm\theta_m}_\infty
 \le\frac{1+\bar\alpha}{1-\gamma}
 \qquad\text{for every }t\ge0.
\end{equation}
\end{lemma}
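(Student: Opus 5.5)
We argue by induction on $t$, coordinate by coordinate, using only that each QTD increment has absolute value at most $\alpha_t\le\bar\alpha$. The quantity $\tau_i-\widehat F$ lies in $[-1,1]$. In addition, the sign of the increment is forced whenever the coordinate lies outside the support of the sampled Bellman targets. Since $\Delta=\bar\alpha B\ge\bar\alpha$, a single step can overshoot a boundary of $[0,B]$ by at most $\bar\alpha\le\Delta$.

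\textbf{Induction hypothesis.} We keep a slightly stronger invariant than the box: $\bm\theta^{(t)}\in[-\Delta,B+\Delta]^{\gS\times[m]}$. Given this, the sampled targets satisfy
\[
r^{(t+1,s)}+\gamma\theta^{(t)}(s',j)\in[-\gamma\Delta,\,1+\gamma(B+\Delta)]=[-\gamma\Delta,\,B+\gamma\Delta],
\]
where we used $1+\gamma B=B$.

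\textbf{Three cases for a fixed coordinate $(s,i)$.} Write $x=\theta^{(t)}(s,i)$.

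\emph{Case (a): $x\le-\gamma\Delta$.} No target is strictly below $x$, so $\widehat F=0$. The increment is then $\alpha_t\tau_i>0$, and the coordinate moves up by at most $\alpha_t$. Hence $\theta^{(t+1)}(s,i)\in[x,x+\alpha_t]\subset[-\Delta,B+\Delta]$, because $x\ge-\Delta$ and $x+\alpha_t\le-\gamma\Delta+\bar\alpha\le B$.

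\emph{Case (b): $x>B+\gamma\Delta$.} Every target lies strictly below $x$, so $\widehat F=1$ and the coordinate decreases by at most $\alpha_t$. It therefore stays above $B+\gamma\Delta-\bar\alpha\ge-\Delta$.

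\emph{Case (c): $x\in(-\gamma\Delta,B+\gamma\Delta]$.} The new value lies in $(-\gamma\Delta-\bar\alpha,\,B+\gamma\Delta+\bar\alpha]$. To land inside $[-\Delta,B+\Delta]$ we need $\gamma\Delta+\bar\alpha\le\Delta$, i.e.\ $\bar\alpha\le(1-\gamma)\Delta$. This holds with equality for $\Delta=\bar\alpha(1-\gamma)^{-1}$, which explains the choice of $\Delta$: it is the fixed point of $\Delta=\gamma\Delta+\bar\alpha$.

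\textbf{Conclusion and main point.} Combining the three cases gives \eqref{eq:pathwise_parameter_box}; the base case $t=0$ is the assumption $\bm\theta^{(0)}\in[0,B]^{\gS\times[m]}$. For \eqref{eq:pathwise_error_bound}, note that $\theta_m(s,i)\in[0,B]$, since $\bm\eta_m$ is the fixed point of $\bPi_m\gT^\pi$ and both operators preserve support in $[0,B]$. Therefore $|\theta^{(t)}(s,i)-\theta_m(s,i)|\le B+\Delta=(1+\bar\alpha)/(1-\gamma)$. The only real content is case (c), namely the self-consistent choice $\Delta=\gamma\Delta+\bar\alpha$; the strict-inequality convention in $\widehat F$ matters only at the boundary of case (a) and is harmless there.
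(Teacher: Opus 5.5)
Your proof is correct and follows the paper's own argument. Both use induction on the box: when a coordinate lies outside the range $[-\gamma\Delta,B+\gamma\Delta]$ of sampled targets, the sign of its update is forced, and otherwise a step of size at most $\bar\alpha$ lands inside the box because $\gamma\Delta+\bar\alpha=\Delta$. The paper treats the lower and upper bounds separately instead of using your three cases.

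There is one small slip. In case (a), the inequality $-\gamma\Delta+\bar\alpha\le B$ requires $\bar\alpha(1-2\gamma)\le1$, which the lemma does not assume. You only need $-\gamma\Delta+\bar\alpha\le\bar\alpha\le\Delta\le B+\Delta$, and that holds unconditionally.
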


\begin{proof}[Proof of Lemma~\ref{lem:pathwise_boundedness}]
We prove Equation~\eqref{eq:pathwise_parameter_box} by induction.  Suppose
all coordinates of $\bm\theta^{(t)}$ belong to $[-\Delta,B+\Delta]$. If
$\theta^{(t)}(s,i)<-\gamma\Delta$, then every sampled quantity
$r^{(t+1,s)}+\gamma\theta^{(t)}(s^{\prime(t+1,s)},j)$ is at least
$-\gamma\Delta$, so the update of this coordinate is nonnegative. Otherwise,
since $\alpha_t\le\bar\alpha$,
\begin{equation*}
 \theta^{(t+1)}(s,i)\ge-\gamma\Delta-\bar\alpha=-\Delta.
\end{equation*}
Similarly, every such sampled quantity is at most
$1+\gamma(B+\Delta)=B+\gamma\Delta$. If
$\theta^{(t)}(s,i)>B+\gamma\Delta$, its update is nonpositive; otherwise,
\begin{equation*}
 \theta^{(t+1)}(s,i)\le B+\gamma\Delta+\bar\alpha=B+\Delta.
\end{equation*}
This proves the invariant box.  Since
$\bm\theta_m\in[0,B]^{\gS\times[m]}$, Equation~\eqref{eq:pathwise_error_bound}
follows.
\end{proof}

\subsection{Local Density Stability}

The following lemma supplies the density lower bound used in the
local linearization.

\begin{lemma}\label{lem:local_density_stability}
For every $\bm\theta$ satisfying
$\norm{\bm\theta-\bm\theta_m}_\infty\le r_{\mathrm{out}}$ and every
$z$ satisfying
$\abs{z-\theta_m(s,i)}\le r_{\mathrm{out}}$, we have
\begin{equation}\label{eq:local_density_lower}
 p_{(\gT^\pi\bm\eta_{\bm\theta})(s)}(z)
 \ge\frac{d_{s,i}}2
 \ge\frac{c_\gM}{2}\tau_i(1-\tau_i).
\end{equation}
\end{lemma}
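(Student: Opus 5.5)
The plan is to write both densities explicitly as finite mixtures of shifted reward densities and compare them term by term. For an array $\bm\theta$, the Bellman target $(\gT^\pi\bm\eta_{\bm\theta})(s)$ is the law of $R+\gamma\theta(S',J)$ with $J\sim\operatorname{Unif}([m])$. Hence its density is
\begin{equation*}
 p_{(\gT^\pi\bm\eta_{\bm\theta})(s)}(z)=\frac1m\sum_{a\in\gA,\,s'\in\gS}\sum_{j=1}^m\pi(a\mid s)P(s'\mid s,a)\,\widetilde p_{s,a}\bigl(z-\gamma\theta(s',j)\bigr),
\end{equation*}
where $\widetilde p_{s,a}$ is the zero extension of $p_{s,a}$. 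The quantity $d_{s,i}$ is the same expression with $z=\theta_m(s,i)$ and $\bm\theta=\bm\theta_m$. Each argument is shifted by $w=(z-\theta_m(s,i))-\gamma(\theta(s',j)-\theta_m(s',j))$, which satisfies $|w|\le(1+\gamma)r_{\mathrm{out}}$. So it suffices to bound each summand's change by $L(1+\gamma)r_{\mathrm{out}}$. Since the mixture weights sum to one, the total change is then at most $L(1+\gamma)r_{\mathrm{out}}$.

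\textbf{Case (ii).} The zero extension is $L$-Lipschitz on all of $\RB$, so the per-term bound is immediate.

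\textbf{Case (i).} Here $\widetilde p_{s,a}$ may jump at $0$ and $1$, and this is where the boundary margin enters. Only pairs with $P^\pi(s'\mid s)>0$ appear, so every unperturbed argument $x=\theta_m(s,i)-\gamma\theta_m(s',j)$ is at distance at least $\Delta_m$ from $\{0,1\}$. Since $r_{\mathrm{out}}\le\Delta_m/(2(1+\gamma))$, the perturbed argument $x+w$ lies on the same side of $0$ and of $1$ as $x$. If $x\notin(0,1)$, both values vanish. If $x\in(0,1)$, the $L$-Lipschitz property on $(0,1)$ applies. Either way the per-term bound $L(1+\gamma)r_{\mathrm{out}}$ holds. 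This is the step I expect to require the most care, namely handling the endpoint discontinuity case by case.

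\textbf{Lower bound on $d_{s,i}$.} Next I show $d_{s,i}\ge c_\gM\tau_i(1-\tau_i)$ from the definition \eqref{eq:cM_definition}. By continuity of the Bellman-target CDF and the fixed-point property $\bm\eta_m=\bPi_m\gT^\pi\bm\eta_m$, we have $F_{(\gT^\pi\bm\eta_m)(s)}(\theta_m(s,i))=\tau_i$. Moreover, the density is continuous at $\theta_m(s,i)$, by the same side-of-boundary argument in case (i). Letting $u\to0$ in the difference quotient defining $c_\gM$ then gives $d_{s,i}=\lim_{u\to0}|F(\theta_m(s,i)+u)-\tau_i|/|u|\ge c_\gM\tau_i(1-\tau_i)$. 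This already proves the second inequality of \eqref{eq:local_density_lower}.

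\textbf{Combining.} Since $\tau_i(1-\tau_i)\ge\tau_1(1-\tau_1)=(2m-1)/(4m^2)\ge1/(4m)$, we get $d_{s,i}/2\ge c_\gM/(8m)$. The third term in the definition of $r_{\mathrm{out}}$ gives
\begin{equation*}
 L(1+\gamma)r_{\mathrm{out}}\le\frac{c_\gM(1-\gamma)}{32m(1+\gamma)}\le\frac{c_\gM}{8m}\le\frac{d_{s,i}}2 .
\end{equation*}
If $L=0$, the change is simply zero. Therefore
\begin{equation*}
 p_{(\gT^\pi\bm\eta_{\bm\theta})(s)}(z)\ge d_{s,i}-L(1+\gamma)r_{\mathrm{out}}\ge d_{s,i}/2,
\end{equation*}
which is the first inequality.
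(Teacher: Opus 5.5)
Your proof is correct and takes essentially the same route as the paper. Both arguments use the mixture formula for the Bellman-target density, bound each term's change by $L(1+\gamma)r_{\mathrm{out}}$ (using the same-component argument via $\Delta_m$ under case~(i)), and obtain $d_{s,i}\ge c_\gM\tau_i(1-\tau_i)\ge c_\gM/(4m)$ by letting $u\to0$ in the definition of $c_\gM$. The only differences are minor: you explicitly justify $F_{(\gT^\pi\bm\eta_m)(s)}(\theta_m(s,i))=\tau_i$ and the continuity of the density at $\theta_m(s,i)$, which the paper leaves implicit, and you settle for the cruder bound $L(1+\gamma)r_{\mathrm{out}}\le c_\gM/(8m)$ where the paper records $c_\gM/(32m)\le d_{s,i}/8$.
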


\begin{proof}[Proof of Lemma~\ref{lem:local_density_stability}]
For every $z\in\RB$ and
$\bm\theta\in\RB^{\gS\times[m]}$,
\begin{equation}\label{eq:bellman_target_density_formula}
 p_{(\gT^\pi\bm\eta_{\bm\theta})(s)}(z)
 =\frac1m
 \sum_{a\in\gA,\,s^\prime\in\gS}\sum_{j=1}^m
 \pi(a\mid s)P(s^\prime\mid s,a)
 p_{s,a}\bigl(z-\gamma\theta(s^\prime,j)\bigr).
\end{equation}
If $\norm{\bm\theta-\bm\theta_m}_\infty\le r_{\mathrm{out}}$ and
$\abs{z-\theta_m(s,i)}\le r_{\mathrm{out}}$, then
\begin{equation}\label{eq:density_argument_perturbation}
\abs{
z-\gamma\theta(s^\prime,j)
-\prn{\theta_m(s,i)-\gamma\theta_m(s^\prime,j)}}\le(1+\gamma)r_{\mathrm{out}}.
\end{equation}
Under condition~{\rm (i)} of Assumption~\ref{assump:density}, the
definition of $r_{\mathrm{out}}$ ensures that the two
arguments in Equation~\eqref{eq:density_argument_perturbation}
belong to the same connected component of
$\RB\setminus\{0,1\}$. The density is $L$-Lipschitz on the interior
component and is zero on the two exterior components. Under
condition~{\rm (ii)}, $p_{s,a}$ is $L$-Lipschitz on all of
$\RB$. Thus, in both cases,
\begin{equation}\label{eq:local_density_perturbation_bound}
\begin{aligned}
&\abs{
p_{(\gT^\pi\bm\eta_{\bm\theta})(s)}(z)
-p_{(\gT^\pi\bm\eta_m)(s)}\bigl(\theta_m(s,i)\bigr)}\\
\le& L(1+\gamma)r_{\mathrm{out}}
\le\frac{c_\gM}{32m}
\le\frac{d_{s,i}}8.
\end{aligned}
\end{equation}
Here the last inequality follows by taking $u\to0$ in the definition
of $c_\gM$:
\begin{equation}\label{eq:technical_density_lower_from_cM}
 d_{s,i}\ge c_\gM\tau_i(1-\tau_i)
 \ge\frac{c_\gM}{4m}.
\end{equation}
Equations~\eqref{eq:local_density_perturbation_bound} and
\eqref{eq:technical_density_lower_from_cM} prove
Equation~\eqref{eq:local_density_lower}.
\end{proof}

\subsection{Local Concentration Tools}

\begin{lemma}\label{lem:uniform_linearized_martingale}
Suppose $\bar\alpha C_0\le1$. Let $n\le T$ be an
$(\gF_t)$-stopping time, and let
$\sigma\ge n$ be a stopping time such that
$\norm{\bm\theta^{(t)}-\bm\theta_m}_\infty\le r_{\mathrm{out}}$
on $\{t<\sigma\}$.  For $r\leq k$, set
\begin{equation*}
 \bPhi_{r,k}\coloneq\prod_{j=r}^{k-1}(\bI-\alpha_j\bG_m).
\end{equation*}
For every $\delta\in(0,1)$, conditionally on $\gF_n$, with probability
at least $1-\delta$,
\begin{equation*}
\begin{aligned}
&\max_{(s,i)\in\gS\times[m]}\max_{n\le r<k\le T}
\abs{\sum_{t=r}^{k-1}\alpha_t\bm e_{s,i}^\top
\bPhi_{t+1,k}\ind\{t<\sigma\}\bxi^{(t+1)}}\\
\leq&
2\sqrt{\frac{\alpha_n}{c_\gM(1-\gamma)}
\log\frac{2|\gS|mT^2}{\delta}}
+\frac{\alpha_n}{3}
\log\frac{2|\gS|mT^2}{\delta}.
\end{aligned}
\end{equation*}
\end{lemma}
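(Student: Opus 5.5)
For each fixed coordinate $(s,i)$ and each pair $n\le r<k\le T$, I will bound the sum with Freedman's (Bernstein-type) martingale inequality. I will then take a union bound over the at most $|\gS|m T^2/2$ triples $(s,i,r,k)$ and over both signs. This produces the logarithm $\log(2|\gS|mT^2/\delta)$. Since $n$ is a stopping time, everything is done conditionally on $\gF_n$. The indices $r,k$ range over $n\le r<k\le T$, so for each fixed value of $n$ the deterministic-time argument applies verbatim. The random start is then handled by summing over $\{n=t\}$, exactly as in the proof of Theorem~\ref{thm:global_nonasymptotic}.

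Fix $(s,i)$, $r<k$ and set $\bv_t^\top=\bm e_{s,i}^\top\bPhi_{t+1,k}$ and $u_t=\bv_t^\top\bm1$. The increments are
\[
X_{t+1}=\alpha_t\ind\{t<\sigma\}\bv_t^\top\bxi^{(t+1)} .
\]
The vector $\bv_t$ is deterministic, because $\bPhi$ depends only on step sizes and $\bG_m$, and $\ind\{t<\sigma\}$ is $\gF_t$-measurable. Hence $(X_{t+1})$ is a martingale difference sequence in $t$.

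\textbf{Bounded increments.} Lemma~\ref{lem:jacobian_summarize} gives $\bv_t\ge\bm0$ and $u_t\le\norm{\bPhi_{t+1,k}}_\infty\le1$. Each coordinate of $\bxi$ lies in $[-1,1]$, so $|X_{t+1}|\le\alpha_t\le\alpha_n$.

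\textbf{Predictable variance.} On $\{t<\sigma\}$ the iterate lies in the $r_{\mathrm{out}}$-ball, so Lemma~\ref{lem:variance_drift_matching} applies and gives
\[
\EB[X_{t+1}^2\mid\gF_t]\le 2c_\gM^{-1}\alpha_t^2u_t\,\bv_t^\top\bD_m\bm1 .
\]
Next, $\bv_{t-1}^\top=\bv_t^\top(\bI-\alpha_t\bG_m)$, so \eqref{eq:drift_matching} gives
\[
u_t-u_{t-1}=(1-\gamma)\alpha_t\bv_t^\top\bD_m\bm1\ge0 .
\]
Therefore
\[
\sum_{t=r}^{k-1}\EB[X_{t+1}^2\mid\gF_t]\le \frac{2\alpha_r}{c_\gM(1-\gamma)}\sum_t u_t(u_t-u_{t-1}),
\]
using $\alpha_t\le\alpha_r$. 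Because $0\le u_{t-1}\le u_t\le1$, this sum is at most $\sum_t (u_t-u_{t-1})\le u_{k-1}\le 1$. So the predictable quadratic variation is bounded deterministically by $W\coloneq 2\alpha_n/[c_\gM(1-\gamma)]$.

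\textbf{Freedman step.} Freedman's inequality with increment bound $b=\alpha_n$ and variance bound $W$ gives
\[
|\sum X|\le\sqrt{2W\ell}+b\ell/3
\]
with probability at least $1-2e^{-\ell}$. Here $\sqrt{2W\ell}=2\sqrt{\alpha_n\ell/[c_\gM(1-\gamma)]}$, which matches the displayed envelope. Choosing $\ell=\log(2|\gS|mT^2/\delta)$ and union-bounding over fewer than $|\gS|mT^2/2$ triples, each costing $2e^{-\ell}$, yields total failure at most $\delta$.

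\textbf{Main obstacle.} The key point is the variance telescoping. It depends on the row weights $u_t$ being monotone in $t$ and bounded by one. Both follow from the nonnegativity of $\bI-\alpha\bG_m$ and from the row-sum identity $\bB_m\bm1=\bD_m\bm1$. A secondary care point is that the variance bound uses $\ind\{t<\sigma\}$ so that Lemma~\ref{lem:variance_drift_matching} is only invoked inside the ball. Once the indicator is zero the increment vanishes, so the bound holds trivially there.
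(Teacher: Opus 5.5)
Your proposal is correct and follows the paper's proof essentially step for step. It uses the same deterministic row weights $u_t=\bv_t^\top\bm1$, the drift-matching telescoping $\sum_t u_t(u_t-u_{t-1})\le u_{k-1}\le 1$ (your $u_{r-1}$ is the paper's endpoint $u_r'$), the increment bound $\alpha_n$, and Freedman's inequality with a union bound.

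Two details are harmless but worth stating precisely. First, the coefficient $1/3$ requires the sub-gamma (Bennett-type) form of Freedman's inequality, $\PB(\sum X>\sqrt{2W\ell}+\alpha_n\ell/3)\le e^{-\ell}$; inverting the weaker tail $\exp\{-x^2/(2(W+\alpha_n x/3))\}$ would only give $2/3$. Second, for $n=0$ there are $T(T+1)/2$ pairs rather than fewer than $T^2/2$, but bounding the number of tuples by $|\gS|mT^2$ still keeps the total failure probability at most $\delta$.
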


\begin{proof}[Proof of Lemma~\ref{lem:uniform_linearized_martingale}]
Fix a coordinate index $q\in\gS\times[m]$ and times $r<k$. Let
$\bm e_q$ be the corresponding standard basis vector, and put
\begin{equation*}
 \bv_t^\top\coloneq\bm e_q^\top\bPhi_{t+1,k},\qquad
 u_t\coloneq\bv_t^\top\bm1,\qquad r\le t<k.
\end{equation*}
Here $u_t$ is the total mass of the nonnegative row vector $\bv_t$.
For every $j$, Lemma~\ref{lem:jacobian_summarize} gives
$\bI-\alpha_j\bG_m\ge\bm0$ and
$\norm{\bI-\alpha_j\bG_m}_\infty\le1-\mu\alpha_j\le1$.
It follows that $\bv_t\ge\bm0$ and $0\le u_t\le1$. Define its mass after one additional
linearized step by
\begin{equation*}
 u_t^\prime\coloneq\bv_t^\top(\bI-\alpha_t\bG_m)\bm1.
\end{equation*}
Equation~\eqref{eq:drift_matching} then yields
\begin{equation*}
 u_t-u_t^\prime
 =(1-\gamma)\alpha_t\bv_t^\top\bD_m\bm1\ge0.
\end{equation*}
All propagator factors are polynomials in the same matrix $\bG_m$ and hence
commute.  Consequently,
\begin{equation*}
 u_t^\prime=\bm e_q^\top\bPhi_{t,k}\bm1=u_{t-1}\qquad(r<t<k),
\end{equation*}
with the endpoint identity
$u_r^\prime=\bm e_q^\top\bPhi_{r,k}\bm1$.  Since
$0\le u_t^\prime\le u_t\le1$, we obtain
\begin{equation*}
 \sum_{t=r}^{k-1}u_t(u_t-u_t^\prime)
 \le\sum_{t=r}^{k-1}(u_t-u_t^\prime)
 =u_{k-1}-u_r^\prime\le1.
\end{equation*}
Moreover, $\ind\{t<\sigma\}$ is $\gF_t$-measurable, so stopping preserves
the martingale-difference property.  By
Equation~\eqref{eq:variance_matching} and monotonicity of the step sizes,
\begin{equation*}
\begin{aligned}
&\sum_{t=r}^{k-1}\alpha_t^2
\EB\left[
\bigl(\bv_t^\top\ind\{t<\sigma\}\bxi^{(t+1)}\bigr)^2
\mid\gF_t\right]\\
\leq&\frac{2\alpha_r}{c_\gM(1-\gamma)}
\sum_{t=r}^{k-1}u_t(u_t-u_t^\prime)
\le\frac{2\alpha_r}{c_\gM(1-\gamma)}.
\end{aligned}
\end{equation*}
Moreover,
\begin{equation*}
 \left|\alpha_t\bv_t^\top\ind\{t<\sigma\}\bxi^{(t+1)}\right|
 \le \alpha_t\bv_t^\top\bm1\le\alpha_r.
\end{equation*}
Freedman's inequality and a union bound over at most $|\gS|mT^2$ tuples
$(s,i,r,k)$ complete the proof.
\end{proof}

\begin{lemma}\label{lem:nonlinear_epoch_bootstrap}
Use the local contraction scale $\mu$, quadratic-remainder coefficient $L_h$,
and propagator $\bPhi_{r,k}$ introduced in
Section~\ref{sec:proof_variance_sensitive_local}. Let $n$ be an
$(\gF_t)$-stopping time satisfying
\begin{equation*}
    \norm{\bm\theta^{(n)}-\bm\theta_m}_\infty\le \frac{r_{\mathrm{out}}}{2},
\end{equation*}
and define
\begin{equation*}
    \sigma\coloneq\inf\left\{t\ge n:\norm{\bm\theta^{(t)}-\bm\theta_m}_\infty>r_{\mathrm{out}}\right\},
\end{equation*}
with the convention $\inf\varnothing=\infty$.

Suppose that on an event $\gE$, the propagated linear noise is uniformly
bounded by a deterministic envelope $b$:
\begin{equation*}
    \max_{n\le r<k\le T}\left\|\sum_{t=r}^{k-1}\alpha_t\bPhi_{t+1,k}\ind\{t<\sigma\}\bxi^{(t+1)}\right\|_\infty\le b,
\end{equation*}
and
\begin{equation}\label{eq:abstract_bootstrap_smallness}
    b\le \frac{r_{\mathrm{out}}}{16},\ \frac{L_h r_{\mathrm{out}}}{\mu}\le\frac1{16},\ b\le \frac{\mu}{64L_h},
\end{equation}
where the last bound is interpreted as vacuous when $L_h=0$.
Then on $\gE$, we have $\sigma>T$ and
\begin{equation*}
    \norm{\bm\theta^{(T)}-\bm\theta_m}_\infty
    \le \norm{\bm\theta^{(n)}-\bm\theta_m}_\infty
    \exp\left\{-\frac{15\mu}{16}\sum_{t=n}^{T-1}\alpha_t\right\}
    +b+\frac{32L_h}{15\mu}b^2.
\end{equation*}
\end{lemma}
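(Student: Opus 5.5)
We work on the event $\gE$ throughout and argue by a bootstrap over time. Write $\be^{(t)}=\bm\theta^{(t)}-\bm\theta_m$, $E_n=\norm{\be^{(n)}}_\infty\le r_{\mathrm{out}}/2$, and introduce the deterministic comparison envelope
\[
\rho_k\coloneq E_n\exp\Bigl\{-\tfrac{15\mu}{16}\textstyle\sum_{t=n}^{k-1}\alpha_t\Bigr\}+b+\tfrac{32L_h}{15\mu}b^2 .
\]
The claim is that $\norm{\be^{(k)}}_\infty\le\rho_k$ for every $n\le k\le\min\{\sigma,T\}$. The capture part $\sigma>T$ then follows from two facts. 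First, $\rho_k\le r_{\mathrm{out}}/2+r_{\mathrm{out}}/16+\tfrac{32}{15\cdot 64}\,r_{\mathrm{out}}/16<r_{\mathrm{out}}$, using $b\le r_{\mathrm{out}}/16$ and $L_hb/\mu\le1/64$. Second, if $\sigma\le T$, the bound applied at $k=\sigma$ contradicts $\norm{\be^{(\sigma)}}_\infty>r_{\mathrm{out}}$. Taking $k=T$ gives the stated estimate.

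To prove the claim we use the unrolled identity \eqref{eq:error-recursion}, valid for $n<k\le\sigma$. For $t<k\le\sigma$ the indicator $\ind\{t<\sigma\}$ equals one, so the noise sum is bounded by $b$ on $\gE$. The entrance term is bounded by $\norm{\bPhi_{n,k}}_\infty E_n\le e^{-\mu A_{n,k}}E_n$. For the remainder we use nonnegativity and contraction of the propagators together with $\norm{\bR(\be)}_\infty\le L_h\norm{\be}_\infty^2$, valid since $\norm{\be^{(t)}}_\infty\le r_{\mathrm{out}}$ for $t<\sigma$. Inducting on $k$, this gives
\[
\norm{\be^{(k)}}_\infty\le e^{-\mu A_{n,k}}E_n+b+L_h\sum_{t=n}^{k-1}\alpha_t e^{-\mu A_{t+1,k}}\rho_t^2 .
\]
We then split $\rho_t^2\le 2(E_ne^{-\frac{15\mu}{16}A_{n,t}})^2+2c_b^2$, where $c_b=b+\tfrac{32L_h}{15\mu}b^2\le \tfrac{31}{30}b$.

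The constant part is handled by $\sum_t\alpha_te^{-\mu A_{t+1,k}}\le 1/\mu$, valid up to the factor $(1-\mu\alpha_t)^{-1}$. Using $1-\mu\alpha\le e^{-\mu\alpha}$ directly on the products, together with $\mu\alpha_t\le1$, this is at most about $2/\mu$. It is bounded by $L_h c_b^2\cdot O(1/\mu)$, which is at most $\tfrac{32L_h}{15\mu}b^2$ after using $L_hb/\mu\le1/64$ to absorb cross terms. The transient part is
\[
L_hE_n^2\sum_t\alpha_t e^{-\mu A_{t+1,k}}e^{-\frac{15\mu}{8}A_{n,t}}\le L_hE_n\,r_{\mathrm{out}}\,e^{-\frac{15\mu}{16}A_{n,k}}\sum_t\alpha_t e^{-\frac{\mu}{16}A_{t+1,k}}e^{-\frac{15\mu}{16}A_{n,t}}\lesssim \frac{16L_hr_{\mathrm{out}}}{\mu}\,E_ne^{-\frac{15\mu}{16}A_{n,k}} .
\]
Using $L_hr_{\mathrm{out}}/\mu\le1/16$, this is a fraction of the entrance term. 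The fraction exactly fills the gap between $e^{-\mu A}$ and $e^{-\frac{15}{16}\mu A}$: we write $e^{-\mu A}=e^{-\frac{15\mu}{16}A}e^{-\frac{\mu}{16}A}$ and need $e^{-\frac{\mu}{16}A}+\text{(transient fraction)}\cdot(\ldots)\le1$. This can be arranged by keeping the weight $e^{-\frac{\mu}{16}A_{t+1,k}}$, whose $\alpha$-weighted sum is controlled by $(1-e^{-\frac{\mu}{16}A_{n,k}})\cdot 16/\mu$.

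The main obstacle is this bookkeeping of constants: the transient remainder must be absorbed exactly into the weakened exponent $15/16$, and the stationary remainder must produce the coefficient $32/(15\mu)$. If the clean telescoping does not close with these constants, we would instead run the induction on the sharper hypothesis $\norm{\be^{(t)}}_\infty\le 2\rho_t$. Alternatively, we would bound the remainder using $\norm{\be^{(t)}}_\infty\le r_{\mathrm{out}}$ for the transient piece, so that the remainder becomes a linear perturbation $L_hr_{\mathrm{out}}\norm{\be}_\infty\le\tfrac{\mu}{16}\norm{\be}_\infty$ of the contraction. This reduces the effective rate from $\mu$ to $\tfrac{15}{16}\mu$, as a discrete Grönwall argument on the perturbed linear recursion.
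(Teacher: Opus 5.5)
Your architecture (unrolled identity, noise bounded by $b$ on $\gE$, the quadratic remainder pushed through nonnegative contracting propagators, induction up to $T\wedge\sigma$, first-exit contradiction) is sound, and so is your capture step $\rho_k<r_{\mathrm{out}}$. The gap is the constant bookkeeping you flagged: as proposed, the induction on $\norm{\be^{(k)}}_\infty\le\rho_k$ does not close.

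The failing piece is the stationary part. Your envelope $\rho_t$ already contains the correction, so the remainder is driven by $c_b^2$ with $c_b=b+\frac{32L_h}{15\mu}b^2\le\frac{31}{30}b$. With the split $\rho_t^2\le 2x_t^2+2c_b^2$, where $x_t=E_ne^{-\frac{15\mu}{16}A_{n,t}}$, you are left with $2L_hc_b^2\sum_t\alpha_t\prod_{j=t+1}^{k-1}(1-\mu\alpha_j)=2L_hc_b^2\bigl(1-\prod_{j=n}^{k-1}(1-\mu\alpha_j)\bigr)/\mu$. Once the transient has died out, this tends to $2L_hc_b^2/\mu$. At the admissible value $L_hb/\mu=1/64$ you have $2(31/30)^2=\frac{1922}{900}>\frac{1920}{900}=\frac{32}{15}$. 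So the inductive step returns a bound exceeding $\rho_k$ even with the exact telescoped sum $1/\mu$. With your ``about $2/\mu$'' it is off by a factor of two. Cross terms only enlarge the constant; they cannot be absorbed.

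The fallbacks do not rescue this. Assuming $\norm{\be^{(t)}}_\infty\le2\rho_t$ multiplies the remainder by four. Linearizing $L_h\norm{\be}_\infty^2\le\frac{\mu}{16}\norm{\be}_\infty$ on the whole error feeds the noise level back linearly, so Gr\"onwall gives about $\frac{16}{15}b$ rather than $b+\frac{32L_h}{15\mu}b^2$.

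The missing idea is to let the noise enter the remainder only through $b$, and to linearize only the noise-free part of the error. The paper peels off $\bw^{(k)}=-\sum_{t=n}^{k-1}\alpha_t\bPhi_{t+1,k}\ind\{t<\sigma\}\bxi^{(t+1)}$, with $\norm{\bw^{(k)}}_\infty\le b$ on $\gE$. It then runs a scalar bootstrap on $y_t=\norm{\be^{(t)}-\bw^{(t)}}_\infty$, which satisfies $y_{t+1}\le(1-\mu\alpha_t)y_t+L_h\alpha_t(y_t+b)^2$. Under $y_t\le r_{\mathrm{out}}/2$, use $(y_t+b)^2\le2y_t^2+2b^2$ and $2L_hy_t^2\le L_hr_{\mathrm{out}}y_t\le\frac{\mu}{16}y_t$. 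This gives $y_{t+1}\le(1-\frac{15}{16}\mu\alpha_t)y_t+2L_h\alpha_tb^2$. That recursion preserves $y_t\le r_{\mathrm{out}}/2$, hence $\norm{\be^{(t)}}_\infty\le\frac{9}{16}r_{\mathrm{out}}$ and $\sigma>T$. Since $\sum_t\alpha_t\prod_{\ell>t}(1-\frac{15}{16}\mu\alpha_\ell)\le\frac{16}{15\mu}$, it produces exactly $\frac{32L_h}{15\mu}b^2$.

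Your envelope induction can also be repaired. Write $\rho_t^2=x_t(x_t+2c_b)+c_b^2$ and note $x_t+2c_b\le r_{\mathrm{out}}$. Then the whole cross term joins the transient with coefficient at most $\frac{\mu}{16}$, and the stationary source is only $L_hc_b^2$, giving $L_hc_b^2/\mu\le\frac{32L_h}{15\mu}b^2$.

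The transient must then be absorbed exactly to first order, so replace your continuous-time heuristics with a discrete identity:
\begin{enumerate}
\item Bound the entrance term by $P_{n,k}E_n$, where $P_{r,k}=\prod_{j=r}^{k-1}(1-\mu\alpha_j)$.
\item Use $e^{-x}\ge1-x$ and telescoping to get $\frac{\mu}{16}\sum_{t=n}^{k-1}\alpha_tP_{t+1,k}e^{-\frac{15\mu}{16}A_{n,t}}\le e^{-\frac{15\mu}{16}A_{n,k}}-P_{n,k}$.
\end{enumerate}
As written, your bound of $\sum_t\alpha_te^{-\frac{\mu}{16}A_{t+1,k}}$ by $\frac{16}{\mu}\bigl(1-e^{-\frac{\mu}{16}A_{n,k}}\bigr)$ goes the wrong direction for these right-endpoint weights. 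Also, factoring out $e^{-\frac{15\mu}{16}A_{n,k}}$ drops a factor $e^{\frac{15}{16}\mu\alpha_t}$.
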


\begin{proof}[Proof of Lemma~\ref{lem:nonlinear_epoch_bootstrap}]
First define the auxiliary sequence $\bw^{(k)}$, which contains only the
propagated stochastic perturbations, with the same sign as in the error
recursion. For $n\le k\le T$, let
\begin{equation*}
    \bw^{(n)}\coloneq\bm0,\qquad
    \bw^{(k)}\coloneq-\sum_{t=n}^{k-1}\alpha_t
    \bPhi_{t+1,k}\ind\{t<\sigma\}\bxi^{(t+1)}.
\end{equation*}
Then we know that
\begin{equation*}
    \norm{\bw^{(k)}}_\infty\le b
\end{equation*}
and
\begin{equation*}
    \bw^{(t+1)}=(\bI-\alpha_t\bG_m)\bw^{(t)}
    -\alpha_t\ind\{t<\sigma\}\bxi^{(t+1)}.
\end{equation*}

The residual $y_t\coloneq\norm{\be^{(t)}-\bw^{(t)}}_\infty$ therefore tracks
the deterministic and nonlinear parts of the error.  For
$n\le t<T\wedge\sigma$, subtracting the two recursions gives
$\be^{(t+1)}-\bw^{(t+1)}
=(\bI-\alpha_t\bG_m)(\be^{(t)}-\bw^{(t)})
-\alpha_t\bR(\be^{(t)})$, and hence
\begin{equation}\label{eq:bootstrap_scalar_recursion}
    y_{t+1}
    \le(1-\mu\alpha_t)y_t
    +\alpha_t\norm{\bR(\be^{(t)})}_\infty
    \le(1-\mu\alpha_t)y_t
    +L_h\alpha_t(y_t+b)^2.
\end{equation}

Now we claim that $y_k\le r_{\mathrm{out}}/2$ for every
$n\le k\le T\wedge\sigma$. Suppose $y_t\le r_{\mathrm{out}}/2$. Then
\begin{equation*}
\begin{aligned}
    L_h(y_t+b)^2
    &\le2L_hy_t^2+2L_hb^2\\
    &\le L_hr_{\mathrm{out}}y_t+2L_hb^2
    \le\frac{\mu}{16}y_t+2L_hb^2.
\end{aligned}
\end{equation*}
Moreover, if $L_h>0$, the two bounds on $b$ in
Equation~\eqref{eq:abstract_bootstrap_smallness} imply
\begin{equation*}
    2L_hb^2\le\frac{\mu b}{32}
    \le\frac{\mu r_{\mathrm{out}}}{512},
\end{equation*}
while the same bound is immediate when $L_h=0$. Because
$0\le\mu\alpha_t\le1$, the coefficient
$1-15\mu\alpha_t/16$ is nonnegative. Thus,
Equation~\eqref{eq:bootstrap_scalar_recursion} and $y_t\le r_{\mathrm{out}}/2$ yield
\begin{equation}\label{eq:bootstrap_contractive_recursion}
\begin{aligned}
    y_{t+1}
    &\le\left(1-\frac{15}{16}\mu\alpha_t\right)y_t
    +2L_h\alpha_tb^2\\
    &\le\left(1-\frac{15}{16}\mu\alpha_t\right)
    \frac{r_{\mathrm{out}}}{2}+\frac{\mu\alpha_t r_{\mathrm{out}}}{512}\\
    &=\frac{r_{\mathrm{out}}}{2}
    -\frac{239}{512}\mu\alpha_t r_{\mathrm{out}}
    \le\frac{r_{\mathrm{out}}}{2}.
\end{aligned}
\end{equation}
Because $y_n=\norm{\be^{(n)}}_\infty\le r_{\mathrm{out}}/2$, induction proves the
claim.

Therefore we know that for every $n\le k\le T\wedge\sigma$,
\begin{equation*}
    \norm{\be^{(k)}}_\infty
    \le y_k+\norm{\bw^{(k)}}_\infty
    \le\frac{r_{\mathrm{out}}}{2}+b
    \le\frac{9r_{\mathrm{out}}}{16}<r_{\mathrm{out}}.
\end{equation*}
If $\sigma\le T$, this estimate at $k=\sigma$ contradicts the
definition of $\sigma$ and thus $\sigma>T$. 

Now the first inequality in
Equation~\eqref{eq:bootstrap_contractive_recursion} holds on the whole
interval. Moreover we have
\begin{equation*}
\sum_{t=n}^{T-1}\alpha_t
\prod_{\ell=t+1}^{T-1}
\left(1-\frac{15}{16}\mu\alpha_\ell\right)=
\frac{1-\prod_{t=n}^{T-1}
\left(1-\frac{15}{16}\mu\alpha_t\right)}{(15/16)\mu}
\le\frac{16}{15\mu}.
\end{equation*}
Iterating the recursion of $y_t$ gives
\begin{equation*}
\begin{aligned}
    y_T
    &\le y_n\prod_{t=n}^{T-1}\left(1-\frac{15}{16}\mu\alpha_t\right)+2L_hb^2\sum_{t=n}^{T-1}\alpha_t\prod_{\ell=t+1}^{T-1}\left(1-\frac{15}{16}\mu\alpha_\ell\right)\\
    &\le\norm{\be^{(n)}}_\infty
    \exp\left\{-\frac{15\mu}{16}\sum_{t=n}^{T-1}\alpha_t\right\}
    +\frac{32L_h}{15\mu}b^2.
\end{aligned}
\end{equation*}
Therefore we conclude that
\begin{equation*}
    \norm{\be^{(T)}}_\infty\le y_T+b
    \leq\norm{\be^{(n)}}_\infty
    \exp\left\{-\frac{15\mu}{16}\sum_{t=n}^{T-1}\alpha_t\right\}
    +b+\frac{32L_h}{15\mu}b^2,
\end{equation*}
which completes the proof.
\end{proof}

%% file: tex/D_additional_simulations.tex
\section{Additional Numerical Results}
\label{sec:additional_simulations}

\begin{figure}[H]
  \centering
  \includegraphics[width=\textwidth]{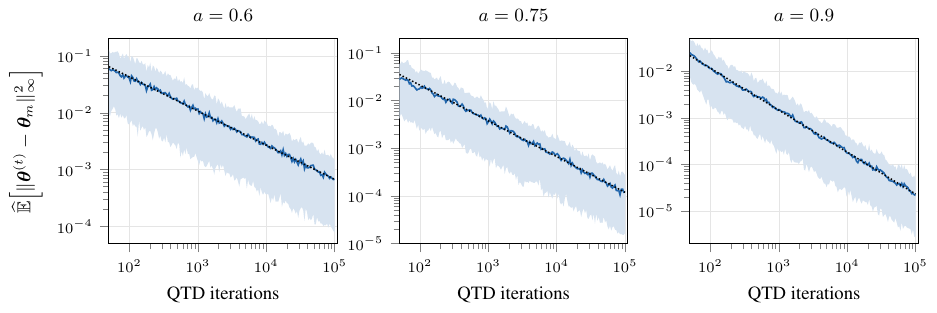}
  \caption{Squared last-iterate error for three polynomial exponents over 200
  independent QTD trajectories.  Blue lines and shading show the mean and
  10th--90th percentiles; dotted lines have slopes fixed at $-a$.}
  \label{fig:qtd_rate_decay_sweep}
\end{figure}

\paragraph{Complete protocol for Figure~\ref{fig:qtd_simulation_summary}.}
Panels~(a)--(b) use a one-state, one-action MDP with deterministic self
transition, $\gamma=0.5$, and independent
$\operatorname{Unif}[0,1]$ rewards across iterations and replications.  We
use $m=7$ quantiles at levels $\tau_i=(2i-1)/14$, initialize all quantile
locations at zero, and run 200 trajectories through $T=10^5$.  The
same reward sample is shared by the seven coordinates within one QTD update,
as specified in Section~\ref{Section:preliminary}.  We compute the benchmark
$\bm\theta_m$ independently by iterating the quantile-projected Bellman
operator and solving each quantile by bisection; its maximum CDF residual is
$3.4\times10^{-16}$.

Panel~(a) uses $\alpha_t=4/(t+20)^{0.75}$.  Its dotted line has slope fixed
at $-0.75$ and only its intercept is fitted over
$t\in[5\times10^3,10^5]$.  Panel~(b) compares
$\alpha_t=0.05$, $4/(t+20)^{0.75}$, and $20/(t+100)$.  The base seed is
20260901, with an offset of 1000003 between schedules; Panel~(a) and the
polynomial curve in Panel~(b) share the same trajectory set.  Curves are
empirical means of the squared sup-norm error at 181 logarithmically spaced
checkpoints.  Panel~(a) additionally shows the empirical 10th--90th
percentile band.

For Panel~(c), define
$\tau=\inf\{t\ge0:\norm{\bm\theta^{(t)}-\bm\theta_m}_\infty
\le r_{\mathrm{out}}/2\}$.  We use the same singleton MDP construction with
one quantile, $\gamma=0.01$, and $\operatorname{Unif}[0,1]$ rewards.  In this
case, $\theta_m=0.5/(1-\gamma)=0.505051$, and we initialize at
$\theta^{(0)}=\theta_m+r_{\mathrm{out}}$.  The constants computed from the
fixed-$m$ refinement in Equation~\eqref{eq:fixed_m_identifiability} are
$c_{\gM,1}=2(1-\gamma)=1.98$, $r_{\mathrm{out}}=0.123762$,
$c_{\mathrm g}=0.007581$, and $\beta=182.503$.  We use
$\alpha=0.75c_{\mathrm g}/\beta=3.116\times10^{-5}$ and seed 20260902.
Among 5,000 independent trajectories, none is censored at horizon
$2\times10^5$: the median entrance time is 22,429, the empirical 90th
percentile is 23,577, and the maximum is 25,945.

For this deterministic initialization,
$D_{0,\beta}=\Phi_\beta(\theta^{(0)}-\theta_m)-r_{\mathrm{out}}/2
=0.061881$.  Panel~(c) plots the conditional bound in
Equation~\eqref{eq:initialization_specific_entrance} with this exact realized
budget, rather than replacing it by the coarser initialization-uniform value
$D_{\mathrm g}=0.952049$.  The empirical survival function is evaluated from
the 5,000 hitting times.  Zero empirical probabilities are shown at
$10^{-4}$ solely for display on the logarithmic axis; the accompanying CSV
retains the exact zero values and the unclipped theoretical probabilities.

\paragraph{Additional polynomial exponents.}
Figure~\ref{fig:qtd_rate_decay_sweep} repeats the fixed-slope diagnostic from
Figure~\ref{fig:qtd_simulation_summary}(a) for
$a\in\{0.6,0.75,0.9\}$.  All other settings are unchanged.  Each dotted
reference has slope fixed at $-a$ and is fitted only in intercept over
$t\in[5\times10^3,10^5]$.  Across all three exponents, the empirical curves
are nearly parallel to their theoretical references after burn-in.